\pgfplotsset{compat=newest}
\title{Compressed Dictionary Learning}
\author{Karin Schnass and Flavio Teixeira
\thanks{Department of Mathematics, University of Innsbruck, Technikerstra\ss e 13, 6020 Innsbruck, Austria. \newline Email: karin.schnass@uibk.ac.at, eng.flavio.teixeira@gmail.com}
}
\newcommand\ip[2]{\langle #1, #2\rangle}
\newcommand\natoms{K}
\newcommand\coherence{\mu}
\newcommand\nsig{N}
\newcommand\sparsity{S}
\newcommand\ddim{d}
\newcommand\trainsignal{\bm{y}}
\newcommand\trainmatrix{\bm{Y}}
\newcommand\ctrainmatrix{\widetilde{\bm{Y}}}
\newcommand\eps{\varepsilon}
\newcommand\targeterror{\tilde{\eps}}
\newcommand\dico{\bm{\Phi}}
\newcommand\atom{\bm{\phi}}
\newcommand\Pert{\Delta}
\newcommand\pdico{\bm{\Psi}}
\newcommand\ppdico{\bar{\bm{\Psi}}}
\newcommand\cdico{\widetilde{\bm{\Psi}}}
\newcommand\patom{\bm{\psi}}
\newcommand\patomn{\bar{\bm{\psi}}}
\newcommand\ppatom{\bar{\bm{\psi}}}
\newcommand\noise{\bm{r}}
\newcommand\nsigma{\rho}
\newcommand\csum{{\beta}}
\newcommand\epsopt{\eps_{\coherence,\nsigma,\jldistorterm}}
\newcommand\jlmatrix{\bm{\Gamma}}
\newcommand\jlmatrixnormfactor{\varrho}
\newcommand\jlprobterm{\eta}
\newcommand\jldistorterm{\delta}
\newcommand\jlapproxmultfactor{\left(1 \pm \jldistorterm \right)}
\newcommand{\rademachermatrix}{\bm{\Pi}}
\newcommand{\permmatrix}{\bm{P}}
\newcommand{\jlnumpoints}{p}
\newcommand{\jlembeddingdim}{m}
\newcommand\fsttestnumber{u}
\newcommand\sndtestnumber{v}
\newcommand\thirdtestnumber{w}
\newcommand\fsttestvector{\bm{u}}
\newcommand\sndtestvector{\bm{v}}
\newcommand\fsttestmatrix{\bm{U}}
\newcommand\ripmatrix{\bm{\Upsilon}}
\newcommand\isometryconst{\upsilon}
\newcommand\orthogonalmatrix{\bm{Q}}
\newcommand\coeffsequence{\bm{c}}
\newcommand\coeffsequenceterm{c}
\newcommand\coeffvector{\bm{x}}
\newcommand\coeffmatrix{\bm{X}}
\newcommand\permmapping{p}
\newcommand\signsequence{\bm{\sigma}}
\newcommand\snrinputsig{C_{\noise}}
\newcommand\statinputone{\gamma_{1,\sparsity}}
\newcommand\statinputotwo{\gamma_{2,\sparsity}}
\newcommand{\dicodecompfstvectorcoord}{\alpha}
\newcommand{\dicodecompsndvectorcoord}{\omega}
\newcommand{\unitatom}{\bm{z}}
\newcommand{\pertdico}{\bm{Z}}
\newcommand{\indexset}{\mathcal{I}}
\newcommand{\indicator}{1}
\newcommand{\threshpdicosupp}{\indexset^{t}_{n}}
\newcommand{\compthreshpdicosuppn}{\indexset^{ct}_{n}}
\newcommand{\compthreshpdicosupp}{\indexset^{ct}}
\newcommand{\oraclesuppn}{\indexset^{o}_n}
\newcommand{\oraclesupp}{\indexset^{o}}
\newcommand{\threshsetatom}{\mathcal{E}_{\jlmatrix \atom}}
\newcommand{\threshsetunitatom}{\mathcal{E}_{\jlmatrix \unitatom}}
\newcommand{\signsetatom}{\mathcal{E}_{\atom}}
\newcommand{\signsetunitatom}{\mathcal{E}_{\unitatom}}
\newcommand{\thresholdbad}{\mathcal{F}_{\jlmatrix}}
\newcommand{\jlgood}{\mathcal{G}_{\jldistorterm}}
\newcommand{\paramprobestimate}{\tau}
\newcommand\frameupbound{B}
\newcommand\mask{\bm{M}}
\newcommand\signop{\operatorname{sign}}
\newcommand\projop{\operatorname{P}}
\newcommand{\R}{{\mathbb{R}}}
\newcommand{\E}{{\mathbb{E}}}
\newcommand{\I}{{\mathbb{I}}}
\newcommand{\K}{\mathbb{K}}
\renewcommand{\P}{{\mathbb{P}}}
\theoremstyle{plain}
\newtheorem{theorem}{Theorem}[section]
\newtheorem{lemma}[theorem]{Lemma}
\newtheorem{example}{Example}[section]
\newtheorem{remark}[example]{Remark}
\newlength\figureheight 
\newlength\figurewidth 
\newcommand{\defaultheight}{4cm}
\newcommand{\defaultwidth}{6cm}
\DeclarePairedDelimiter\ceil{\lceil}{\rceil}
\acrodef{itkm}[ITKM]{Iterative Thresholding and K-Means}
\acrodef{itkrm}[ITKrM]{Iterative Thresholding and K-residual-Means}
\acrodef{ictkm}[IcTKM]{Iterative compressed-Thresholding and K-Means}
\acrodef{jl}[JL]{Johnson-Lindenstrauss}
\acrodef{rip}[RIP]{Restricted Isometry Property}
\acrodef{jlt}[JLT]{Johnson-Lindenstrauss Transform}
\acrodef{ct}[CT]{Compressed Thresholding}
\acrodef{dct}[DCT]{Discrete-Cosine Transform}
\acrodef{dft}[DFT]{Discrete-Fourier Transform}
\acrodef{fft}[FFT]{Fast-Fourier Transform}
\acrodef{crt}[CRT]{Circulant-Rademacher Transform}
\acrodef{cgt}[CRT]{Circulant-Gaussian Transform}
\acrodef{snr}[SNR]{Signal-to-Noise Ratio} 
\acrodef{cpu}[CPU]{Central Processing Unit} 
\acrodef{gpu}[GPU]{Graphics Processing Unit} 
\acrodef{rwc}[RWC]{Real World Computing}
\acrodef{nmf}[NMF]{Non-Negative Matrix Factorization}
\acrodef{mdl}[MDL]{Masked Dictionary Learning}
\crefname{secinapp}{appendix}{appendices}
\Crefname{secinapp}{Appendix}{Appendices}
\crefname{algocf}{alg.}{algs.}
\Crefname{algocf}{Algorithm}{Algorithms}
\newcommand{\strong}[1]{\@strong{#1}}
\newcommand{\@@strong}[1]{\textit{\textbf{\let\@strong\@@@strong#1}}}
\newcommand{\@@@strong}[1]{\textnormal{\let\@strong\@@strong#1}}
\let\@strong\@@strong
\begin{document}

\maketitle

%%%%%%%%%%%%%%%%%%%%%%%%%%%%%%%%%%%%%%%%%%%%%%%%%%%%
%%%%%%%%%%%%%%%%%%%%%ABSTRACT%%%%%%%%%%%%%%%%%%%%%%%
%%%%%%%%%%%%%%%%%%%%%%%%%%%%%%%%%%%%%%%%%%%%%%%%%%%%
\begin{abstract}
In this paper we show that the computational complexity of the \ac{itkrm} algorithm for dictionary learning can be significantly reduced by using dimensionality-reduction techniques based on the Johnson-Lindenstrauss lemma. The dimensionality reduction is efficiently carried out with the fast Fourier transform. We introduce the \ac{ictkm} algorithm for fast dictionary learning and study its convergence properties. We show that \ac{ictkm} can locally recover an incoherent, overcomplete generating dictionary of $\natoms$ atoms from training signals of sparsity level $\sparsity$ with high probability. Fast dictionary learning is achieved by embedding the training data and the dictionary into $\jlembeddingdim < \ddim$ dimensions, and recovery is shown to be locally stable with an embedding dimension which scales as low as $m = O(\sparsity \log^4\sparsity \log^3 \natoms)$. The compression effectively shatters the data dimension bottleneck in the computational cost of \ac{itkrm}, reducing it by a factor $O(m/d)$.
Our theoretical results are complemented with numerical simulations which demonstrate that \ac{ictkm} is a powerful, low-cost algorithm for learning dictionaries from high-dimensional data sets.
 \end{abstract}
 
\begin{keywords}
\noindent dictionary learning, sparse coding, matrix factorization, compressed sensing, Johnson-Lindenstrauss lemma, dimensionality reduction, FFT based low-distortion embeddings, fast algorithms
\end{keywords}

%%%%%%%%%%%%%%%%%%%%%%%%%%%%%%%%%%%%%%%%%%%%%%%%%%%%
%%%%%%%%%%%%%%%%%%%%%INTRODUCTION%%%%%%%%%%%%%%%%%%%
%%%%%%%%%%%%%%%%%%%%%%%%%%%%%%%%%%%%%%%%%%%%%%%%%%%%
\section{Introduction}\label{sec:intro}

%%%% Organization

Low complexity models of high-dimensional data lie at the core of many efficient solutions in modern signal processing. One such model is that of \emph{sparsity} in a dictionary, where every signal in the data class at hand has a sparse expansion in a predefined basis or frame. In mathematical terms we say that there exists a set of $\natoms$ unit-norm vectors $\atom_k \in \mathbb{R}^{\ddim}$ referred to as \emph{atoms}, such that every signal $\trainsignal \in \mathbb{R}^{\ddim}$ can be approximately represented in the \emph{dictionary} $\dico = (\atom_1, \ldots, \atom_\natoms)$ as $\trainsignal \approx \sum_{i \in \indexset} \coeffvector(i) \atom_i,$ where $\indexset$ is an index set and $\coeffvector \in \R^{\natoms}$ is a sparse coefficient vector with $|\indexset| = \sparsity$ and $\sparsity \ll \ddim$. 

A fundamental question associated with the sparse model is how to find a suitable dictionary providing sparse representations. When taking a learning rather than a design approach this problem is known as \emph{dictionary learning} or sparse component analysis. In its most general form, dictionary learning can be seen as a matrix factorization problem. Given a set of $\nsig$ signals represented by the $\ddim \times \nsig$ data matrix $\trainmatrix = (\trainsignal_1,\ldots,\trainsignal_\nsig)$, decompose it into a $\ddim \times \natoms$ dictionary matrix $\dico$ and a $\natoms \times \nsig$ coefficient matrix $\coeffmatrix = (\coeffvector_1, \ldots, \coeffvector_{\nsig})$; in other words, find $\trainmatrix =\dico \coeffmatrix$ where every coefficient vector $\coeffvector_k$ is sparse. Since the seminal paper by Olshausen and Field, \cite{olsfield96}, a plethora of dictionary learning algorithms have emerged, see \cite{ahelbr06, enaahu99, kreutz03, lese00, mabaposa10, sken10, mabapo12, memathva16}, and also theory on the problem has become available, \cite{grsc10, spwawr12, argemo13, aganjaneta13, sc14, sc14b, bagrje14, bakest14, argemamo15, sc15, suquwr17a, suquwr17b}. For an introduction to the origins of dictionary learning including well-known algorithms see \cite{rubrel10}, while pointers to the main theoretical results can be found in \cite{sc15imn}.

Fast dictionary learning algorithms that are applicable to large-scale problems have been proposed in \cite{memathva18,memathva16,argemamo15,sc15,mabaposa10}. In large-scale problems we are typically required to learn dictionaries from a massive number of training signals, potentially in high dimensions. When $\nsig$ is large, fast learning algorithms avoid operating explicitly with the data matrix $\trainmatrix$ by following an \emph{online-learning model}; in other words, they process the data set $\trainmatrix$ column-by-column in a serial fashion without having the entire matrix $\trainmatrix$ available from the start of the learning task. When $\ddim$ is large, the computational cost of processing a single high-dimensional signal can get prohibitively expensive, even in an online-learning model. Random subsampling has been proposed in \cite{memathva16,memathva18} to deal with the increase in computational cost of dictionary learning in high dimensions. In random-subsampling-based learning algorithms, the computational cost of processing high-dimensional data is reduced by applying a random mask to each column of the data matrix $\trainmatrix$, such that a random subset of the entries in the column is observed instead of all the entries. Subsampling schemes require the target dictionary to be incoherent with the Dirac basis. Further, despite the emergence of practical learning algorithms that are applicable to large-scale problems, so far there exist no identification results for fast dictionary learning algorithms which scale well both in the large $\ddim$ and $\nsig$ parameter regime.

Here we take a step towards increasing the computational efficiency of dictionary learning for high-dimensional training data. We introduce an online learning algorithm that scales well with both $\ddim$ and $\nsig$, and can identify any dictionary with high probability, not only those incoherent with the Dirac basis. On top of that, we provide a theoretical analysis of its local convergence behavior. We will use the \ac{itkrm} algorithm introduced in \cite{sc15} as our starting point. For a target recovery error $\targeterror$ and noisy signals with sparsity levels $\sparsity \leq O(\ddim/(\log (\natoms^2/\targeterror) )$ and \ac{snr} of order $O(1)$, it has been shown that except with probability $O(\jlprobterm \log(1/\targeterror))$ \ac{itkrm} will recover the generating dictionary up to $\targeterror$ from any input dictionary within radius $O(1/\sqrt{\max \{ \sparsity, \log \natoms \}})$ in $O(\log(1/\targeterror))$ iterations, as long as in each iteration a new batch of $\nsig = O(\targeterror^{-2}\natoms \log(\natoms/\jlprobterm))$ training signals is used. In particular, for sparsity levels $\sparsity \leq \natoms^{2/3}$ the computational cost of dictionary recovery with \ac{itkrm} scales as $O(\ddim \natoms \nsig \log(\targeterror^{-1}))$.
 
\textbf{Main Contribution.} In this work we show that the signal dimension bottleneck in the computational cost of \ac{itkrm} can be shattered by using randomized dimensionality reduction techniques. Our main technical tool is a result due to Johnson and Lindenstrauss in \cite{joli84}, which shows that it is possible to map a fixed set of data points in a high-dimensional space to a space with lower dimension with high probability, while preserving the pairwise distances between the points up to a prescribed distortion level. We introduce the \ac{ictkm} algorithm for fast dictionary learning, where we exploit recent constructions of the Johnson-Lindenstrauss embeddings using Fourier or circulant matrices, \cite{aili08,aich09,krwa11,hare17,krmera14}, to efficiently embed both the training signals and the current dictionary. The inner products between embedded signals and atoms are then used as approximations to the exact inner products in the thresholding step.\\
We study the local convergence behavior of \ac{ictkm} and prove a similar result as for \ac{itkrm}.
For a target recovery error $\targeterror$ and noisy signals with sparsity levels $\sparsity \leq O(\ddim/(\log (\natoms^2/\targeterror) )$ and \ac{snr} of order $O(1)$, we will show that except with probability $O(\jlprobterm \log(1/\targeterror))$ \ac{ictkm} will recover the generating dictionary up to $\targeterror$ from any input dictionary within radius $O(1/\sqrt{\max \{ \sparsity, \log \natoms \}})$ in $O(\log(1/\targeterror))$ iterations, as long as in each iteration a new batch of $\nsig = O(\targeterror^{-2}\natoms \log(\natoms/\jlprobterm))$ training signals is used and a new embedding is drawn with the embedding dimension $\jlembeddingdim$ satisfying $\jlembeddingdim\geq O(\sparsity \log^4(\sparsity) \log^3(\natoms/\jlprobterm))$. 
This means that the computational cost of dictionary recovery with \ac{ictkm} can be as small as $O(\sparsity \natoms \nsig \log(\targeterror^{-1})\log^4(\sparsity) \log^3(\natoms/\jlprobterm))$. \\
We further show with several numerical experiments on synthetic and audio data that \ac{ictkm} is a powerful, low-cost algorithm for learning dictionaries from high-dimensional data sets.

\textbf{Outline.} The paper is organized as follows. In \Cref{sec:notation_preliminaries} we introduce notation and define the sparse signal model used to derive our convergence results. The proposed algorithm is presented in \Cref{sec:ictkm}, where its convergence properties and computational complexity are studied in detail. In \Cref{sec:simulations} we present numerical simulations on synthetic and audio training data to illustrate the ability of our proposed algorithm for learning dictionaries with fairly low computational cost on realistic, high-dimensional data sets. Lastly, in \Cref{sec:conclusions} we present conclusions and discuss possible future research directions.

%%%%%%%%%%%%%%%%%%%%%%%%%%%%%%%%%%%%%%%%%%%%%%%%%%%%
%%%%%%%%%%%%%%%%%%%%%NOTATIONS%%%%%%%%%%%%%%%%%%%%%%
%%%%%%%%%%%%%%%%%%%%%%%%%%%%%%%%%%%%%%%%%%%%%%%%%%%%
\section{Notation and Signal Model}\label{sec:notation_preliminaries} 

Before we hit the strings, we will fine tune the notation and introduce some definitions. Regular letters will denote numbers as in $\fsttestnumber \in \R$. For real numbers $\fsttestnumber, \sndtestnumber, \thirdtestnumber \geq 0$, we use the notation $\fsttestnumber \lessgtr \left(1 \pm \thirdtestnumber \right)\sndtestnumber$ to convey that $\fsttestnumber \in \left[ (1-\thirdtestnumber)v, (1+\thirdtestnumber)\sndtestnumber \right]$. Lower-case bold letters denote vectors while upper-case bold letters are reserved for matrices, e.g., $\fsttestvector \in \R^{\ddim}$ vs. $\fsttestmatrix \in \R^{\ddim \times \ddim}$. For a vector $\fsttestvector$ we use $\fsttestvector(k)$ to denote its $k$th coordinate. The supremum norm of $\fsttestvector$ is defined by $\left\| \fsttestvector \right\|_{\infty} = \max_{k} | \fsttestvector(k) | $. For a matrix $\fsttestmatrix$, we denote its conjugate transpose by $\fsttestmatrix^*$ and its Moore-Penrose pseudo-inverse by $\fsttestmatrix^\dagger$. The operator norm of $\fsttestmatrix$ is defined by $\left\| \fsttestmatrix \right\|_{2,2} = \max_{\left\|\sndtestvector \right\|_2 = 1} \left\| \fsttestmatrix \sndtestvector \right\|_2$. We say that a matrix $\fsttestmatrix \in \R^{\jlembeddingdim \times \ddim}$ with $\jlembeddingdim < \ddim$ has the \ac{rip} of order $k$ and level $\isometryconst$, or the shorthand $(k,\isometryconst)$-\ac{rip}, if for all $k$-sparse vectors $\sndtestvector \in \R^{\ddim}$ we have $\left\| \fsttestmatrix \sndtestvector \right\|_2^2 \lessgtr (1 \pm \isometryconst) \left\| \sndtestvector \right\|_2^2$, see \cite{cata05} for details. Upper-case calligraphy letters will denote sets; specifically, we let $\indexset$ denote an index set and use the notation $\fsttestmatrix_{\indexset}$ to convey the restriction of matrix $\fsttestmatrix$ to the columns indexed by $\indexset$, e.g., $\fsttestmatrix_{\indexset} = (\fsttestvector_{i_1}, \fsttestvector_{i_2}, \ldots, \fsttestvector_{i_n})$ with $i_{j} \in \indexset$ and $|\indexset| = n$ for some integer $n$. Further, we denote by $\projop(\fsttestmatrix_\indexset)$ the orthogonal projection onto the span of the columns indexed by $\indexset$, i.e., $\projop(\fsttestmatrix_\indexset) = \fsttestmatrix_\indexset \fsttestmatrix_\indexset^\dagger$. For a set $\mathcal{V}$, we refer to $\indicator_{\mathcal{V}}(\cdot)$ as its indicator function, such that $\indicator_{\mathcal{V}}(\sndtestnumber)$ is equal to one if $\sndtestnumber \in \mathcal{V}$ and zero otherwise.

The $\ddim \times \natoms$ matrix $\dico = (\atom_1, \atom_2, \ldots, \atom_\natoms)$ denotes the \emph{generating dictionary}, which we define as a collection of $\natoms$ unit-norm vectors $\atom_k \in \R^{\ddim}$, also referred to as \emph{atoms}, and $\ddim$ is the \emph{ambient} dimension. The rank of $\dico$ is called the \emph{intrinsic} dimension $\tilde{\ddim}$ of the dictionary, and we have $\tilde{\ddim} \leq K$. Since we are dealing with high dimensional data, we will not only consider the usual case of overcomplete dictionaries with $d < K$ but, especially in the numerical simulations, also the undercomplete dictionaries with $K < d$. The maximal inner-product (in magnitude) between two different atoms of the generating dictionary is called the \emph{coherence} $\coherence := \max_{k \neq j} | \langle \atom_k, \atom_j \rangle |$. The dictionary $\dico$ will be used to generate our training signals as follows
\begin{equation}
\trainsignal = \frac{\dico \coeffvector + \noise}{\sqrt{1+ \left\| \noise \right\|_2^2}},
\label{eq:signal_model}
\end{equation}
where $\coeffvector \in \R^{\natoms}$ is a sparse coefficient vector and $\noise \in \R^{\ddim}$ represents noise. By collecting a set of $\nsig$ training signals $\trainsignal_n \in \R^{\ddim}$ generated as in \labelcref{eq:signal_model}, we form our training data set as $\trainmatrix = (\trainsignal_1, \trainsignal_2, \ldots, \trainsignal_\nsig)$. We refer to the number of training signals $\nsig$ required to recover a dictionary with high probability as the \emph{sample complexity}. 

We will model the sparse coefficient vector $\coeffvector$ as a random permutation of a randomly chosen sequence provided with random signs, where the $\sparsity$-largest entry of the randomly chosen sequence is greater than the $(\sparsity+1)$-largest entry in magnitude. Formally, let $\mathcal C$ denote a subset of all positive non-increasing, unit-norm sequences, i.e., for a $\coeffsequence \in \mathcal{C}$ we have $\coeffsequence(1) \geq \coeffsequence(2) \geq \ldots \coeffsequence(\natoms) \geq 0$ with $\left\| \coeffsequence \right\|_2 = 1$, endowed with a probability measure $\nu_c$. To model $S$-sparsity we assume that almost $\nu_c$-surely we have 
\begin{equation}
\coeffsequence(\sparsity) - \coeffsequence(\sparsity + 1) \geq \csum_{\sparsity} \hspace{2em} \text{and} \hspace{2em} \frac{\coeffsequence(\sparsity) - \coeffsequence(\sparsity+ 1)}{\coeffsequence(1)} \geq \Pert_\sparsity,
\label{eq:relative_absolute_gap_definition}	
\end{equation}
where $\csum_{\sparsity} > 0$ is called the \emph{absolute gap} and $\Pert_\sparsity$ the \emph{relative gap} of our coefficient sequence. To choose the coefficients $\coeffsequence$, we first draw them according to $\nu_c$, then draw a permutation $\permmapping$ as well as a sign sequence $\signsequence \in \left\{ -1, 1 \right\}^{\natoms}$ uniformly at random and set $\coeffvector = \coeffvector_{\coeffsequence,\permmapping,\signsequence}$ in \labelcref{eq:signal_model}, where $\coeffvector_{\coeffsequence,\permmapping,\signsequence}(k) = \signsequence(k) \coeffsequence(\permmapping(k))$. With this notation the signal model then takes the form
\begin{equation}
\trainsignal = \frac{\dico \coeffvector_{\coeffsequence,\permmapping,\signsequence} + \noise}{\sqrt{1 + \left\| \noise \right\|_2^2}}.
\label{eq:training_signal_with_model}	
\end{equation}
For most of our derivations it will suffice to think of the sparse coefficient vector $\coeffvector$ as having exactly $\sparsity$ randomly distributed and signed, equally sized non-zero entries; in other words, $\mathcal C$ contains one sequence $\coeffsequence$ with $\coeffsequence(k) = 1/\sqrt{S}$ for $k\leq S$ and $\coeffsequence(k) = 0$ for $k>S$, so we have $\csum_{\sparsity} = 1/\sqrt{S}$ and $\Pert_\sparsity=1$. In particular, the main theorem of the paper will be a specialization to this particular coefficient distribution, while the more detailed result that addresses general sequence sets $\mathcal C$ is deferred to the appendix. For the numerical simulations we will again use only exactly $\sparsity$-sparse sequences with $\coeffsequence(k) = 0$ for $k>S$. However, for $k\leq S$ they will form a geometric sequence whose exact generation will be discussed in the relevant section.

The noise vector $\noise$ is assumed to be a centered subgaussian vector with parameter $\nsigma$ independent of $x$, that is, $\E(\noise) = 0$, and for all vectors $\fsttestvector \in \R^{d}$ the marginals $\langle \fsttestvector, \noise \rangle$ are subgaussian with parameter $\nsigma$ such that we have $\E(e^{t \langle \fsttestvector, \noise \rangle}) \leq \exp(\tfrac{t^2 \nsigma^2 \| \fsttestvector \|^2}{2})$ for all $t > 0$. Since $\E\bigl(\left\| \noise \right\|_2^2\bigr) \leq \ddim \nsigma^2$, with equality holding in the case of Gaussian noise, and $\E\left(\| \dico \coeffvector_{\coeffsequence,\permmapping,\signsequence} \|_2^2\right) = 1$, we have the following relation between the noise level $\nsigma$ and the \acl{snr}, $\operatorname{SNR} \geq (d\nsigma^2)^{-1}$. 
As before, for most of the paper it suffices to think of $\noise$ as a Gaussian random vector with mean zero and variance $\nsigma^2= 1/(d\cdot\operatorname{SNR})$. 

The scaling factor $(1 + \|\noise\|^2_2)^{-1/2}$ in the signal model might seem strange at first glance, but can be thought of as a first moment approximation to the normalization factor $\|\dico \coeffvector_{\coeffsequence,\permmapping,\signsequence} + \noise\|_2^{-1}$, often used in practice. It allows us
to handle unbounded noise distributions, while still being relatively simple. Note that the main argument used for the result presented here does not depend on the scaling factor. The corresponding lemma as well as the other lemmata from \cite{sc15}, which we require for the proof of our main result, can be straightforwardly modified to handle a signal model without the scaling factor, when additionally assuming boundedness of the noise. Further, with more technical effort, e.g., by using concentration of $\|\dico \coeffvector_{\coeffsequence,\permmapping,\signsequence} + \noise\|_2$ around $\sqrt{1+ \|\noise\|_2}$, the Taylor expansion of the square root, and keeping track of the small distortion coming from higher order moments, it is possible to extend the necessary lemmata in \cite{sc15} to the normalized signal model.

We will refer to any other dictionary $\pdico = (\patom_1, \patom_2, \ldots, \patom_\natoms)$ as our \emph{perturbed dictionary}, meaning that it can be decomposed into the generating dictionary $\dico$ and a perturbation dictionary $\pertdico = (\unitatom_1, \unitatom_2, \ldots, \unitatom_\natoms)$. To define this decomposition, we first consider the (asymmetric) distance of $\pdico$ to $\dico$ defined by
\begin{equation}
d(\pdico, \dico) := \max_k \min_j \left\| \atom_k \pm \patom_j \right\|_2 = \max_k \min_j \sqrt{2 - 2 \left| \langle \atom_k, \patom_j \rangle \right|}.
\label{eq:distance_dictionary_definition}	
\end{equation}
Although $d(\pdico, \dico)$ is not a metric, a locally equivalent (symmetric) version may be defined in terms of the maximal distance between two corresponding atoms, see \cite{sc15} for details. Since the asymmetric distance is easier to calculate and our results are local, we will refer to distances  between dictionaries in terms of \labelcref{eq:distance_dictionary_definition} and assume that $\pdico$ is already signed and rearranged in a way that $d(\pdico, \dico) = \max_k \left\| \atom_k - \patom_k \right\|_2$. With this distance in hand, let $\left\| \patom_k - \atom_k \right\|_2 = \eps_k$ and $d(\pdico, \dico) = \eps$, where $\max_k \eps_k = \eps$  by definition. We can write our perturbed dictionary by finding unit vectors $\unitatom_k$ with $\langle \atom_k, \unitatom_k \rangle = 0$ such that we have the decomposition
\begin{equation}
\patom_k = \dicodecompfstvectorcoord_k \atom_k + \dicodecompsndvectorcoord_k \unitatom_k, \hspace{1em} \text{for} \hspace{1em} \dicodecompfstvectorcoord_k = : 1 - \eps_k^2/2 \hspace{1em} \text{and} \hspace{1em} \dicodecompsndvectorcoord_k := \sqrt{\eps_k^2 - \eps_k^4/4}.
\label{eq:perturbed_dico_definition}	
\end{equation}
Lastly, we use the Landau symbol $O(f)$ to describe the growth of a function $f$. We have $f(t) = O(g(t))$ if $\lim_{t\rightarrow 0/\infty}{f(t)/g(t)} = C < \infty$, where $C>0$ is a constant. 
%%%%%%%%%%%%%%%%%%%%%%%%%%%%%%%%%%%%%%%%%%%%%%%%%%%%
%%%%%%%%%%%%%%%%%%%%%ICTKM ALGORITHM%%%%%%%%%%%%%%%%
%%%%%%%%%%%%%%%%%%%%%%%%%%%%%%%%%%%%%%%%%%%%%%%%%%%%

\section{Fast Dictionary Learning via \acs{ictkm}} \label{sec:ictkm}

The \ac{itkrm} algorithm is an alternating-minimization algorithm for dictionary learning which can also be interpreted as a fixed-point iteration. \ac{itkrm} alternates between $2$ steps: $(1)$ updating the sparse coefficients based on the current version of the dictionary, and $(2)$ updating the dictionary based on the current version of the coefficients. The sparse coefficients update is achieved via \emph{thresholding}, which computes the sparse support $\threshpdicosupp$ of each point $\trainsignal_n$ in the data set $\trainmatrix$ by finding the $\sparsity$-largest inner products (in magnitude) between the atoms of a perturbed dictionary and the data point as follows
\begin{equation}
\threshpdicosupp  := \arg \max_{\left| \indexset \right| = \sparsity} { \left\| \pdico_{\indexset}^{*} \trainsignal_n \right\|_1 } = \arg \max_{|\indexset|=\sparsity}{\sum_{k \in \indexset}{| \langle \patom_k, \trainsignal_n \rangle |}}.
\label{eq:thresholding}	
\end{equation}
The dictionary update, on the other hand, is achieved by computing $\natoms$ \emph{residual means} given by
\begin{equation}
\ppatom_k = \frac{1}{\nsig} \sum_{n}{\signop{(\langle \patom_k, \trainsignal_n \rangle)} \cdot   \indicator_{\threshpdicosupp}(k)     \cdot \left( \trainsignal_n - P(\pdico_{\threshpdicosupp}) \trainsignal_n + P ( \patom_k ) \trainsignal_n \right) }.
\label{eq:k_residual_means_definition}
\end{equation}
The two most computationally expensive operations in the \ac{itkrm} algorithm are the computation of the sparse support $\threshpdicosupp$ and the projection $P(\pdico_{\threshpdicosupp}) \trainsignal_n$. If we consider one pass of the algorithm on the data set $\trainmatrix$, then finding the sparse support of $\nsig$ signals via thresholding entails the calculation of the matrix-product $\pdico^{*}\trainmatrix$ of cost $O(\ddim \natoms \nsig)$. To compute the $\nsig$ projections, on the other hand, we can use the eigenvalue decomposition of $\pdico^{*}_{\threshpdicosupp} \pdico_{\threshpdicosupp}$ with total cost $O(\sparsity^3 \nsig)$. Stable dictionary recovery with \ac{itkrm} can be achieved for sparsity levels up to $\sparsity = O(\coherence^{-2}/\log\natoms) \approx O (\ddim / \log \natoms)$, see \cite{sc15} for details, but in practice recovery is carried out with much lower sparsity levels where thresholding becomes the determining complexity factor. Conversely, the projections would dominate the computations only for impractical sparsity levels $\sparsity \geq  \natoms^{2/3}$. We will concentrate our efforts on the common parameter regime where thresholding is the computational bottleneck for stable dictionary recovery.

Although the cost $O(\ddim \natoms \nsig)$ incurred by thresholding $\nsig$ signals is quite low compared to the computational cost incurred by other popular algorithms such as the K-SVD algorithm \cite{ahelbr06}, learning dictionaries can still be prohibitively expensive from a computational point of view when the ambient dimension is large. Our goal here is to shatter the ambient dimension bottleneck in thresholding by focusing on dimensionality-reduction techniques, which will allow us to address real-world scenarios that require handling high-dimensional data in the learning process. 

\subsection{Speeding-up Dictionary Learning}\label{sec:comp_dico_learn}

Our main technical tool for speeding-up \ac{itkrm} is a \emph{dimensionality reduction} result due to Johnson and Lindenstrauss \cite{joli84}. This key result tells us that it is possible to embed a finite number of points from a high-dimensional space into a lower-dimensional one, while preserving the relative distances between any two of these points by a constant distortion factor. Say we want to embed a set $\mathcal{X} \in \R^{\ddim}$ of $|\mathcal{X}| = \jlnumpoints$ points into $\jlembeddingdim < \ddim$, where $\jlembeddingdim$ is the embedding dimension. By Lemma 4 in \cite{joli84}, there exists a \ac{jl} mapping $\jlmatrix : \R^{\ddim} \rightarrow \R^{\jlembeddingdim}$ with $\jlembeddingdim \geq O \left( \delta^{-2} \log \jlnumpoints \right)$, where $\delta \in (0, 1/2)$ is the embedding distortion, such that 
\begin{equation}
\left\| \jlmatrix(\fsttestvector)-\jlmatrix(\sndtestvector) \right\|_2^2 \lessgtr \jlapproxmultfactor \left\| \fsttestvector-\sndtestvector \right\|_2^2, \qquad \forall \fsttestvector,\sndtestvector \in \mathcal{X}.
\label{eq:jl_lemma}
\end{equation}
Further, we know from \cite{aili08,aich09,krwa11,hare17,krmera14} that the \ac{jl} mapping $\jlmatrix : \R^{\ddim} \rightarrow \R^{\jlembeddingdim}$ in \labelcref{eq:jl_lemma} can be realized with probabilistic matrix constructions where the embedding dimension is on par with the bound $\jlembeddingdim \geq O \left( \delta^{-2} \log \jlnumpoints \right)$ up to logarithmic factors, and fast algorithms for matrix-vector multiplication can be used to reduce the computational cost to embed the points. A precise definition of random matrices with these nice dimensionality-reduction properties is now given.
\begin{theorem}[Fast \ac{jl}-Embeddings \cite{krwa11}] \label{def:fast_jl_embedding}
Let $\jlmatrix \in \mathbb{R}^{\jlembeddingdim \times \ddim}$ be of the form $\jlmatrix = \jlmatrixnormfactor \ripmatrix \rademachermatrix$, where $\jlmatrixnormfactor = \sqrt{\ddim/\jlembeddingdim}$ is a normalization factor, $\rademachermatrix \in \mathbb{R}^{\ddim \times \ddim}$ is a diagonal matrix with entries uniformly distributed on $\left\{ -1, 1 \right\}^\ddim$, and $\ripmatrix \in \R^{\jlembeddingdim \times \ddim}$ is obtained by drawing $\jlembeddingdim$ rows uniformly at random from a $\ddim \times \ddim$ orthogonal matrix, e.g., discrete Fourier/cosine unitary matrices, or a $\ddim \times \ddim$ circulant matrix in which the first row is a Rademacher random vector multiplied by a $1/\sqrt{\ddim}$ normalization factor, and the subsequent rows are cyclic permutations of this vector. If ${\jlembeddingdim \geq O\bigl( \jldistorterm^{-2} \cdot \log^2(\jldistorterm^{-1}) \cdot \log(\jlnumpoints/\jlprobterm) \cdot \log^2 \bigl( \jldistorterm^{-1} \log(\jlnumpoints/\jlprobterm) \bigr) \cdot \log {\ddim} \bigr)}$ or ${\jlembeddingdim \geq O\bigl( \jldistorterm^{-2} \cdot \log(\jlnumpoints/\jlprobterm) \cdot \log^2\bigl(\log(\jlnumpoints/\jlprobterm)\bigr) \cdot \log^2{\ddim} \bigr)}$ for $\ripmatrix$ obtained from orthogonal or circulant matrices, respectively, then \labelcref{eq:jl_lemma} holds with probability exceeding $(1 - \jlprobterm)$.
\begin{proof} \normalfont The proof is a direct consequence of Theorem 3.1 in \cite{krwa11}. Since $\ripmatrix$ has the $(k,\isometryconst)$-\ac{rip} with high probability when drawing $\jlembeddingdim$ rows at random from a Fourier/cosine matrix for $\jlembeddingdim \geq O( \log^2(1/\isometryconst) \cdot \isometryconst^{-2} \cdot k \cdot \log^2(k/\isometryconst) \cdot \log{\ddim})$ (see Theorem 4.5 in \cite{hare17}), and a circulant matrix for  $\jlembeddingdim \geq O( \isometryconst^{-2} \cdot k \cdot \log^2{k} \cdot \log^2{\ddim} )$ (see Theorem 1.1 in \cite{krmera14}), it then follows from Theorem 3.1 in \cite{krwa11} that the \ac{jl} property in \labelcref{eq:jl_lemma} holds for $\jlmatrix$ with probability exceeding $(1-\jlprobterm)$ for ${\jlembeddingdim \geq O\bigl( \jldistorterm^{-2} \cdot \log^2(\jldistorterm^{-1}) \cdot \log(\jlnumpoints/\jlprobterm) \cdot \log^2 \bigl( \jldistorterm^{-1} \log(\jlnumpoints/\jlprobterm) \bigr) \cdot \log {\ddim} \bigr)}$ in the orthogonal case, and ${\jlembeddingdim \geq O\bigl( \jldistorterm^{-2} \cdot \log(\jlnumpoints/\jlprobterm) \cdot \log^2\bigl(\log(\jlnumpoints/\jlprobterm)\bigr) \cdot \log^2{\ddim} \bigr)}$ in the circulant case.
\end{proof}
\end{theorem}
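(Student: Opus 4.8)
The plan is to follow the standard two-stage route from the restricted isometry property to Johnson--Lindenstrauss embeddings, due to Krahmer and Ward \cite{krwa11}, instantiated with the sharpest available RIP bounds for subsampled orthonormal and partial circulant matrices. The first stage is the transfer principle (Theorem~3.1 in \cite{krwa11}): if a matrix $\ripmatrix \in \R^{\jlembeddingdim \times \ddim}$ has the $(k,\isometryconst)$-RIP with order $k \gtrsim \log(\jlnumpoints/\jlprobterm)$ and level $\isometryconst \lesssim \jldistorterm$, then for a random diagonal sign matrix $\rademachermatrix$ the product $\ripmatrix\rademachermatrix$ preserves the squared $\ell_2$ norms of an arbitrary fixed set of $\jlnumpoints$ vectors to within a factor $(1 \pm \jldistorterm)$ with probability at least $1-\jlprobterm$ over the draw of $\rademachermatrix$. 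Applying this to the $\binom{\jlnumpoints}{2} < \jlnumpoints^2$ difference vectors $\fsttestvector-\sndtestvector$ with $\fsttestvector,\sndtestvector \in \mathcal X$ --- which only inflates the logarithmic factor by a constant --- yields exactly \labelcref{eq:jl_lemma}. The normalization $\jlmatrixnormfactor = \sqrt{\ddim/\jlembeddingdim}$ is precisely the rescaling that turns a row-subsampled (or partial-circulant) matrix into an approximate isometry in expectation, so after rescaling the RIP, and hence the JL guarantee, takes the form stated.

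The second stage is to plug in known RIP bounds for the two ensembles. When $\ripmatrix$ consists of $\jlembeddingdim$ rows drawn uniformly from a discrete Fourier or cosine unitary matrix --- a bounded orthonormal system with constant $1$ --- the Haviv--Regev bound (Theorem~4.5 in \cite{hare17}) gives the $(k,\isometryconst)$-RIP with high probability once $\jlembeddingdim \gtrsim \isometryconst^{-2} k \log^2(k/\isometryconst)\log\ddim$; when $\ripmatrix$ is a partial circulant matrix generated by a Rademacher vector, the Krahmer--Mendelson--Rauhut bound (Theorem~1.1 in \cite{krmera14}) gives it once $\jlembeddingdim \gtrsim \isometryconst^{-2} k \log^2 k \log^2\ddim$. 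Substituting $k = \Theta(\log(\jlnumpoints/\jlprobterm))$ and $\isometryconst = \Theta(\jldistorterm)$ and absorbing absolute constants transforms these two conditions into the two displayed lower bounds on $\jlembeddingdim$ in the statement.

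Finally I would carry out the probability bookkeeping: for a suitable constant in the dimension bound the RIP event fails with probability at most $\jlprobterm/2$, and conditioned on it the sign randomization fails with probability at most $\jlprobterm/2$, so a union bound gives overall success exceeding $1-\jlprobterm$. The only points requiring real care --- and where I expect the bulk of the work to sit --- are (i) matching the constants linking the RIP order and level to the JL parameters $(\jldistorterm,\jlnumpoints,\jlprobterm)$ exactly as \cite{krwa11} demands, and (ii) keeping the two independent randomness sources (the random row subset / circulant generator versus the diagonal signs) cleanly separated so that the union bound is legitimate. Since no new ideas beyond \cite{krwa11,hare17,krmera14} are needed, the proof is essentially an assembly of these three results.
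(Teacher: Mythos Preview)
Your approach is essentially identical to the paper's: invoke the Krahmer--Ward RIP-to-JL transfer (Theorem~3.1 in \cite{krwa11}) and plug in the RIP bounds from \cite{hare17} and \cite{krmera14} with $k = \Theta(\log(\jlnumpoints/\jlprobterm))$ and $\isometryconst = \Theta(\jldistorterm)$. The only slip is that the Haviv--Regev bound you quote is missing a factor $\log^2(1/\isometryconst)$ --- the paper uses $\jlembeddingdim \gtrsim \isometryconst^{-2}\log^2(1/\isometryconst)\, k \log^2(k/\isometryconst)\log\ddim$, and without that factor your substitution would not reproduce the $\log^2(\jldistorterm^{-1})$ term in the stated orthogonal-case bound.
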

\begin{remark}[Computational Complexity] \label{rem:jl_computational_complexity}
Note that all fast \ac{jl}-embeddings defined above can be decomposed as $\jlmatrix = \jlmatrixnormfactor \permmatrix_{\indexset} \orthogonalmatrix \rademachermatrix$ with $|\indexset| = m$, where $\permmatrix_{\indexset} \in \R^{m \times \ddim}$ is a projection onto the indices in $\indexset$, and $\orthogonalmatrix \in \R^{\ddim \times \ddim}$ is either an orthogonal or a circulant matrix. From this decomposition, we can see that the embedding cost is dominated by the action of $\orthogonalmatrix$. If $\orthogonalmatrix$ is a circulant, or a Fourier/cosine matrix, then $\orthogonalmatrix$ admits fast matrix-vector multiplication via the \ac{fft} and the cost of embedding a point is of order $O(\ddim \log \ddim)$. Therefore, from now on we will always think of $\jlmatrix$ as being based on one of these three constructions.
\end{remark}

\begin{remark}[Operator Norm] \label{rem:jl_operator_norm} The advantage of an embedding based on an orthogonal $\orthogonalmatrix$, meaning a Fourier or cosine matrix, is that the operator norm of $\jlmatrix$ is bounded by $\jlmatrixnormfactor$ since the operator norms of all three factors, $\permmatrix_{\indexset}, \orthogonalmatrix, \rademachermatrix$ are bounded by one. In the case of a circulant $\orthogonalmatrix$, we have that its singular values correspond to the magnitudes of the \ac{dft} of its first row. The operator norm of $\orthogonalmatrix$ is, therefore, bounded by the supremum norm of the \ac{dft} of a normalized Rademacher vector which concentrates around its expectation of order $O(\sqrt{\log \ddim})$, and so with high probability the operator norm of $\jlmatrix$ will be of order $O(\jlmatrixnormfactor \sqrt{\log \ddim})$. We will see that the operator norm of $\orthogonalmatrix$ directly affects our admissible noise level $\nsigma$. In particular, the circulant matrix construction reduces our admissible noise by a factor of at least $O(\log \ddim)$ compared to the orthogonal construction. For simplicity, we will state and prove only the stronger theoretical results for \ac{jl}-embeddings based on an orthogonal $\orthogonalmatrix$, but will point out which part of the proofs needs to be amended for a circulant $\orthogonalmatrix$.
\end{remark}

\subsection{The Proposed Algorithm}

We can reduce the computational cost of \ac{itkrm} by using fast embedding constructions such as those in \Cref{def:fast_jl_embedding}. Consider updating the sparse coefficients in an alternating minimization algorithm via \emph{compressed-thresholding}, which now computes the sparse support $\compthreshpdicosuppn$ of each point $\trainsignal_n$ in the data set $\trainmatrix$ by finding the $\sparsity$-largest inner products (in magnitude) between the \emph{embedded atoms} of a perturbed dictionary and the \emph{embedded data point} as follows
\begin{equation}
\compthreshpdicosuppn := \arg \max_{|\indexset|=\sparsity}{\left\| \pdico^*_{\indexset} \jlmatrix^{*}\jlmatrix \trainsignal_n \right\|_1} = \arg \max_{|\indexset|=\sparsity}{\sum_{k \in \indexset}{| \langle \jlmatrix \patom_k, \jlmatrix \trainsignal_n \rangle |} }.
\label{eq:compressed_thres_def}	
\end{equation}
By replacing the thresholding operation of the \ac{itkrm} algorithm in \labelcref{eq:thresholding} with its compressed version in  \labelcref{eq:compressed_thres_def}, we arrive at the \ac{ictkm} algorithm, see \Cref{alg:ictkm}.
\begin{algorithm}[htp]
\BlankLine
\caption{\ac{ictkm} (one iteration)} \label{alg:ictkm}
\BlankLine

\textit{JL embedding}: draw the random matrices $\ripmatrix$ and $\rademachermatrix$, and form the dimensionality-reduction matrix $\jlmatrix$ \;

\ForEach{training signal $\trainsignal_n$ in the training set $\trainmatrix$}{
\textit{Compressed Thresholding}: $\compthreshpdicosuppn \leftarrow \arg \max_{|\indexset|=\sparsity}{\left\| \pdico^*_{\indexset} \jlmatrix^{*}\jlmatrix \trainsignal_n \right\|_1}$ \;
}
\ForEach{atom $\ppatom_k$ in $\ppdico$ with $k \in \compthreshpdicosuppn$}{
$\natoms$-\textit{residual means}: $\ppatom_{k} \leftarrow \frac{1}{\nsig} \sum_{n}{\left( \trainsignal_n - P(\pdico_{\compthreshpdicosuppn}) \trainsignal_n + P ( \patom_k ) \trainsignal_n \right) \cdot \signop{(\langle \patom_k, \trainsignal_n \rangle)} }$\;
}
\textit{Normalize the atoms}: $\ppdico \leftarrow \left( \ppatom_{1}/\| \ppatom_{1} \|_2, \dots, \ppatom_{\natoms}/\| \ppatom_{\natoms} \|_2 \right)$\;
\BlankLine

\end{algorithm}

\ac{ictkm} inherits all the nice properties of \ac{itkrm} as far as the implementation of the algorithm is concerned. It can be halted after a fixed number of iterations has been reached, and is suitable for online processing and parallelization. In particular, \Cref{alg:ictkm} may be rearranged in a way that the two inner loops are merged into a single loop that goes through the data set. In this implementation, the sparse support $\compthreshpdicosuppn$ is computed for the signal at hand and all the atoms $\ppatom_k$ for which $k \in \compthreshpdicosuppn$ are then updated as in $\ppatom_{k} \leftarrow \left( \trainsignal_n - P(\pdico_{\compthreshpdicosuppn}) \trainsignal_n + P ( \patom_k ) \trainsignal_n \right) \cdot \signop{(\langle \patom_k, \trainsignal_n \rangle)}$. The algorithm proceeds to the next signal, and the dictionary is normalized once all the signals have been processed. Since each signal can be processed independently, the learning process may be carried out in $\nsig$ independent processing nodes and thus we benefit from massive parallelization. Further, we have fairly low storage complexity requirements in this online implementation. We only need to store $O \left(\ddim (\natoms + \jlembeddingdim) \right)$ values which correspond to the input dictionary $\pdico$, the current version of the updated dictionary $\ppdico$, the \ac{jl} embedding $\jlmatrix$, and the current signal $\trainsignal_n$. Note that it is not necessary to store the data set $\trainmatrix$ in the online implementation, as this would have incurred a large storage overhead of $O(d\nsig)$ values in memory.

%TODO: See if any of below can be used 
% We will see that computing $\cdico^{*}\ctrainmatrix$ with cost $O(\jlembeddingdim \natoms \nsig)$ is the determining factor in the computational complexity of compressed thresholding because $\cdico$ and $\ctrainmatrix$ can be obtained with very-low computational cost using the \ac{fft}. Further, the admissible embedding dimension for stable dictionary recovery with \ac{ictkm} can be made as low as $m = O(\log^5 \ddim)$, omitting additional factors that do not play a significant role. Thus, the computational complexity of dictionary learning is greatly improved by using dimensionality reduction on high-dimensional data because we can achieve a reduction of up to the order $O(\log^5 \ddim/ \ddim)$ in the computational cost of stable dictionary recovery. If we set $m = O(\log^5 \ddim)$ which amounts to a compression ratio of $O( \ddim / \log^5 \ddim ) : 1$ in \labelcref{eq:compressed_thres_def}, omitting additional non-leading factors, then we can shatter the ambient dimension bottleneck and stably reduce the \ac{itkrm} dominant cost of $O(\ddim \natoms \nsig)$ down to $O(\log^5 \ddim\natoms \nsig)$. We will now proceed by stating our main convergence result, which will then allow us to address in details the computational complexity of \ac{ictkm}, and the conditions under which we can carry out dictionary learning with the highest compression ratio.

\subsection{Computational Complexity} \label{subsec:computational_complexity}  
Considering one pass of the \ac{ictkm} algorithm on the data set $\trainmatrix$, it can be seen from \labelcref{eq:compressed_thres_def} that to find the sparse support with compressed thresholding, we first need to compress the dictionary and the data set as in $\cdico = \jlmatrix \pdico$ and $\ctrainmatrix = \jlmatrix \trainmatrix$, respectively, and then compute the matrix product $\cdico^{*}\ctrainmatrix$ to find the $\sparsity$-largest inner products $\langle \jlmatrix \patom_k, \jlmatrix \trainsignal_n \rangle$ in magnitude. We know from the decomposition $\jlmatrix = \jlmatrixnormfactor \permmatrix_{\indexset} \orthogonalmatrix \rademachermatrix$ in \Cref{rem:jl_computational_complexity} that the cost of computing $\cdico^{*} = \pdico^{*} \jlmatrix^{*} $ and $\ctrainmatrix = \jlmatrix \trainmatrix$ is dominated by the action of the orthogonal (or circulant) matrix $\orthogonalmatrix$. Since $\orthogonalmatrix$ can be applied with the \ac{fft}, the cost of embedding the dictionary and the data set reduces to $O(\ddim \natoms \log \ddim)$ and $O(\ddim \nsig \log \ddim )$, respectively. Thus, the computational cost of compressed thresholding for one pass on the data set $\trainmatrix$ is of order $O\left( (\ddim \log \ddim + \jlembeddingdim \natoms ) \nsig \right)$. Further, we will see in our convergence result that in order to achieve stable dictionary recovery $\jlembeddingdim$ needs to be larger than $\log \ddim $, and thus the cost of compressed thresholding for overcomplete dictionaries simplifies to $O\left( \jlembeddingdim \natoms \nsig \right)$. \\
Comparing the cost $O (\jlembeddingdim \natoms \nsig)$ of compressed thresholding against the cost $O(\ddim \natoms \nsig)$ of regular thresholding, we can see that a significant speed-up in dictionary learning can be achieved with \ac{ictkm} if $\jlembeddingdim$ can be made small, ideally as in $\jlembeddingdim \ll \ddim$. Next, we address the convergence properties of \ac{ictkm} and answer the question of how much the data can be compressed. In particular, we present data compression ratios $(\ddim/\jlembeddingdim) : 1$ that can be reliably achieved to speed-up the learning process.

\subsection{Convergence Analysis} \label{subsec:convergence_analysis}
We now take a look at the convergence properties of \ac{ictkm} for exactly $\sparsity$-sparse training signals with randomly distributed and signed, equally sized non-zero entries. A precise convergence results for the case of approximately $\sparsity$-sparse signals with more general coefficient distributions can be found in \Cref{th:ictkm_convergence}.

\begin{theorem} \label{th:ictkm_convergence_o_notation}	
Assume that the generating dictionary $\dico \in \R^{\natoms \times \ddim}$ has operator norm $O(K/d) = O(1)$ and coherence $\mu$, and that the training signals $\trainsignal_n$ are generated following the signal model in \labelcref{eq:training_signal_with_model} with $\coeffsequence(k)= 1/\sqrt{\sparsity}$ for $k \leq \sparsity$ and $\coeffsequence(k) = 0$ for $k > \sparsity$, and with Gaussian noise of variance $\nsigma^2$. \\
Fix a target error $\targeterror >0$ and a distortion level $\jldistorterm > 0$, such that 
%\begin{equation}
%%\sparsity \leq O \left( \left(\ell (\coherence +\delta)^2 \log \natoms \right)^{-1}\right),
%(\coherence + \jldistorterm)^2 \leq O\left(\frac{1}{ \sparsity \log(\natoms^2/\targeterror) }\right) \mbox{and} , \nsigma^2 + \ddim \nsigma^2 \jldistorterm^2  \leq O\left(\frac{1}{ \sparsity \log(\natoms^2/\targeterror) }\right),
%\label{eq:admissible_sparsity_ictkm_convergence}
%\end{equation}
\begin{equation}
%\sparsity \leq O \left( \left(\ell (\coherence +\delta)^2 \log \natoms \right)^{-1}\right),
\max\left\{ \coherence,  \jldistorterm, \nsigma, \nsigma \jldistorterm\ddim^{1/2}\right\} \leq O\left(\frac{1}{ \sqrt{\sparsity \log(\natoms^2/\targeterror)} }\right).\label{eq:admissible_sparsity_ictkm_convergence}
\end{equation}
Choose a failure probability parameter $\jlprobterm >0$ and let $\jlmatrix$ be a \ac{jl} embedding constructed according to \Cref{def:fast_jl_embedding} by drawing 
%\begin{equation}
%\jlembeddingdim \geq O\left( \jldistorterm^{-2} \cdot \log^2(\jldistorterm^{-1}) \cdot \log(\natoms/\jlprobterm) \cdot \log^2 ( \jldistorterm^{-1} \log(\natoms/\jlprobterm) ) \cdot \log\ddim \right)
%\label{eq:embedding_dimension}
%\end{equation}
\begin{equation}
m \geq O \Bigl( \jldistorterm^{-2} \cdot \log^4\bigl(1/\jldistorterm\bigr) \cdot \log^2 (K/\jlprobterm)\Bigr)
\label{eq:jl_embedding_o_theorem_complete}	
\end{equation}
rows uniformly at random from a Fourier or cosine unitary matrix. %, or Rademacher (circulant) 
Then for any starting dictionary $\pdico$ within distance $O \left( 1/\sqrt{ \max \{ \sparsity, \log \natoms\} } \right)$
%\begin{equation}
% O \left( 1/\sqrt{ \max \left\{ \sparsity, \log \natoms \right\} } \right)
%\label{eq:convergence_radius_o_notation}
%\end{equation}
to the generating dictionary $\dico$, after $O( \log(1/\targeterror))$ iterations of \ac{ictkm}, each using a new batch of 
%$\nsig = O \left(\max\left\{ 1, d\nsigma^2\targeterror^{-1}\right\}\targeterror^{-1} K \log(K/\jlprobterm)\right)$ 
\begin{equation}
\nsig \geq O \left(\max\left\{ 1, d\nsigma^2/\targeterror\right\}\cdot (\natoms /\targeterror) \cdot \log(\natoms/\jlprobterm)\right)
\label{eq:sample_size}
\end{equation}
training signals $\trainmatrix$ and a new \ac{jl} embedding $\jlmatrix$, the distance of the output dictionary $\ppdico$ to the generating dictionary $\dico$ will be smaller than the target error except with probability $O( \jlprobterm \log(1/\targeterror))$.
\end{theorem}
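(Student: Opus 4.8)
The plan is to reduce \Cref{th:ictkm_convergence_o_notation} to the convergence analysis of \ac{itkrm} in \cite{sc15} by treating compressed thresholding as a perturbation of exact thresholding, and to control that perturbation via the \ac{jl} property of $\jlmatrix$. The crucial observation is that, conditioned on a draw of $\jlmatrix$ satisfying the distortion bound in \labelcref{eq:jl_lemma} on the relevant point set, the compressed inner product $\langle \jlmatrix \patom_k, \jlmatrix \trainsignal_n\rangle$ differs from the exact inner product $\langle \patom_k, \trainsignal_n\rangle$ by a controlled amount; indeed, by polarization $\langle \jlmatrix \patom_k, \jlmatrix \trainsignal_n\rangle = \tfrac14\left(\|\jlmatrix(\patom_k+\trainsignal_n)\|_2^2 - \|\jlmatrix(\patom_k-\trainsignal_n)\|_2^2\right)$, so the $(1\pm\jldistorterm)$ distortion on norms transfers to an additive error of order $\jldistorterm$ (times norms) on inner products. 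Compressed thresholding picks the wrong support only if some in-support atom has exact inner product within $O(\jldistorterm)$ of some out-of-support atom; the signal model guarantees a gap between the $\sparsity$-th and $(\sparsity+1)$-th largest exact inner products of order $\csum_\sparsity = 1/\sqrt{\sparsity}$ (up to coherence and noise terms), so choosing $\jldistorterm \leq O(1/\sqrt{\sparsity\log(\natoms^2/\targeterror)})$ as in \labelcref{eq:admissible_sparsity_ictkm_convergence} ensures the compressed and exact supports coincide with high probability over $\jlmatrix$, the coefficients, the signs, and the noise.

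The steps I would carry out are: (i) fix the generating atoms $\atom_1,\dots,\atom_\natoms$, the perturbed atoms $\patom_1,\dots,\patom_\natoms$ and a training signal $\trainsignal_n$; the point set to be embedded with low distortion consists of $O(\natoms^2)$ sums and differences $\patom_k\pm\patom_j$ together with $\patom_k\pm\trainsignal_n$, so that \labelcref{eq:jl_lemma} holds simultaneously for all these with probability $1-\jlprobterm$ once $\jlnumpoints = O(\natoms^2)$ — this is exactly the regime where the embedding dimension bound \labelcref{eq:jl_embedding_o_theorem_complete} follows from \Cref{def:fast_jl_embedding} with $\log(\jlnumpoints/\jlprobterm) = O(\log(\natoms/\jlprobterm))$; (ii) on this good event, bound the additive inner-product distortion by $O(\jldistorterm\|\trainsignal_n\|_2)$, using $\operatorname{SNR} = O(1)$ to keep $\|\trainsignal_n\|_2 = O(1)$ with high probability, and note the extra term $\nsigma\jldistorterm\ddim^{1/2}$ in \labelcref{eq:admissible_sparsity_ictkm_convergence} comes from the noise contribution $\|\jlmatrix\noise\|_2$, whose expected square is $\|\jlmatrix\|_{2,2}^2$-controlled and scales like $\ddim\nsigma^2$ times the distortion — here \Cref{rem:jl_operator_norm} is invoked to keep $\|\jlmatrix\|_{2,2} = O(\jlmatrixnormfactor)$ for the orthogonal construction; (iii) conclude that $\compthreshpdicosuppn = \threshpdicosupp$ except on a failure event of probability $O(\jlprobterm)$ plus the exponentially small probability that thresholding itself fails, which is already handled by the relevant lemma from \cite{sc15}; (iv) since the residual-means update \labelcref{eq:k_residual_means_definition} depends on the data only through the support $\compthreshpdicosuppn$, the signs $\signop(\langle\patom_k,\trainsignal_n\rangle)$ and projections $P(\pdico_{\compthreshpdicosuppn})$, on the good event the \ac{ictkm} update is \emph{identical} to the \ac{itkrm} update, so the contraction estimate $d(\ppdico,\dico) \leq \max\{\kappa\, d(\pdico,\dico), O(\targeterror)\}$ with $\kappa < 1$ from \cite{sc15} applies verbatim; (v) iterate $O(\log(1/\targeterror))$ times, drawing a fresh batch of $\nsig$ signals and a fresh $\jlmatrix$ each time, and take a union bound over iterations to get the overall failure probability $O(\jlprobterm\log(1/\targeterror))$, with the sample-size requirement \labelcref{eq:sample_size} inherited unchanged from the \ac{itkrm} concentration arguments.

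The main obstacle, and the part requiring the most care, is step (ii): controlling the noise term. Unlike the signal and atom parts, which live in a fixed low-dimensional set and are handled cleanly by the \ac{jl} property, the noise $\noise$ is a fresh random vector of full ambient dimension, so $\jlmatrix$ cannot be conditioned to embed it well — instead one must argue probabilistically that $\langle \jlmatrix\patom_k, \jlmatrix\noise\rangle$ concentrates around $\langle\patom_k,\noise\rangle$, which forces the appearance of the operator norm of $\jlmatrix$ (hence the $\ddim^{1/2}$ factor, and the $\sqrt{\log\ddim}$ penalty for the circulant case flagged in \Cref{rem:jl_operator_norm}) and the joint randomness of $\jlmatrix$ and $\noise$. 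A secondary subtlety is making sure the "gap" argument is robust to the approximately-equal-sized coefficient assumption and to the perturbation of $\pdico$ away from $\dico$ — but here one can borrow the admissible-sparsity bookkeeping of \cite{sc15} essentially wholesale, since the only new ingredient is replacing the constant in the gap threshold by one that also absorbs $\jldistorterm$. Everything else is routine: polarization, a union bound over $O(\natoms^2)$ points, and invoking the \ac{itkrm} machinery as a black box on the good event.
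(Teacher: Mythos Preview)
Your high-level strategy is right, and close to the paper's, but step (i) as written will not work. You propose to put $\patom_k \pm \trainsignal_n$ in the \ac{jl} point set. Since $\jlmatrix$ is drawn once per iteration and must serve \emph{all} $\nsig$ training signals, the point set then has size $O(\natoms^2 + \natoms \nsig)$, which via \Cref{def:fast_jl_embedding} forces $m$ to scale with $\log(\nsig\natoms/\jlprobterm)$ rather than $\log(\natoms/\jlprobterm)$. With $\nsig = O(\targeterror^{-2}\natoms\log(\natoms/\jlprobterm))$ this is not catastrophic, but it is not the bound \labelcref{eq:jl_embedding_o_theorem_complete}, and more importantly it couples the embedding dimension to the sample size, which the theorem explicitly avoids. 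You partly see this in your obstacle paragraph when you say $\noise$ cannot be conditioned on, but the same objection applies to the \emph{entire} signal $\trainsignal_n$: the sparse part $\dico\coeffvector_n$ also varies with $n$.

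The fix, and what the paper actually does, is to embed a signal-independent set of $4\natoms^2$ vectors: $\atom_k \pm \atom_j$, $\atom_k \pm \unitatom_j$, and $\unitatom_k$ (the perturbation directions from the decomposition $\patom_k = \alpha_k\atom_k + \omega_k\unitatom_k$). One then expands $\langle \jlmatrix\patom_k, \jlmatrix\trainsignal\rangle$ via the signal model as a sign-weighted sum $\sum_j \sigma(j)\coeffsequence(p(j))\langle\jlmatrix\atom_k,\jlmatrix\atom_j\rangle$ plus analogous $\unitatom$-terms plus noise terms; the inner products $\langle\jlmatrix\atom_k,\jlmatrix\atom_j\rangle$ are now controlled uniformly by the \ac{jl} event, independently of $n$, and Hoeffding over the random signs handles the sum. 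The noise term is treated exactly as you describe, via $\|\jlmatrix^*\jlmatrix\atom_k\|_2^2 \leq (d/m)(1+\jldistorterm)$ and subgaussianity. A secondary difference: the paper compares compressed thresholding directly to the \emph{oracle} support and signs (not to uncompressed thresholding), and redoes the residual decomposition rather than invoking the \ac{itkrm} theorem as a black box; your black-box reduction would be valid if the preliminary step held, but the paper's route avoids stacking the failure probabilities of compressed-versus-uncompressed and uncompressed-versus-oracle.
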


Our first observation is that for $\delta =0$ corresponding to $m=d$, \ac{ictkm} reduces to \ac{itkrm}, and \Cref{th:ictkm_convergence_o_notation} reduces to the statement for \ac{itkrm}, see Theorem 4.2 in \cite{sc15}. We also see that the convergence radius and sample complexity are essentially not affected by the compressed thresholding operation. This is due to the fact that we use inner products between the embedded sparse signal and embedded atoms, and exploit their excess separation outside the sparse support to prove recovery, but use only the regular inner products in the update-formula. One side effect of compression is noise folding, a recurring issue in compressed sensing algorithms, see \cite{trodar11} for instance. The noise folding issue manifests itself in our admissible noise level $\nsigma^2$, which is reduced by a factor of $O(\ddim \jldistorterm^2) \approx O(\ddim/\jlembeddingdim)$ compared to \ac{itkrm}. \\
To get a better feeling for the best achievable compression, assume that $\coherence^2 = O(1/\ddim)$ and that the \ac{snr} is 1, corresponding to $\nsigma^2 =1/d$. If we are satisfied with a moderate target error $\targeterror^2 =1/\natoms$ and a final failure probability $\log \natoms/\natoms$, meaning $\jlprobterm =1/\natoms$, which results in a reasonable batch size of $\nsig = O(\natoms^2 \log \natoms)$ signals per iteration, the condition on the distortion level essentially becomes $\jldistorterm^2 \leq O \left(1/(\sparsity \log \natoms )\right)$ and we get for the best possible compression ratio an embedding dimension as low as (omitting $\log \log$ factors)
\begin{equation}
\jlembeddingdim = O \Bigl( \sparsity \cdot \log^4\sparsity \cdot \log^3 \natoms \Bigr).
\label{eq:jl_embedding_best_compression}	
\end{equation}
This means that up to logarithmic factors the embedding dimension necessary for local convergence of compressed dictionary learning scales as the sparsity level, and thus is comparable to the embedding dimension for compressed sensing of sparse signals. It also shows that the cost of dictionary learning per signal can be significantly reduced from $O(\ddim \natoms)$ to $O(\sparsity \natoms \log^4\sparsity \log^3 \natoms)$.\\
%{\color{red}
%The side effect of compression is noise folding, a recurring issue in compressed sensing algorithms, see \cite{trodar11} for instance. The noise folding issue manifests itself in our admissible noise level $\nsigma^2$, which is reduced by a factor of $O(\ddim \jldistorterm^2)$ compared to \ac{itkrm}. }\\
%In the complete proof in \Cref{th:ictkm_convergence}, we will further see that the admissible noise level of \ac{ictkm} gets reduced by a factor of %order $O(\ddim/\jlembeddingdim)$ compared to \ac{itkrm}. The reason for this reduction is the \ac{jl} embedding normalization factor $\jlmatrixnormfactor$, which appears in our concentration inequalities for the noise term. In the complete proof, the noise level reduction leads to the extra condition $\jlembeddingdim \geq \jldistorterm^{-2} \ddim \nsigma^2 (1 + \jldistorterm)$, but this condition disappears in the $O$-notation with the assumption $\nsigma^2 = 1/\ddim$ used in \Cref{th:ictkm_convergence_o_notation}. }
% 
%Discussion of noiselevel: hidden in O-notation, this
%blabla who wants to see it look at proof of full theorem.....
To not break the flow of the paper, we present the complete proof together with the exact statement in the appendix, and provide here only a sketch which summarizes the main ideas.
\begin{proof}[Proof Sketch]
To prove this result we will use the \ac{jl} property in \labelcref{eq:jl_lemma} and \Cref{def:fast_jl_embedding}. We first need to ensure that the relative distances between all pairs of \emph{embedded atoms} of the \emph{generating} and \emph{perturbation dictionaries} are preserved up to a distortion $\jldistorterm$ with high probability, which is achieved by enforcing the embedding dimension bound in \labelcref{eq:jl_embedding_o_theorem_complete}. The distance preservation property in conjunction with the assumption that the coefficients have a well balanced distribution in magnitude will ensure that compressed thresholding recovers the generating (oracle) signal support with high probability. With this result in hand, we then make use of the same techniques used in the proof of Theorem 4.2 in \cite{sc15}. Assuming that compressed thresholding recovers the generating signal support, we will apply a triangle inequality argument to the update formula $\ppatom_{k} = \frac{1}{\nsig} \sum_{n}{\left( \trainsignal_n - P(\pdico_{\compthreshpdicosuppn}) \trainsignal_n + P ( \patom_k ) \trainsignal_n \right) \cdot \signop{(\langle \patom_k, \trainsignal_n \rangle)} }$ and show that the difference between the residual based on the oracle signs and supports using $\dico$, and the residual using $\pdico$ concentrates around its expectation, which is small. This concentration property also ensures that the sum of residuals using $\dico$ converges to a scaled version of $\atom_k$. The convergence of the sum of residuals will then be used to show that one iteration of \ac{ictkm} decreases the error, e.g., $d(\ppdico,\dico) \leq \kappa \varepsilon$ for $\kappa < 1$, with high probability. Finally, we iterate the error decreasing property and show that the target error is reached, $d(\ppdico,\dico) \leq \targeterror$, after $L$ iterations. 
%We defer the complete proof to \Cref{sec:proof_of_ictkm_convergence_theorem} to keep the flow of the paper.
\end{proof}

%In the complete proof in \Cref{th:ictkm_convergence}, we will further see that the admissible noise level of \ac{ictkm} gets reduced by a factor of order $O(\ddim/\jlembeddingdim)$ compared to \ac{itkrm}. The reason for this reduction is the \ac{jl} embedding normalization factor $\jlmatrixnormfactor$, which appears in our concentration inequalities for the noise term. In the complete proof, the noise level reduction leads to the extra condition $\jlembeddingdim \geq \jldistorterm^{-2} \ddim \nsigma^2 (1 + \jldistorterm)$, but this condition disappears in the $O$-notation with the assumption $\nsigma^2 = 1/\ddim$ used in \Cref{th:ictkm_convergence_o_notation}. }
%
%%%%%%%%%%%%%%%%%%%%%%%%%%%%%%%%%%%%%%%%%%%%%%%%%%%%
%%%%%%%%%%%%%%%%%%%%%SIMULATIONS%%%%%%%%%%%%%%%%%%%%
%%%%%%%%%%%%%%%%%%%%%%%%%%%%%%%%%%%%%%%%%%%%%%%%%%%%

\section{Numerical Simulations}\label{sec:simulations}

We will now complement our theoretical results with numerical simulations to illustrate the relation between the compression ratio, admissible sparsity level/achievable error, and the computational cost of dictionary learning in a practical setting\footnote{A MATLAB toolbox for reproducing the experiments can be found at \href{https://www.uibk.ac.at/mathematik/personal/schnass/code/ictkm.zip}{www.uibk.ac.at/mathematik/personal/schnass/code/\hspace{.5pt}ictkm.zip}. The toolbox has been designed from the ground up to efficiently learn the dictionaries. In particular, parallelization techniques to process multiple training signals simultaneously and fully utilize multiple \ac{cpu} cores have been implemented and \ac{gpu} acceleration may also be used to speed-up the most computationally demanding vectorized calculations.}. The simulation results will further demonstrate that \ac{ictkm} is a powerful, low-cost algorithm for learning dictionaries, especially when dealing with training signals in dimension $\ddim \geq 100,000$, where it is possible to speed-up the learning process by up to an order of magnitude. We begin with simulations carried out on synthetic training data, and then follow up with simulations on audio training data obtained from the \ac{rwc} music database \cite{ma04}.

\subsection{Synthetic data}

We have generated our synthetic training set with \labelcref{eq:training_signal_with_model}, using both overcomplete and undercomplete generating dictionaries and exactly $\sparsity$-sparse coefficients under Gaussian noise. Details for the signal generation are described next.

\textbf{Generating dictionary.} We will generate two types of dictionaries: with intrinsic dimension $\tilde d$ equal or lower than the ambient dimension $d$. In both cases the dictionary $\dico$ is constructed as a subset of the union of two bases in $\R^{\tilde \ddim}$, the Dirac basis and the first-half elements of the discrete cosine transform basis, meaning the number of atoms in the generating dictionary amounts to $\natoms = (3/2)\tilde \ddim$. In case of $\tilde d < d$ we simply embed the $\tilde \ddim$-dimensional dictionary into $\R^{\ddim \times K}$ by zero-padding the missing entries. \\ 
\indent \textbf{Sparse coefficients.} As coefficient sequences $\coeffsequence$ we use geometric sequences with decay factor $\coeffsequenceterm_b$ uniformly distributed in $[1-b, 1]$ for some $0 < b < 1$; to be more specific, we set $\coeffsequence(k) = \beta \coeffsequenceterm_b^{k-1}$ for $k \leq \sparsity$ and $\coeffsequence(k) = 0$ for $k > \sparsity$ with $\beta = (1-\coeffsequenceterm_b^{2S})^{1/2}$ the normalisation factor ensuring $\|\coeffsequence\|_2=1$. The maximal dynamic range of our coefficients for this particular arrangement is $(1-b)^{1-\sparsity}$, and for a given sparsity level $\sparsity$ we choose $b$ so that the maximal dynamic range is exactly $4$. \\ 
\indent \textbf{Sparsity level.} We have experimented with two parameter regimes for the sparsity level, $\sparsity = O(1)$ and $\sparsity = O({\tilde \ddim}^{1/2})$; or more precisely, $\sparsity = 4$ and $\sparsity = {\tilde \ddim}^{1/2}/2$. We have chosen these two sparsity level regimes to experimentally validate our theoretical findings; that for the lower sparsity levels $\sparsity = O(1)$ the highest compression ratios can be achieved but at the expense of an increased recovery error and, on the other hand, with the higher sparsity levels $\sparsity = O({\tilde \ddim}^{1/2})$ recovery precision is increased but only modest improvements in the computational cost are achievable. \\ 
\indent \textbf{Recovery criteria.} Given an atom $\ppatom_l$ from the output dictionary $\ppdico$, the criteria for declaring the generating atom $\atom_k$ as recovered is $\max_l \left|  \langle \ppatom_l, \atom_k  \rangle \right| \geq 0.99$. To estimate the percentage of recovered atoms we run $100$ iterations of \ac{ictkm} (or \ac{itkrm} where applicable) with $\nsig = 50 \natoms \log \natoms$ using $10$ completely random dictionary initializations, meaning that the atoms $\ppatom_l$ for the initial dictionary $\ppdico$ are chosen uniformly at random from the unit sphere in $\R^\ddim$. \\ 
\indent \textbf{Noise level.} The noise $\noise$ is chosen as a Gaussian random vector with mean zero and variance $\nsigma^2 = 1/(4 \ddim)$. Since $\E ( \left\| \dico \coeffvector  \right\|_2^2) = 1$ for our coefficient sequence and $\E (\left\| \noise \right\|_2^2) = d \nsigma^2$ for Gaussian noise, the \ac{snr} of our training signals is exactly $4$.

Next we present recovery simulations carried out with our synthetic generated training data to evaluate the achievable compression ratio and recovery rates/time. We will also evaluate how \ac{ictkm} scales with increasing ambient dimensions.

\subsubsection{Compression ratio}

In \Cref{tbl:table_highest_compression_ratio} we evaluate the highest achievable compression ratio to recover the generating dictionary $\dico$ with the \acf{dft}, \acf{dct}, and \acf{crt} as \ac{jl} embedding. Here we have used the convention that $\dico$ has been recovered if $90 \%$ of its atoms are recovered. Synthetic signals with ambient dimension $ \ddim \in \{ 256, 512, 1024, 2048 \}$, intrinsic dimension $\tilde\ddim = \ddim$ and compression ratios in $ \{ 1.5, 2, 2.5, 2.9, 3.33, 4, 5, \allowbreak 6.67, 10, 20, 33.33, 40 \}$ have been used in this experiment. As predicted by our theoretical results, we can attain much higher compression ratios when using reduced sparsity levels, compare the results with $\sparsity = O(\sqrt{\ddim})$ in \Cref{tbl:table_highest_compression_ratio}(A) and $\sparsity = O(1)$ in \Cref{tbl:table_highest_compression_ratio}(B). Additionally, although low-dimensional training signals have been used in this experiment, a compression ratio of at least $2:1$ can be attained for sparsity levels $\sparsity = O(\sqrt{\ddim})$ rising to $6.67:1$ for sparsity levels $\sparsity = O(1)$, and these good results indicate that a large constant factor might be present in our compression ratio estimate of $O(\ddim / \log^5 \ddim) : 1$ in the theoretical results. Lastly, note that the \ac{dft} attains consistently higher compression ratios than the \ac{dct} and \ac{crt} as \ac{jl} embeddings.
\newcommand{\tableone}{
\begin{tabular}{@{}ccc@{}}
\toprule
\thead{\ac{jl} \\ embedding \\ type } & \thead{ambient \\ dimension} & \thead{highest \\ compression \\ ratio } \\
\midrule 
\multirow{3}{*}{\ac{dft}} & $256$ & $\hspace{1.3em} 5:1$ \\
 & $512$ & $6.67:1$ \\
 & $1,024$ & $6.67:1$ \\
\midrule 
\multirow{3}{*}{\ac{dct}} & $256$ & $3.33:1$ \\
 & $512$ & $\hspace{1.3em} 4:1$ \\
 & $1,024$ & $\hspace{1.3em} 5:1$ \\
\midrule
 \multirow{3}{*}{\ac{crt}} & $256$ & $\hspace{1.3em} 2:1$ \\
 & $512$ & $\hspace{.6em} 2.9:1$ \\
 & $1,024$ & $3.33:1$ \\
\bottomrule
\end{tabular}
}

\newcommand{\tabletwo}{
\begin{tabular}{@{}ccc@{}}
\toprule
\thead{\ac{jl} \\ embedding \\ type } & \thead{ambient \\ dimension} & \thead{highest \\ compression \\ ratio } \\
\midrule 
\multirow{3}{*}{\ac{dft}} & $512$ & $\hspace{0.7em} 10:1$ \\
 & $1,024$ & $\hspace{0.7em} 20:1$ \\
 & $2,048$ & $\hspace{0.7em} 20:1$ \\
 \midrule
 \multirow{3}{*}{\ac{dct}} & $512$ & $6.67:1$ \\
 & $1,024$ & $\hspace{0.7em} 10:1$ \\
 & $2,048$ & $\hspace{0.7em} 20:1$ \\
 \midrule 
 \multirow{3}{*}{\ac{crt}} & $512$ & $6.67:1$ \\
 & $1,024$ & $\hspace{0.7em} 10:1$ \\
 & $2,048$ & $\hspace{0.7em} 10:1$ \\ 
\bottomrule
\end{tabular}
}

\begin{table}[htp]%
  \centering
  \subfloat[$\sparsity = O(\sqrt{\ddim})$.]{\label{subtable:sparsity_1_highest_compression_ratio} \tableone} %
  \hspace{7.5em}
  \subfloat[$\sparsity = O(1)$.]{\label{subtable:sparsity_2_highest_compression_ratio} \tabletwo}
  \caption{Highest compression ratio achieved with \ac{ictkm}.}
  \label{tbl:table_highest_compression_ratio}
\end{table}

\subsubsection{Recovery rates}
In \Cref{fig:recovery_rates} we evaluate the attained dictionary recovery rates for synthetic training signals of ambient dimension $\ddim = 1,024$ and intrinsic dimension $\tilde \ddim =\ddim$ with \ac{itkrm}, and \ac{ictkm} using the \ac{dft}, \ac{dct}, and \ac{crt} as \ac{jl} embedding and increasing compression ratios. The solid yellow line marks the highest recovery rate achieved with \ac{itkrm} in the experiment, i.e., $99 \%$ recovered atoms for sparsity levels $\sparsity = O(\sqrt{\ddim})$ and $94.7 \%$ for sparsity levels $\sparsity = O(1)$. We can see from the results that \ac{ictkm} usually requires far less iterations than \ac{itkrm} to reach the target recovery rate. In particular, for sparsity levels $\sparsity = O(\sqrt{\ddim})$ in \Cref{subfig:num_iteration_sparsity_1_d_1024} the \ac{dft} with compression ratios of $2.5 : 1$ and $3.3 : 1$, or the \ac{dct} with a compression ratio of $2.5 : 1$ can attain the $99 \%$ recovery rate with roughly $60$ iterations, or a $40\%$ reduction in the number of iterations compared to \ac{itkrm}. A similar trend can be seen in \Cref{subfig:num_iteration_sparsity_2_d_1024} for the sparsity levels $\sparsity = O(1)$, but here the increase in convergence speed is much more pronounced. In particular, the \ac{dft} with a compression ratio of $10 : 1$, or the \ac{dct} and \ac{crt} with a compression ratio of $5:1$ can attain the $94.7 \%$ recovery rate with no more than $40$ iterations, or a $60\%$ reduction in the number of iterations compared to \ac{itkrm}. Lastly, note that for the sparsity levels $\sparsity = O(\sqrt{\ddim})$, \ac{ictkm} always managed to achieve a higher recovery rate than \ac{itkrm}. For example, the \ac{dft} with compression ratios of $2.5 : 1$ and $5 : 1$ attained a $99.8 \%$ recovery rate compared to the $99\%$ rate achieved with \ac{itkrm}. For the sparsity levels $\sparsity = O(1)$, the \ac{dct} and \ac{crt} with a compression ratio of $5:1$, and the \ac{dft} with a compression ratio of $10:1$ also managed to attain a higher recovery rate of at least $96.5 \%$ compared to the $94.7 \%$ recovery rate attained with \ac{itkrm}, but for some compression ratios the improved recovery rates could not be attained. For example, the \ac{dct} with a compression ratio of $2.5 : 1$ attained a $93.18 \%$ recovery rate.

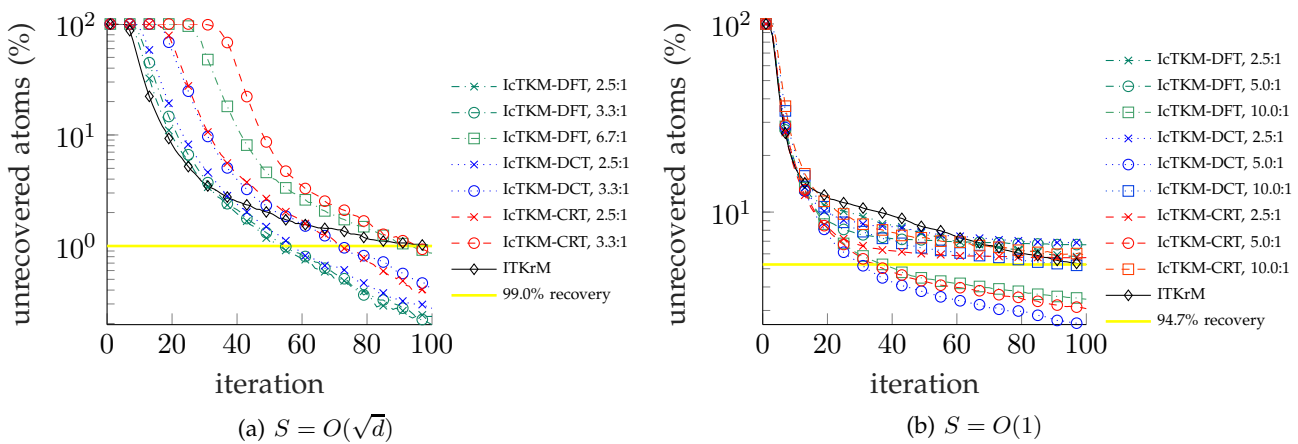
\begin{figure}[h]  
\setlength\figureheight{\defaultheight} 
\setlength\figurewidth{\defaultwidth}
\centering  

\captionsetup[sub]{singlelinecheck=off,margin=\defaultsubcaptionmargin}
\subfloat[$\sparsity = O(\sqrt{\ddim})$]
{
\label{subfig:num_iteration_sparsity_1_d_1024} 
% This file was created by matlab2tikz.
%
%The latest updates can be retrieved from
%  http://www.mathworks.com/matlabcentral/fileexchange/22022-matlab2tikz-matlab2tikz
%where you can also make suggestions and rate matlab2tikz.
%
\definecolor{mycolor1}{rgb}{0.00000,0.50000,0.40000}%
\definecolor{mycolor2}{rgb}{0.04762,0.52381,0.40000}%
\definecolor{mycolor3}{rgb}{0.23810,0.61905,0.40000}%
\definecolor{mycolor4}{rgb}{0.00000,0.04762,0.97619}%
\definecolor{mycolor5}{rgb}{1.00000,0.04762,0.00000}%
\definecolor{mycolor6}{rgb}{1.00000,1.00000,0.00000}%
\begin{tikzpicture}

\begin{axis}[%
width=0.72\figurewidth,
height=\figureheight,
at={(0\figurewidth,0\figureheight)},
scale only axis,
xmin=0,
xmax=100,
xlabel style={font=\color{white!15!black}},
xlabel={iteration},
ymode=log,
ymin=0.1953125,
ymax=100,
yminorticks=true,
ylabel style={font=\color{white!15!black}},
ylabel={unrecovered atoms (\%)},
axis background/.style={fill=white},
axis x line*=bottom,
axis y line*=left,
legend style={at={(1.03,0.5)}, anchor=west, legend cell align=left, align=left, fill=none, draw=none},
legend style={font=\tiny}
]
\addlegendimage{legend image code/.code=}  
\addlegendentry{\\}
\addplot [color=mycolor1, dashdotted, mark=x, mark repeat=6, mark options={solid, mycolor1}]
  table[row sep=crcr]{%
1	100\\
2	100\\
3	100\\
4	100\\
5	100\\
6	99.8987268518518\\
7	98.0251736111111\\
8	90.111400462963\\
9	77.1629050925926\\
10	62.9846643518519\\
11	50.4774305555556\\
12	40.3284143518519\\
13	32.2699652777778\\
14	26.1140046296296\\
15	21.1588541666667\\
16	17.896412037037\\
17	14.9233217592593\\
18	12.6446759259259\\
19	10.9953703703704\\
20	9.73668981481482\\
21	8.65162037037037\\
22	7.70399305555556\\
23	6.95891203703704\\
24	6.36574074074073\\
25	5.74363425925925\\
26	5.25173611111111\\
27	4.7019675925926\\
28	4.28964120370371\\
29	3.90624999999999\\
30	3.63859953703702\\
31	3.38541666666669\\
32	3.19010416666667\\
33	2.99479166666669\\
34	2.77777777777777\\
35	2.66203703703704\\
36	2.51012731481481\\
37	2.33651620370371\\
38	2.19907407407406\\
39	2.07609953703702\\
40	1.953125\\
41	1.85185185185186\\
42	1.7433449074074\\
43	1.67100694444444\\
44	1.56250000000001\\
45	1.50462962962965\\
46	1.43229166666669\\
47	1.33825231481481\\
48	1.25144675925927\\
49	1.17910879629629\\
50	1.15017361111113\\
51	1.09230324074073\\
52	1.04890046296298\\
53	0.991030092592595\\
54	0.954861111111128\\
55	0.911458333333329\\
56	0.882523148148138\\
57	0.846354166666671\\
58	0.824652777777771\\
59	0.795717592592581\\
60	0.759548611111114\\
61	0.752314814814824\\
62	0.716145833333343\\
63	0.687210648148138\\
64	0.629340277777771\\
65	0.61487268518519\\
66	0.600405092592581\\
67	0.5859375\\
68	0.564236111111114\\
69	0.549768518518519\\
70	0.506365740740733\\
71	0.506365740740733\\
72	0.499131944444443\\
73	0.484664351851862\\
74	0.462962962962962\\
75	0.434027777777771\\
76	0.412326388888872\\
77	0.397858796296291\\
78	0.383391203703709\\
79	0.390624999999986\\
80	0.383391203703695\\
81	0.347222222222229\\
82	0.311053240740733\\
83	0.282118055555543\\
84	0.289351851851848\\
85	0.289351851851848\\
86	0.282118055555557\\
87	0.282118055555557\\
88	0.267650462962962\\
89	0.274884259259267\\
90	0.260416666666671\\
91	0.260416666666671\\
92	0.267650462962962\\
93	0.260416666666657\\
94	0.245949074074062\\
95	0.231481481481467\\
96	0.238715277777771\\
97	0.238715277777771\\
98	0.231481481481481\\
99	0.231481481481481\\
100	0.231481481481481\\
};
\addlegendentry{IcTKM-DFT,  2.5:1}

\addplot [color=mycolor2, dashdotted, mark=o, mark repeat=6, mark options={solid, mycolor2}]
  table[row sep=crcr]{%
1	100\\
2	100\\
3	100\\
4	100\\
5	100\\
6	100\\
7	99.7395833333333\\
8	97.6056134259259\\
9	90.6467013888889\\
10	79.6440972222222\\
11	67.0428240740741\\
12	55.0708912037037\\
13	44.8350694444444\\
14	36.3208912037037\\
15	29.7381365740741\\
16	24.8480902777778\\
17	20.5439814814815\\
18	17.2670717592593\\
19	14.6773726851852\\
20	12.572337962963\\
21	10.8796296296296\\
22	9.3822337962963\\
23	8.36226851851852\\
24	7.45804398148148\\
25	6.6333912037037\\
26	6.00405092592592\\
27	5.42534722222221\\
28	4.8900462962963\\
29	4.43431712962965\\
30	4.05092592592592\\
31	3.71817129629629\\
32	3.47222222222223\\
33	3.25520833333333\\
34	3.01649305555556\\
35	2.77777777777777\\
36	2.58969907407408\\
37	2.40162037037038\\
38	2.26417824074072\\
39	2.19907407407406\\
40	2.11950231481482\\
41	2.01822916666669\\
42	1.85908564814815\\
43	1.76504629629629\\
44	1.68547453703704\\
45	1.60590277777777\\
46	1.54079861111113\\
47	1.43229166666667\\
48	1.3816550925926\\
49	1.31655092592592\\
50	1.25868055555556\\
51	1.17187499999999\\
52	1.12847222222223\\
53	1.07783564814815\\
54	1.02719907407408\\
55	0.983796296296276\\
56	0.933159722222229\\
57	0.896990740740733\\
58	0.875289351851862\\
59	0.860821759259267\\
60	0.824652777777771\\
61	0.802951388888886\\
62	0.795717592592581\\
63	0.759548611111114\\
64	0.716145833333329\\
65	0.708912037037038\\
66	0.658275462962962\\
67	0.629340277777771\\
68	0.607638888888872\\
69	0.593171296296291\\
70	0.564236111111114\\
71	0.542534722222229\\
72	0.506365740740733\\
73	0.506365740740719\\
74	0.484664351851862\\
75	0.448495370370381\\
76	0.426793981481481\\
77	0.412326388888886\\
78	0.383391203703709\\
79	0.361689814814824\\
80	0.376157407407419\\
81	0.361689814814824\\
82	0.347222222222229\\
83	0.332754629629619\\
84	0.325520833333329\\
85	0.325520833333329\\
86	0.311053240740733\\
87	0.325520833333329\\
88	0.303819444444443\\
89	0.296585648148152\\
90	0.289351851851862\\
91	0.274884259259267\\
92	0.253182870370381\\
93	0.231481481481495\\
94	0.22424768518519\\
95	0.231481481481481\\
96	0.217013888888886\\
97	0.217013888888886\\
98	0.209780092592595\\
99	0.202546296296305\\
100	0.1953125\\
};
\addlegendentry{IcTKM-DFT,  3.3:1}

\addplot [color=mycolor3, dashdotted, mark=square, mark repeat=6, mark options={solid, mycolor3}]
  table[row sep=crcr]{%
1	100\\
2	100\\
3	100\\
4	100\\
5	100\\
6	100\\
7	100\\
8	100\\
9	100\\
10	100\\
11	100\\
12	100\\
13	100\\
14	100\\
15	100\\
16	100\\
17	99.9927662037037\\
18	100\\
19	99.9927662037037\\
20	99.9638310185185\\
21	99.8336226851852\\
22	99.5515046296296\\
23	98.6255787037037\\
24	97.7719907407407\\
25	95.7248263888889\\
26	91.4713541666667\\
27	82.9137731481482\\
28	75.6221064814815\\
29	69.3214699074074\\
30	55.8015046296296\\
31	47.7864583333333\\
32	40.625\\
33	34.1869212962963\\
34	28.833912037037\\
35	24.5949074074074\\
36	20.8550347222222\\
37	18.1061921296296\\
38	15.5381944444444\\
39	13.3752893518519\\
40	11.7115162037037\\
41	10.2936921296296\\
42	9.15798611111111\\
43	8.10185185185185\\
44	7.34953703703702\\
45	6.62615740740742\\
46	6.02575231481481\\
47	5.49768518518519\\
48	4.98408564814814\\
49	4.60069444444446\\
50	4.32581018518518\\
51	4.05092592592592\\
52	3.93518518518519\\
53	3.68923611111111\\
54	3.47945601851852\\
55	3.3347800925926\\
56	3.19010416666667\\
57	3.02372685185185\\
58	2.93692129629629\\
59	2.87181712962962\\
60	2.69097222222221\\
61	2.60416666666667\\
62	2.46672453703701\\
63	2.45949074074075\\
64	2.32204861111111\\
65	2.22077546296296\\
66	2.11950231481481\\
67	2.06886574074073\\
68	1.98206018518519\\
69	1.91695601851853\\
70	1.88802083333331\\
71	1.83738425925924\\
72	1.76504629629628\\
73	1.72887731481482\\
74	1.67100694444444\\
75	1.62037037037038\\
76	1.59143518518519\\
77	1.52633101851853\\
78	1.51186342592592\\
79	1.49016203703702\\
80	1.43952546296296\\
81	1.39612268518518\\
82	1.3671875\\
83	1.32378472222221\\
84	1.28761574074073\\
85	1.24421296296296\\
86	1.17910879629629\\
87	1.15740740740742\\
88	1.14293981481482\\
89	1.08506944444444\\
90	1.07783564814815\\
91	1.04890046296296\\
92	1.03443287037037\\
93	1.00549768518518\\
94	0.991030092592581\\
95	0.954861111111114\\
96	0.954861111111114\\
97	0.918692129629633\\
98	0.889756944444429\\
99	0.868055555555543\\
100	0.860821759259267\\
};
\addlegendentry{IcTKM-DFT,  6.7:1}

\addplot [color=blue, dotted, mark=x, mark repeat=6, mark options={solid, blue}]
  table[row sep=crcr]{%
1	100\\
2	100\\
3	100\\
4	100\\
5	100\\
6	100\\
7	99.9927662037037\\
8	99.8336226851852\\
9	98.1047453703704\\
10	91.5147569444444\\
11	80.9461805555556\\
12	69.3359375\\
13	58.2899305555556\\
14	48.2421875\\
15	40.4441550925926\\
16	33.4997106481482\\
17	27.6548032407407\\
18	22.9890046296296\\
19	19.3214699074074\\
20	16.3628472222222\\
21	13.9684606481482\\
22	11.9791666666667\\
23	10.4890046296296\\
24	9.17245370370372\\
25	8.2103587962963\\
26	7.26996527777777\\
27	6.72019675925925\\
28	5.98958333333333\\
29	5.45428240740742\\
30	5.01302083333331\\
31	4.60792824074073\\
32	4.2896412037037\\
33	3.94241898148148\\
34	3.57349537037037\\
35	3.36371527777777\\
36	3.07436342592592\\
37	2.82841435185185\\
38	2.65480324074072\\
39	2.51012731481482\\
40	2.37991898148147\\
41	2.22800925925927\\
42	2.1556712962963\\
43	2.01822916666667\\
44	1.90972222222223\\
45	1.81568287037038\\
46	1.70717592592592\\
47	1.64930555555557\\
48	1.5625\\
49	1.49016203703702\\
50	1.41782407407408\\
51	1.33825231481481\\
52	1.28761574074075\\
53	1.24421296296298\\
54	1.1646412037037\\
55	1.11400462962963\\
56	1.07060185185185\\
57	1.01273148148147\\
58	0.954861111111128\\
59	0.904224537037038\\
60	0.860821759259281\\
61	0.817418981481481\\
62	0.817418981481481\\
63	0.766782407407405\\
64	0.7595486111111\\
65	0.737847222222214\\
66	0.694444444444429\\
67	0.665509259259252\\
68	0.651041666666657\\
69	0.658275462962962\\
70	0.643807870370352\\
71	0.643807870370352\\
72	0.622106481481467\\
73	0.607638888888872\\
74	0.571469907407405\\
75	0.535300925925924\\
76	0.513599537037038\\
77	0.484664351851848\\
78	0.477430555555557\\
79	0.462962962962976\\
80	0.434027777777786\\
81	0.434027777777786\\
82	0.41956018518519\\
83	0.383391203703709\\
84	0.376157407407405\\
85	0.376157407407405\\
86	0.361689814814824\\
87	0.347222222222229\\
88	0.332754629629633\\
89	0.332754629629633\\
90	0.325520833333329\\
91	0.318287037037038\\
92	0.311053240740748\\
93	0.303819444444457\\
94	0.303819444444443\\
95	0.296585648148138\\
96	0.303819444444429\\
97	0.296585648148138\\
98	0.289351851851848\\
99	0.274884259259252\\
100	0.274884259259252\\
};
\addlegendentry{IcTKM-DCT,  2.5:1}

\addplot [color=mycolor4, dotted, mark=o, mark repeat=6, mark options={solid, mycolor4}]
  table[row sep=crcr]{%
1	100\\
2	100\\
3	100\\
4	100\\
5	100\\
6	100\\
7	100\\
8	100\\
9	100\\
10	100\\
11	100\\
12	100\\
13	99.9710648148148\\
14	99.8625578703704\\
15	99.291087962963\\
16	96.8967013888889\\
17	91.015625\\
18	80.7653356481482\\
19	68.5329861111111\\
20	56.958912037037\\
21	48.0541087962963\\
22	40.6828703703704\\
23	34.6281828703704\\
24	29.2679398148148\\
25	24.7974537037037\\
26	20.9635416666667\\
27	17.6866319444445\\
28	14.9377893518518\\
29	12.9557291666667\\
30	11.1183449074074\\
31	9.69328703703704\\
32	8.53587962962963\\
33	7.65335648148147\\
34	6.79253472222223\\
35	6.09809027777777\\
36	5.59172453703704\\
37	5.04918981481482\\
38	4.64409722222223\\
39	4.34027777777777\\
40	3.96412037037038\\
41	3.68923611111111\\
42	3.45775462962962\\
43	3.23350694444446\\
44	2.97309027777777\\
45	2.80671296296296\\
46	2.64033564814814\\
47	2.51736111111111\\
48	2.38715277777777\\
49	2.31481481481482\\
50	2.20630787037035\\
51	2.11950231481481\\
52	2.01822916666667\\
53	1.9675925925926\\
54	1.90248842592592\\
55	1.85185185185186\\
56	1.81568287037038\\
57	1.7722800925926\\
58	1.69270833333334\\
59	1.62037037037037\\
60	1.53356481481482\\
61	1.51186342592592\\
62	1.45399305555554\\
63	1.42505787037038\\
64	1.38888888888887\\
65	1.30208333333333\\
66	1.28038194444446\\
67	1.23697916666667\\
68	1.16464120370371\\
69	1.08506944444444\\
70	1.07783564814814\\
71	1.04890046296296\\
72	0.998263888888886\\
73	0.962094907407405\\
74	0.896990740740733\\
75	0.868055555555557\\
76	0.846354166666671\\
77	0.824652777777771\\
78	0.824652777777771\\
79	0.817418981481467\\
80	0.824652777777771\\
81	0.752314814814795\\
82	0.774016203703695\\
83	0.752314814814838\\
84	0.737847222222229\\
85	0.716145833333329\\
86	0.694444444444457\\
87	0.65827546296299\\
88	0.629340277777771\\
89	0.593171296296291\\
90	0.593171296296305\\
91	0.564236111111114\\
92	0.528067129629633\\
93	0.528067129629633\\
94	0.506365740740748\\
95	0.499131944444443\\
96	0.484664351851848\\
97	0.462962962962962\\
98	0.462962962962962\\
99	0.426793981481481\\
100	0.390625\\
};
\addlegendentry{IcTKM-DCT,  3.3:1}

\addplot [color=red, dashed, mark=x, mark repeat=6, mark options={solid, red}]
  table[row sep=crcr]{%
1	100\\
2	100\\
3	100\\
4	100\\
5	100\\
6	100\\
7	100\\
8	100\\
9	100\\
10	99.9927662037037\\
11	99.9782986111111\\
12	99.9565972222222\\
13	99.8336226851852\\
14	99.5587384259259\\
15	98.5966435185185\\
16	96.4554398148148\\
17	92.650462962963\\
18	86.451099537037\\
19	78.7760416666667\\
20	66.9704861111111\\
21	56.958912037037\\
22	47.4537037037037\\
23	39.966724537037\\
24	33.1741898148148\\
25	27.6403356481481\\
26	22.9166666666667\\
27	19.3648726851852\\
28	16.4713541666667\\
29	14.0480324074074\\
30	12.1238425925926\\
31	10.619212962963\\
32	9.375\\
33	8.26099537037037\\
34	7.51591435185185\\
35	6.6767939814815\\
36	6.22106481481481\\
37	5.57725694444443\\
38	5.18663194444443\\
39	4.75260416666667\\
40	4.45601851851852\\
41	4.21006944444441\\
42	3.95688657407408\\
43	3.74710648148147\\
44	3.53732638888889\\
45	3.38541666666667\\
46	3.16116898148147\\
47	3.00202546296296\\
48	2.83564814814814\\
49	2.66927083333334\\
50	2.5390625\\
51	2.43055555555557\\
52	2.32204861111113\\
53	2.19907407407408\\
54	2.08333333333333\\
55	2.03269675925925\\
56	1.93142361111111\\
57	1.87355324074075\\
58	1.81568287037037\\
59	1.73611111111111\\
60	1.64930555555554\\
61	1.59143518518518\\
62	1.51909722222223\\
63	1.44675925925927\\
64	1.38888888888887\\
65	1.33101851851853\\
66	1.26591435185185\\
67	1.27314814814815\\
68	1.23697916666666\\
69	1.17910879629628\\
70	1.11400462962962\\
71	1.04166666666666\\
72	0.983796296296291\\
73	0.947627314814824\\
74	0.933159722222229\\
75	0.896990740740748\\
76	0.904224537037038\\
77	0.875289351851862\\
78	0.831886574074076\\
79	0.774016203703709\\
80	0.737847222222214\\
81	0.701678240740733\\
82	0.679976851851848\\
83	0.636574074074076\\
84	0.593171296296291\\
85	0.593171296296291\\
86	0.564236111111114\\
87	0.535300925925924\\
88	0.542534722222229\\
89	0.506365740740748\\
90	0.528067129629619\\
91	0.484664351851848\\
92	0.441261574074076\\
93	0.434027777777771\\
94	0.397858796296305\\
95	0.412326388888886\\
96	0.405092592592595\\
97	0.405092592592595\\
98	0.368923611111128\\
99	0.368923611111128\\
100	0.361689814814824\\
};
\addlegendentry{IcTKM-CRT,  2.5:1}

\addplot [color=mycolor5, dashed, mark=o, mark repeat=6, mark options={solid, mycolor5}]
  table[row sep=crcr]{%
1	100\\
2	100\\
3	100\\
4	100\\
5	100\\
6	100\\
7	100\\
8	100\\
9	100\\
10	100\\
11	100\\
12	100\\
13	100\\
14	100\\
15	100\\
16	100\\
17	100\\
18	100\\
19	100\\
20	100\\
21	100\\
22	100\\
23	99.9927662037037\\
24	99.9927662037037\\
25	99.9855324074074\\
26	99.9710648148148\\
27	99.8987268518518\\
28	99.7829861111111\\
29	99.5659722222222\\
30	99.2404513888889\\
31	98.4302662037037\\
32	96.6941550925926\\
33	94.4372106481482\\
34	90.4730902777778\\
35	84.1435185185185\\
36	76.3237847222222\\
37	68.091724537037\\
38	58.8975694444444\\
39	48.828125\\
40	39.8365162037037\\
41	33.2103587962963\\
42	27.0978009259259\\
43	22.1643518518519\\
44	18.7644675925926\\
45	15.7696759259259\\
46	13.1365740740741\\
47	11.2847222222222\\
48	9.9537037037037\\
49	8.65162037037038\\
50	7.76909722222223\\
51	6.87210648148147\\
52	6.13425925925925\\
53	5.57002314814814\\
54	5.12152777777777\\
55	4.71643518518518\\
56	4.36197916666667\\
57	4.15219907407408\\
58	3.86284722222223\\
59	3.64583333333333\\
60	3.49392361111111\\
61	3.29137731481481\\
62	3.05989583333333\\
63	3.02372685185186\\
64	2.92245370370371\\
65	2.81394675925925\\
66	2.67650462962962\\
67	2.53182870370371\\
68	2.40885416666669\\
69	2.36545138888889\\
70	2.25694444444444\\
71	2.24971064814815\\
72	2.14120370370372\\
73	2.09780092592592\\
74	2.03269675925928\\
75	1.93865740740739\\
76	1.88802083333334\\
77	1.85185185185185\\
78	1.7650462962963\\
79	1.67824074074072\\
80	1.63483796296296\\
81	1.5697337962963\\
82	1.46846064814815\\
83	1.41782407407408\\
84	1.38888888888889\\
85	1.27314814814814\\
86	1.23697916666669\\
87	1.20804398148147\\
88	1.19357638888889\\
89	1.21527777777779\\
90	1.1501736111111\\
91	1.11400462962963\\
92	1.09953703703702\\
93	1.08506944444444\\
94	1.04890046296298\\
95	0.998263888888872\\
96	0.9765625\\
97	0.933159722222229\\
98	0.911458333333329\\
99	0.875289351851848\\
100	0.868055555555543\\
};
\addlegendentry{IcTKM-CRT,  3.3:1}

\addplot [color=black, mark=diamond, mark repeat=6, mark options={solid, black}]
  table[row sep=crcr]{%
1	100\\
2	100\\
3	100\\
4	100\\
5	99.9855324074074\\
6	98.3434606481482\\
7	87.1021412037037\\
8	70.580150462963\\
9	55.2155671296296\\
10	43.0483217592593\\
11	33.9337384259259\\
12	27.1990740740741\\
13	22.2873263888889\\
14	18.7138310185185\\
15	16.0011574074074\\
16	13.7586805555556\\
17	12.0081018518519\\
18	10.8362268518519\\
19	9.29542824074075\\
20	8.1958912037037\\
21	7.32060185185185\\
22	6.68402777777777\\
23	6.19936342592594\\
24	5.63512731481481\\
25	5.18663194444444\\
26	4.78153935185185\\
27	4.49942129629629\\
28	4.17390046296296\\
29	3.84837962962962\\
30	3.65306712962963\\
31	3.47222222222221\\
32	3.34201388888889\\
33	3.24797453703704\\
34	3.1177662037037\\
35	2.97309027777776\\
36	2.85734953703702\\
37	2.75607638888889\\
38	2.65480324074073\\
39	2.59693287037037\\
40	2.51012731481482\\
41	2.46672453703704\\
42	2.39438657407408\\
43	2.34374999999999\\
44	2.31481481481481\\
45	2.21354166666666\\
46	2.14120370370371\\
47	2.13396990740742\\
48	2.06163194444444\\
49	2.02546296296298\\
50	1.98929398148148\\
51	1.93142361111111\\
52	1.85185185185188\\
53	1.7722800925926\\
54	1.7722800925926\\
55	1.72164351851853\\
56	1.69270833333334\\
57	1.65653935185186\\
58	1.59143518518518\\
59	1.59143518518518\\
60	1.56973379629629\\
61	1.5552662037037\\
62	1.5407986111111\\
63	1.51186342592592\\
64	1.48292824074073\\
65	1.45399305555554\\
66	1.45399305555554\\
67	1.43952546296295\\
68	1.43952546296295\\
69	1.43952546296295\\
70	1.41782407407406\\
71	1.39612268518516\\
72	1.39612268518516\\
73	1.3527199074074\\
74	1.32378472222223\\
75	1.28038194444443\\
76	1.28038194444443\\
77	1.26591435185183\\
78	1.25144675925925\\
79	1.19357638888889\\
80	1.16464120370371\\
81	1.14293981481482\\
82	1.13570601851853\\
83	1.12123842592592\\
84	1.10677083333333\\
85	1.10677083333333\\
86	1.09230324074073\\
87	1.09230324074073\\
88	1.07783564814814\\
89	1.06336805555556\\
90	1.06336805555556\\
91	1.06336805555556\\
92	1.06336805555556\\
93	1.04890046296296\\
94	1.03443287037038\\
95	1.03443287037038\\
96	1.03443287037038\\
97	1.01996527777777\\
98	1.00549768518518\\
99	1.00549768518518\\
100	0.998263888888886\\
};
\addlegendentry{ITKrM}

\addplot [color=mycolor6, line width=1.0pt]
  table[row sep=crcr]{%
0	0.998263888888886\\
100	0.998263888888886\\
};
\addlegendentry{99.0\% recovery}

\end{axis}
\end{tikzpicture}%
} 
\subfloat[$\sparsity = O(1)$]
{
\label{subfig:num_iteration_sparsity_2_d_1024} 
% This file was created by matlab2tikz.
%
%The latest updates can be retrieved from
%  http://www.mathworks.com/matlabcentral/fileexchange/22022-matlab2tikz-matlab2tikz
%where you can also make suggestions and rate matlab2tikz.
%
\definecolor{mycolor1}{rgb}{0.00000,0.50000,0.40000}%
\definecolor{mycolor2}{rgb}{0.04762,0.52381,0.40000}%
\definecolor{mycolor3}{rgb}{0.23810,0.61905,0.40000}%
\definecolor{mycolor4}{rgb}{0.00000,0.04762,0.97619}%
\definecolor{mycolor5}{rgb}{0.00000,0.23810,0.88095}%
\definecolor{mycolor6}{rgb}{1.00000,0.04762,0.00000}%
\definecolor{mycolor7}{rgb}{1.00000,0.23810,0.00000}%
\definecolor{mycolor8}{rgb}{1.00000,1.00000,0.00000}%
% [inline block 0: 1 envs, 23472 chars -> data_tex | \begin{tikzpicture} ...]
%
}
\caption{Dictionary recovery rates with increasing compression ratios.}
\label{fig:recovery_rates}
\end{figure}

\subsubsection{Recovery time}
In \Cref{fig:recovery_time} we evaluate the dictionary recovery time attained for synthetic training signals of ambient dimension $\ddim = 1,024$ and intrinsic dimension $\tilde \ddim =\ddim$ with \ac{itkrm}, and \ac{ictkm} using the \ac{dft}, \ac{dct}, and \ac{crt} as \ac{jl} embedding and increasing compression ratios. The solid yellow line again marks the highest recovery rate achieved with \ac{itkrm} in the experiment. As predicted by our theoretical results, we can attain much better improvement in the computational complexity of \ac{ictkm} when using reduced sparsity levels, compare the results with $\sparsity = O(\sqrt{\ddim})$ in \Cref{subfig:cpu_time_sparsity_1_d_1024} and $\sparsity = O(1)$ in \Cref{subfig:cpu_time_sparsity_2_d_1024}. In particular, for the higher sparsity levels, \ac{ictkm} with the \ac{dct} and a compression ratio of $2.5:1$ requires $9.02$ hours to attain the $99\%$ recovery rate compared to $18.44$ hours required by \ac{itkrm}, or a $2.04$ speed-up in dictionary recovery time. For the lower sparsity levels, on the other hand, the \ac{dct} with a compression ratio of $5:1$ required $2.98$ hours to attain the $94.7\%$ recovery rate compared to $16.3$ hours required by \ac{itkrm}, or a $5.47$ speed-up ratio. Lastly, note that although the \ac{dft} at a given compression ratio usually requires less iterations to recover the dictionary than the \ac{dct} and \ac{crt} at same compression ratio, compare the results in \Cref{fig:recovery_rates}, the recovery time results do not reflect this faster convergence rate. We can see from \Cref{fig:recovery_time} that the \ac{dft} is usually slower than the other transforms to recover the dictionary. The reason for the worse performance is that the matrix product $\cdico^{*}\ctrainmatrix$ in the \ac{dft} has to be computed with complex numbers, thus requiring twice the amount of arithmetic operations than the \ac{dct} and \ac{crt}.

\begin{figure}[htp]
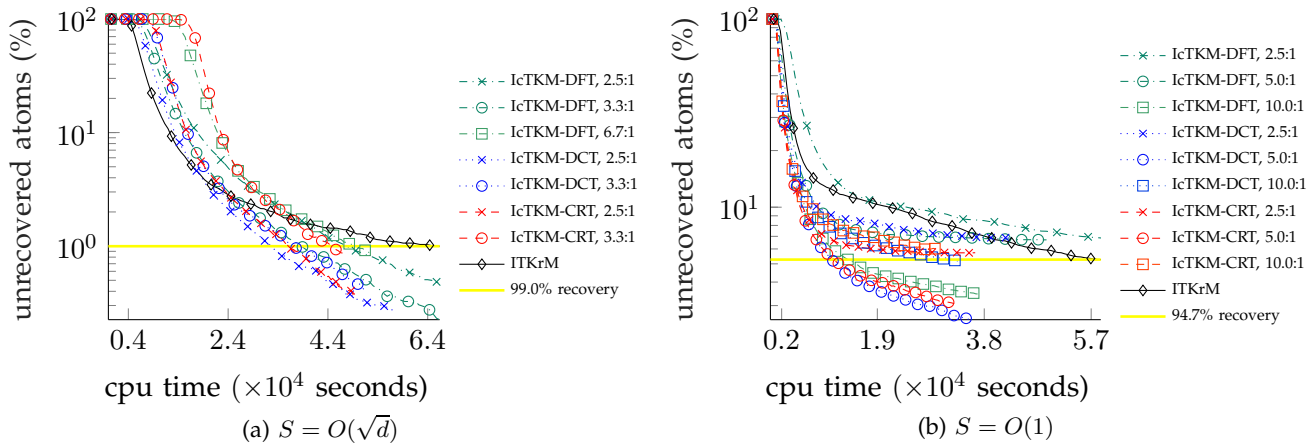
  
\setlength\figureheight{\defaultheight} 
\setlength\figurewidth{\defaultwidth}
\centering  

\captionsetup[sub]{singlelinecheck=off,margin=\defaultsubcaptionmargin}
\subfloat[$\sparsity = O(\sqrt{\ddim})$]
{\label{subfig:cpu_time_sparsity_1_d_1024} 
% This file was created by matlab2tikz.
%
%The latest updates can be retrieved from
%  http://www.mathworks.com/matlabcentral/fileexchange/22022-matlab2tikz-matlab2tikz
%where you can also make suggestions and rate matlab2tikz.
%
\definecolor{mycolor1}{rgb}{0.00000,0.50000,0.40000}%
\definecolor{mycolor2}{rgb}{0.04762,0.52381,0.40000}%
\definecolor{mycolor3}{rgb}{0.23810,0.61905,0.40000}%
\definecolor{mycolor4}{rgb}{0.00000,0.04762,0.97619}%
\definecolor{mycolor5}{rgb}{1.00000,0.04762,0.00000}%
\definecolor{mycolor6}{rgb}{1.00000,1.00000,0.00000}%
% [inline block 1: 1 envs, 29672 chars -> data_tex | \begin{tikzpicture} ...]
%
} 
\subfloat[$\sparsity = O(1)$]
{
\label{subfig:cpu_time_sparsity_2_d_1024} 
% This file was created by matlab2tikz.
%
%The latest updates can be retrieved from
%  http://www.mathworks.com/matlabcentral/fileexchange/22022-matlab2tikz-matlab2tikz
%where you can also make suggestions and rate matlab2tikz.
%
\definecolor{mycolor1}{rgb}{0.00000,0.50000,0.40000}%
\definecolor{mycolor2}{rgb}{0.04762,0.52381,0.40000}%
\definecolor{mycolor3}{rgb}{0.23810,0.61905,0.40000}%
\definecolor{mycolor4}{rgb}{0.00000,0.04762,0.97619}%
\definecolor{mycolor5}{rgb}{0.00000,0.23810,0.88095}%
\definecolor{mycolor6}{rgb}{1.00000,0.04762,0.00000}%
\definecolor{mycolor7}{rgb}{1.00000,0.23810,0.00000}%
\definecolor{mycolor8}{rgb}{1.00000,1.00000,0.00000}%
% [inline block 2: 1 envs, 36789 chars -> data_tex | \begin{tikzpicture} ...]
%
} 
\caption{Dictionary recovery time with increasing compression ratios.}
\label{fig:recovery_time}
\end{figure}

\subsubsection{Scalability}

In \Cref{fig:scalability} we evaluate the scalability of dictionary recovery with ambient dimension for \ac{itkrm}, and \ac{ictkm} using the \ac{dft}, \ac{dct}, and \ac{crt} as \ac{jl} embedding. Synthetic signals with ambient dimension ranging from $\ddim = 2,048$ up to $\ddim = 131,072$ have been used in this experiment. To carry out the learning process with these high-dimensional signals, we have fixed $\natoms = (3/2)\tilde{\ddim}$ with $\tilde{\ddim} = 1,024$ across all the ambient dimensions tested, so that we could avoid the large dictionary memory overhead, e.g., in the highest dimension setting $\natoms = (3/2) \ddim$ would have required more than $200$ gigabytes of volatile memory to manipulate the dictionary matrix in double-precision floating-point representation. Similar to before, we compare the time needed to recover $99\%$ of all atoms for $\sparsity = O({\tilde \ddim}^{1/2})$ and $94.7\%$ of all atoms for $\sparsity = O(1)$. We can see from the results that \ac{ictkm} performs particularly well on high-dimensional signals. For the higher sparsity levels $\sparsity = O({\tilde \ddim}^{1/2})$ in \Cref{subfig:scalability_sparsity_1}, \ac{ictkm} with the \ac{dct} and a compression ratio of $2.5:1$ is $2.28 \times$ faster than \ac{itkrm} to recover the dictionary at the highest dimension tested, while the \ac{dft} and \ac{crt} are roughly $1.56 \times$ faster. For the lower sparsity levels $\sparsity = O(1)$ in \Cref{subfig:scalability_sparsity_2}, on the other hand, \ac{ictkm} performs significantly faster than \ac{itkrm}. In particular, \ac{ictkm} with the \ac{dct} and a compression ratio of $5:1$ is almost $10 \times$ faster than \ac{itkrm} to recover the dictionary for signals with $\ddim = 131,072$.

\begin{figure}[htp]  
\setlength\figureheight{\defaultheight} 
\setlength\figurewidth{\defaultwidth}
\centering  

\captionsetup[sub]{singlelinecheck=off,margin=\defaultsubcaptionmargin}
\subfloat[$\sparsity = O(\tilde{\ddim}^{1/2})$]
{
\label{subfig:scalability_sparsity_1} 
% This file was created by matlab2tikz.
%
%The latest updates can be retrieved from
%  http://www.mathworks.com/matlabcentral/fileexchange/22022-matlab2tikz-matlab2tikz
%where you can also make suggestions and rate matlab2tikz.
%
\definecolor{mycolor1}{rgb}{0.04762,0.52381,0.40000}%
\begin{tikzpicture}

\begin{axis}[%
tick scale binop=\times,
width=0.736\figurewidth,
height=\figureheight,
at={(0\figurewidth,0\figureheight)},
scale only axis,
xmode=log,
xmin=1024,
xmax=131072,
xtick={1024,10000,100000},
xticklabels={{$\text{10}^{\text{3}}$},{$\text{10}^{\text{4}}$},{$\text{10}^{\text{5}}$}},
xminorticks=true,
minor xtick={1500,3000,4500,6000,7500,10000,15000,20000,30000,40000,50000,100000},
xlabel style={font=\color{white!15!black}},
xlabel={ambient dimension},
ymin=-50000,
ymax=6000000,
ylabel style={font=\color{white!15!black}},
ylabel={cpu time (seconds)},
axis background/.style={fill=white},
axis x line*=bottom,
axis y line*=left,
legend style={at={(1.03,0.5)}, anchor=west, legend cell align=left, align=left, fill=none, draw=none},
legend style={font=\tiny}
]
\addlegendimage{legend image code/.code=}  
\addlegendentry{\\}
\addplot [color=mycolor1, dashdotted, mark=o, mark options={solid, mycolor1}]
  table[row sep=crcr]{%
2048	71000.8291666667\\
4096	111398.241166667\\
8192	188689.6825\\
16384	366281.617166667\\
24576	546472.857333333\\
32768	768980.428333333\\
49152	1164324.7165\\
65536	1684122.83416667\\
98304	2647040.18333333\\
131072	3281723.95833333\\
};
\addlegendentry{IcTKM-DFT,  3.3:1}

\addplot [color=blue, dotted, mark=x, mark options={solid, blue}]
  table[row sep=crcr]{%
2048	43782.7480666667\\
4096	70516.5991333333\\
8192	123777.082733333\\
16384	231728.8036\\
24576	356041.1954\\
32768	493274.167933333\\
49152	789722.265733333\\
65536	1127092.163\\
98304	1715818.85846667\\
131072	2252954.83426667\\
};
\addlegendentry{IcTKM-DCT,  2.5:1}

\addplot [color=red, dashed, mark=x, mark options={solid, red}]
  table[row sep=crcr]{%
2048	51892.608\\
4096	86120.2128\\
8192	153693.1152\\
16384	288265.7592\\
24576	441104.4288\\
32768	612740.0736\\
49152	1098538.8672\\
65536	1365274.6512\\
98304	2105582.3016\\
131072	3078257.9352\\
};
\addlegendentry{IcTKM-CRT,  2.5:1}

\addplot [color=black, mark=diamond, mark options={solid, black}]
  table[row sep=crcr]{%
2048	104682.806666667\\
4096	173126.32\\
8192	312605.406666667\\
16384	588158.81\\
24576	885509.116666667\\
32768	1202559.22\\
49152	1857778.36666667\\
65536	2494925.13\\
98304	3772313.88333333\\
131072	5137361.53666667\\
};
\addlegendentry{ITKrM}

\end{axis}
\end{tikzpicture}%
} 
\subfloat[$\sparsity = O(1)$]
{
\label{subfig:scalability_sparsity_2} 
% This file was created by matlab2tikz.
%
%The latest updates can be retrieved from
%  http://www.mathworks.com/matlabcentral/fileexchange/22022-matlab2tikz-matlab2tikz
%where you can also make suggestions and rate matlab2tikz.
%
\definecolor{mycolor1}{rgb}{0.23810,0.61905,0.40000}%
\definecolor{mycolor2}{rgb}{0.00000,0.04762,0.97619}%
\definecolor{mycolor3}{rgb}{1.00000,0.04762,0.00000}%
\begin{tikzpicture}

\begin{axis}[%
tick scale binop=\times,
width=0.734\figurewidth,
height=\figureheight,
at={(0\figurewidth,0\figureheight)},
scale only axis,
xmode=log,
xmin=1024,
xmax=131072,
xtick={1024,10000,100000},
xticklabels={{$\text{10}^{\text{3}}$},{$\text{10}^{\text{4}}$},{$\text{10}^{\text{5}}$}},
xminorticks=true,
minor xtick={1500,3000,4500,6000,7500,10000,15000,20000,30000,40000,50000,100000},
xlabel style={font=\color{white!15!black}},
xlabel={ambient dimension},
ymin=-50000,
ymax=4000000,
ylabel style={font=\color{white!15!black}},
ylabel={cpu time (seconds)},
axis background/.style={fill=white},
axis x line*=bottom,
axis y line*=left,
legend style={at={(1.03,0.5)}, anchor=west, legend cell align=left, align=left, fill=none, draw=none},
legend style={font=\tiny}
]
\addlegendimage{legend image code/.code=}  
\addlegendentry{\\}
\addplot [color=mycolor1, dashdotted, mark=square, mark options={solid, mycolor1}]
  table[row sep=crcr]{%
2048	16822.543\\
4096	36610.2906\\
8192	58979.7341333333\\
16384	98633.465\\
24576	137846.6454\\
32768	176939.436733333\\
49152	256448.108466667\\
65536	345284.755466667\\
98304	521310.5544\\
131072	726566.815933333\\
};
\addlegendentry{IcTKM-DFT, 10.0:1}

\addplot [color=mycolor2, dotted, mark=o, mark options={solid, mycolor2}]
  table[row sep=crcr]{%
2048	12637.7813666667\\
4096	21926.0220333333\\
8192	35601.4178333333\\
16384	61253.2182666667\\
24576	86588.456\\
32768	109809.211633333\\
49152	160361.625666667\\
65536	216483.610266667\\
98304	328181.8007\\
131072	438990.9938\\
};
\addlegendentry{IcTKM-DCT,  5.0:1}

\addplot [color=mycolor3, dashed, mark=o, mark options={solid, mycolor3}]
  table[row sep=crcr]{%
2048	13688.382\\
4096	24187.0764\\
8192	40864.4724\\
16384	71176.9884\\
24576	98893.8396\\
32768	127778.4312\\
49152	187145.8008\\
65536	254763.1056\\
98304	386396.1108\\
131072	518110.4064\\
};
\addlegendentry{IcTKM-CRT,  5.0:1}

\addplot [color=black, mark=diamond, mark options={solid, black}]
  table[row sep=crcr]{%
2048	85832.45\\
4096	141957.603333333\\
8192	254726.263333333\\
16384	467920.146666667\\
24576	671109.716666667\\
32768	883394.776666666\\
49152	1298444.76\\
65536	1731548.78333333\\
98304	2578560.36\\
131072	3422511.24333333\\
};
\addlegendentry{ITKrM}

\end{axis}
\end{tikzpicture}%
} 
\caption{Scalability of dictionary recovery time with ambient dimension and compression ratio.}
\label{fig:scalability}
\end{figure}
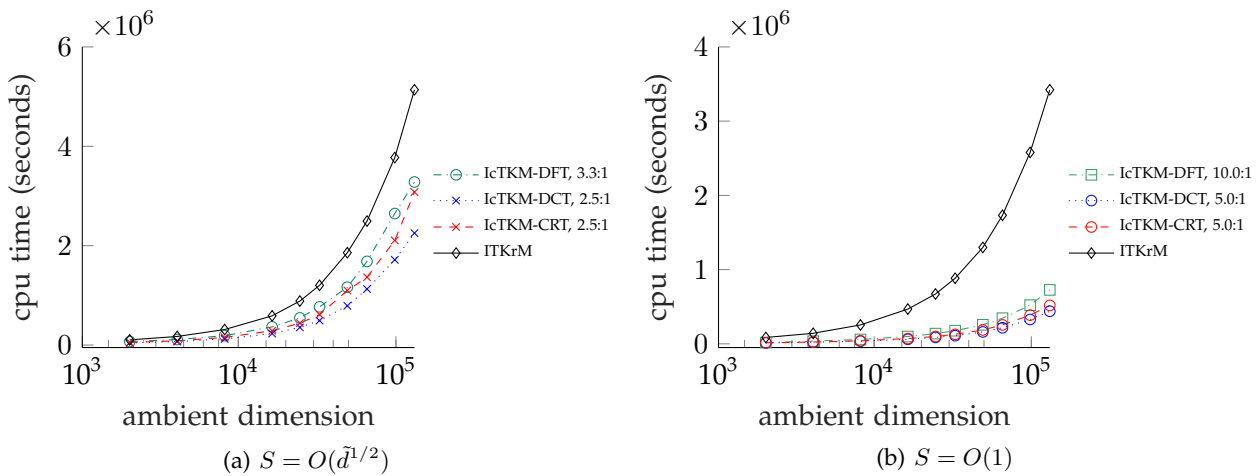

\subsection{Audio data}

For the real data we have selected several recordings comprised of stereo audio signals sampled at $44.1$ KHz. Details for the audio training data and the recovery simulations are described next. 

\textbf{Audio recordings.} We have used three audio recordings of roughly 10 minutes each for carrying out the simulations. The recordings represent distinct musical genres: classical, folk Japanese and Flamenco musical pieces, which have been obtained from RWC's classical and music genre databases. The classical piece is the first movement of a piano sonata by Mozart. The folk Japanese piece are \emph{min'y{\=o}} traditional songs comprised of female vocal, \emph{shamisen} (three-stringed instrument), \emph{shakuhachi} (an end-blown long flute), \emph{shinobue} (a high-pitched short flute), and traditional percussion instruments. The Flamenco piece is solely comprised of male vocal, and guitar which also acts as a percussive instrument. \\
\indent \textbf{Block size/overlap.} We have first summed the audio signals to mono, and then partitioned the resulting signals into smaller blocks. Short duration blocks of $0.25$ seconds and long ones of $1$ second have been used. The blocks were allowed to overlap such that the maximally allowed amount of overlap of one block with a shifted version of itself varied from $95 \%$ for the short block, up to $98.75 \%$ for the long block. \\
\indent \textbf{Training signals.} The dictionaries have been learned directly from the time-domain samples of our musical recordings, with each audio block assigned to one training signal. The short and long blocks amount to training signals with ambient dimension of $\ddim = 11,025$ and $\ddim = 44,100$, respectively. The number of training signals for the three audio recordings were approximately $\nsig=48,000$ for the classical piece, $\nsig=42,000$ for the folk Japanese, and $\nsig=59,000$ for the Flamenco piece. \\
\indent \textbf{Learning parameters.} We have carried out the learning simulations with two dictionary sizes of $\natoms = 64$ and $\natoms = 256$ atoms, and the sparsity level was fixed at $\sparsity = 4$. To learn dictionaries on the audio data, we ran $200$ iterations of \ac{ictkm} with a \ac{dct} based \ac{jl} embedding and a compression ratio of $5:1$. 

Next, we will explore the ability of \ac{ictkm} to learn audio dictionaries for extracting notes of the musical recordings. We will also take a look at how increased ambient dimensions can be used to improve the tone quality of the audio representations.

\subsubsection{Extracting musical notes}

In \Cref{fig:dictionary_spectra_classical_piano,fig:dictionary_spectra_folk_japanese,fig:dictionary_spectra_flamenco} we evaluate the magnitude spectra of the recovered atoms for the classical piano, folk Japanese, and Flamenco recordings. The learning simulations have been carried out with short duration blocks of $0.25$ seconds, and the atoms in the learned dictionaries have been sorted by their fundamental frequency. We can see from the results that the larger dictionary is able to capture more musical notes than the smaller one. In particular, for the smaller dictionary we have identified $26$ unique fundamental frequencies in the range $[108, 992]$ Hz for the classical piano recording, and $24$ frequencies in $[92, 440]$ Hz for the Flamenco recording. For the larger dictionary, on the other hand, we have $55$ unique fundamental frequencies in the range $[108, 1408]$ Hz for the classical piano, and $56$ frequencies in $[88, 524]$ Hz for the Flamenco recording. These unique fundamental frequencies correspond to notes of the Western equally tempered 12 tone scale found in the recordings. For the folk Japanese recording we have found more notes in a larger frequency range; $31$ unique fundamental frequencies in the range $[132, 1584]$ Hz for the smaller dictionary, and $98$ frequencies in $[128, 1784]$ Hz for the larger dictionary. We can further see from the results that the learned dictionaries sometimes have multiple atoms with same fundamental frequency, but these equally pitched atoms usually differ in their harmonic structure. 

\begin{figure}[htp]  
\setlength\figureheight{\defaultheight} 
\setlength\figurewidth{\defaultwidth}
\centering  
\captionsetup[sub]{singlelinecheck=off,margin=\defaultsubcaptionmargin}
\subfloat[$\natoms = 64$]
{
\label{subfig:1} 
% This file was created by matlab2tikz.
%
%The latest updates can be retrieved from
%  http://www.mathworks.com/matlabcentral/fileexchange/22022-matlab2tikz-matlab2tikz
%where you can also make suggestions and rate matlab2tikz.
%
\begin{tikzpicture}

\begin{axis}[%
width=0.866\figurewidth,
height=\figureheight,
at={(0\figurewidth,0\figureheight)},
scale only axis,
point meta min=-67.4154038692758,
point meta max=37.2758523669715,
axis on top,
xmin=0.5,
xmax=64.5,
xlabel style={font=\color{white!15!black}},
xlabel={Atom number},
ymin=0.5,
ymax=1002.5,
xtick={16,32,48,64},
ytick={1,221.48,441.96,662.44,882.92,1103.4,1213.64,1323.88,1434.12,1544.36,1654.6,1764.84,1875.08,1985.32,2095.56,2205.8,2316.04,2426.28,2536.52,2646.76,2757,2867.24,2977.48,3087.72,3197.96,3308.2,3418.44,3528.68,3638.92,3749.16,3859.4,3969.64,4079.88,4190.12,4300.36,4410.6,4520.84,4631.08,4741.32,4851.56,4961.8,5072.04,5182.28,5292.52,5402.76,5513},
yticklabels={{0},{0.8},{1.7},{2.6},{3.5},{4410},{4851},{5292},{5733},{6174},{6615},{7056},{7497},{7938},{8379},{8820},{9261},{9702},{10143},{10584},{11025},{11466},{11907},{12348},{12789},{13230},{13671},{14112},{14553},{14994},{15435},{15876},{16317},{16758},{17199},{17640},{18081},{18522},{18963},{19404},{19845},{20286},{20727},{21168},{21609},{22050}},
ylabel style={font=\color{white!15!black}},
ylabel={Frequency (kHz)},
axis background/.style={fill=white},
title style={font=\bfseries},
legend style={font=\tiny},
colormap={mymap}{[1pt] rgb(0pt)=(1,1,1); rgb(63pt)=(0,0,0)},
colorbar,
colorbar style={
			ylabel=(dB),
			ytick={35,10,-15,-40,-65},
		}
]
\addplot [forget plot] graphics [xmin=0.5, xmax=64.5, ymin=0.5, ymax=1002.5] {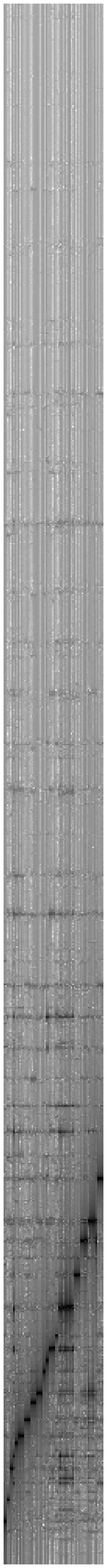};
\end{axis}
\end{tikzpicture}%
} 
\subfloat[$\natoms = 256$]
{
\label{subfig:_2} 
% This file was created by matlab2tikz.
%
%The latest updates can be retrieved from
%  http://www.mathworks.com/matlabcentral/fileexchange/22022-matlab2tikz-matlab2tikz
%where you can also make suggestions and rate matlab2tikz.
%
\begin{tikzpicture}

\begin{axis}[%
width=0.866\figurewidth,
height=\figureheight,
at={(0\figurewidth,0\figureheight)},
scale only axis,
point meta min=-82.2189883914884,
point meta max=37.2871689764193,
axis on top,
xmin=0.5,
xmax=256.5,
xlabel style={font=\color{white!15!black}},
xlabel={Atom number},
ymin=0.5,
ymax=1002.5,
xtick={64,128,192,256},
ytick={1,221.48,441.96,662.44,882.92,1103.4,1213.64,1323.88,1434.12,1544.36,1654.6,1764.84,1875.08,1985.32,2095.56,2205.8,2316.04,2426.28,2536.52,2646.76,2757,2867.24,2977.48,3087.72,3197.96,3308.2,3418.44,3528.68,3638.92,3749.16,3859.4,3969.64,4079.88,4190.12,4300.36,4410.6,4520.84,4631.08,4741.32,4851.56,4961.8,5072.04,5182.28,5292.52,5402.76,5513},
yticklabels={{0},{0.8},{1.7},{2.6},{3.5},{4410},{4851},{5292},{5733},{6174},{6615},{7056},{7497},{7938},{8379},{8820},{9261},{9702},{10143},{10584},{11025},{11466},{11907},{12348},{12789},{13230},{13671},{14112},{14553},{14994},{15435},{15876},{16317},{16758},{17199},{17640},{18081},{18522},{18963},{19404},{19845},{20286},{20727},{21168},{21609},{22050}},
ylabel style={font=\color{white!15!black}},
ylabel={Frequency (kHz)},
axis background/.style={fill=white},
title style={font=\bfseries},
legend style={font=\tiny},
colormap={mymap}{[1pt] rgb(0pt)=(1,1,1); rgb(63pt)=(0,0,0)},
colorbar,
colorbar style={
			ylabel=(dB),
			ytick={35,5,-25,-55,-80},
		}
]
\addplot [forget plot] graphics [xmin=0.5, xmax=256.5, ymin=0.5, ymax=1002.5] {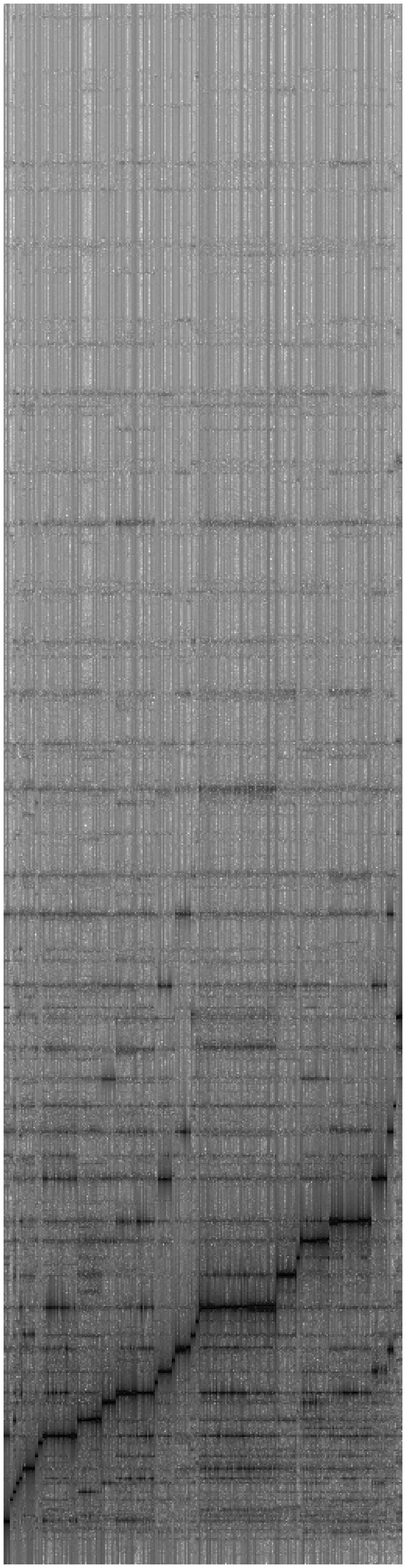};
\end{axis}
\end{tikzpicture}%
} 
\caption{Spectra of recovered dictionaries for the classical piano recording using increasing dictionary sizes.}
\label{fig:dictionary_spectra_classical_piano}
\end{figure}

\begin{figure}[htp]  
\setlength\figureheight{\defaultheight} 
\setlength\figurewidth{\defaultwidth}
\centering  
\captionsetup[sub]{singlelinecheck=off,margin=\defaultsubcaptionmargin}
\subfloat[$\natoms = 64$]
{
\label{subfig:3} 
% This file was created by matlab2tikz.
%
%The latest updates can be retrieved from
%  http://www.mathworks.com/matlabcentral/fileexchange/22022-matlab2tikz-matlab2tikz
%where you can also make suggestions and rate matlab2tikz.
%
\begin{tikzpicture}

\begin{axis}[%
width=0.866\figurewidth,
height=\figureheight,
at={(0\figurewidth,0\figureheight)},
scale only axis,
point meta min=-69.3723567795034,
point meta max=37.211373231432,
axis on top,
xmin=0.5,
xmax=64.5,
xlabel style={font=\color{white!15!black}},
xlabel={Atom number},
ymin=0.5,
ymax=1002.5,
xtick={16,32,48,64},
ytick={1,221.48,441.96,662.44,882.92,1103.4,1213.64,1323.88,1434.12,1544.36,1654.6,1764.84,1875.08,1985.32,2095.56,2205.8,2316.04,2426.28,2536.52,2646.76,2757,2867.24,2977.48,3087.72,3197.96,3308.2,3418.44,3528.68,3638.92,3749.16,3859.4,3969.64,4079.88,4190.12,4300.36,4410.6,4520.84,4631.08,4741.32,4851.56,4961.8,5072.04,5182.28,5292.52,5402.76,5513},
yticklabels={{0},{0.8},{1.7},{2.6},{3.5},{4410},{4851},{5292},{5733},{6174},{6615},{7056},{7497},{7938},{8379},{8820},{9261},{9702},{10143},{10584},{11025},{11466},{11907},{12348},{12789},{13230},{13671},{14112},{14553},{14994},{15435},{15876},{16317},{16758},{17199},{17640},{18081},{18522},{18963},{19404},{19845},{20286},{20727},{21168},{21609},{22050}},
ylabel style={font=\color{white!15!black}},
ylabel={Frequency (kHz)},
axis background/.style={fill=white},
title style={font=\bfseries},
legend style={font=\tiny},
colormap={mymap}{[1pt] rgb(0pt)=(1,1,1); rgb(63pt)=(0,0,0)},
colorbar,
colorbar style={
			ylabel=(dB),
			ytick={35,10,-15,-40,-65},
		}
]
\addplot [forget plot] graphics [xmin=0.5, xmax=64.5, ymin=0.5, ymax=1002.5] {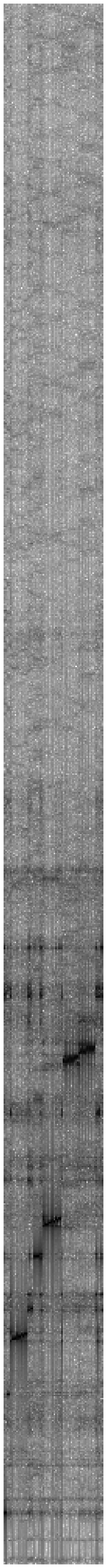};
\end{axis}
\end{tikzpicture}%
} 
\subfloat[$\natoms = 256$]
{
\label{subfig:_4} 
% This file was created by matlab2tikz.
%
%The latest updates can be retrieved from
%  http://www.mathworks.com/matlabcentral/fileexchange/22022-matlab2tikz-matlab2tikz
%where you can also make suggestions and rate matlab2tikz.
%
\begin{tikzpicture}

\begin{axis}[%
width=0.866\figurewidth,
height=\figureheight,
at={(0\figurewidth,0\figureheight)},
scale only axis,
point meta min=-89.7453866946373,
point meta max=37.1694938132869,
axis on top,
xmin=0.5,
xmax=256.5,
xlabel style={font=\color{white!15!black}},
xlabel={Atom number},
ymin=0.5,
ymax=1002.5,
xtick={64,128,192,256},
ytick={1,221.48,441.96,662.44,882.92,1103.4,1213.64,1323.88,1434.12,1544.36,1654.6,1764.84,1875.08,1985.32,2095.56,2205.8,2316.04,2426.28,2536.52,2646.76,2757,2867.24,2977.48,3087.72,3197.96,3308.2,3418.44,3528.68,3638.92,3749.16,3859.4,3969.64,4079.88,4190.12,4300.36,4410.6,4520.84,4631.08,4741.32,4851.56,4961.8,5072.04,5182.28,5292.52,5402.76,5513},
yticklabels={{0},{0.8},{1.7},{2.6},{3.5},{4410},{4851},{5292},{5733},{6174},{6615},{7056},{7497},{7938},{8379},{8820},{9261},{9702},{10143},{10584},{11025},{11466},{11907},{12348},{12789},{13230},{13671},{14112},{14553},{14994},{15435},{15876},{16317},{16758},{17199},{17640},{18081},{18522},{18963},{19404},{19845},{20286},{20727},{21168},{21609},{22050}},
ylabel style={font=\color{white!15!black}},
ylabel={Frequency (kHz)},
axis background/.style={fill=white},
title style={font=\bfseries},
legend style={font=\tiny},
colormap={mymap}{[1pt] rgb(0pt)=(1,1,1); rgb(63pt)=(0,0,0)},
colorbar,
colorbar style={
			ylabel=(dB),
			ytick={35,5,-25,-55,-85},
		}
]
\addplot [forget plot] graphics [xmin=0.5, xmax=256.5, ymin=0.5, ymax=1002.5] {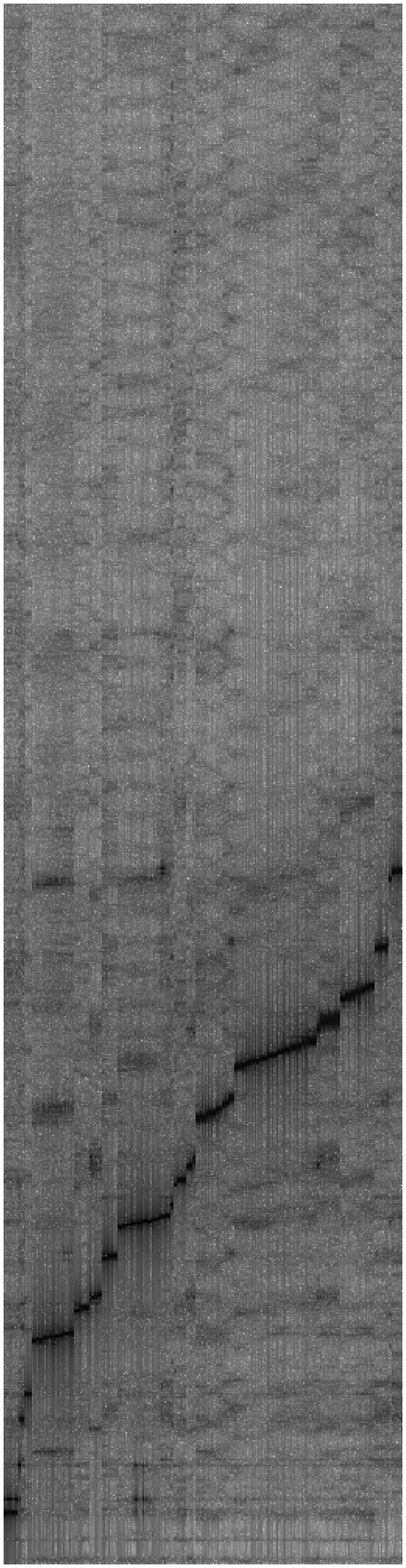};
\end{axis}
\end{tikzpicture}%
} 
\caption{Spectra of recovered dictionaries for the folk Japanese recording using increasing dictionary sizes.}
\label{fig:dictionary_spectra_folk_japanese}
\end{figure}

\begin{figure}[htp]  
\setlength\figureheight{\defaultheight} 
\setlength\figurewidth{\defaultwidth}
\centering  
\captionsetup[sub]{singlelinecheck=off,margin=\defaultsubcaptionmargin}
\subfloat[$\natoms = 64$]
{
\label{subfig:5} 
% This file was created by matlab2tikz.
%
%The latest updates can be retrieved from
%  http://www.mathworks.com/matlabcentral/fileexchange/22022-matlab2tikz-matlab2tikz
%where you can also make suggestions and rate matlab2tikz.
%
\begin{tikzpicture}

\begin{axis}[%
width=0.866\figurewidth,
height=\figureheight,
at={(0\figurewidth,0\figureheight)},
scale only axis,
point meta min=-87.4715073545589,
point meta max=36.8385320128563,
axis on top,
xmin=0.5,
xmax=64.5,
xlabel style={font=\color{white!15!black}},
xlabel={Atom number},
ymin=0.5,
ymax=1002.5,
xtick={16,32,48,64},
ytick={1,221.48,441.96,662.44,882.92,1103.4,1213.64,1323.88,1434.12,1544.36,1654.6,1764.84,1875.08,1985.32,2095.56,2205.8,2316.04,2426.28,2536.52,2646.76,2757,2867.24,2977.48,3087.72,3197.96,3308.2,3418.44,3528.68,3638.92,3749.16,3859.4,3969.64,4079.88,4190.12,4300.36,4410.6,4520.84,4631.08,4741.32,4851.56,4961.8,5072.04,5182.28,5292.52,5402.76,5513},
yticklabels={{0},{0.8},{1.7},{2.6},{3.5},{4410},{4851},{5292},{5733},{6174},{6615},{7056},{7497},{7938},{8379},{8820},{9261},{9702},{10143},{10584},{11025},{11466},{11907},{12348},{12789},{13230},{13671},{14112},{14553},{14994},{15435},{15876},{16317},{16758},{17199},{17640},{18081},{18522},{18963},{19404},{19845},{20286},{20727},{21168},{21609},{22050}},
ylabel style={font=\color{white!15!black}},
ylabel={Frequency (kHz)},
axis background/.style={fill=white},
title style={font=\bfseries},
legend style={font=\tiny},
colormap={mymap}{[1pt] rgb(0pt)=(1,1,1); rgb(63pt)=(0,0,0)},
colorbar,
colorbar style={
			ylabel=(dB),
			ytick={35,5,-25,-55,-85},
		}
]
\addplot [forget plot] graphics [xmin=0.5, xmax=64.5, ymin=0.5, ymax=1002.5] {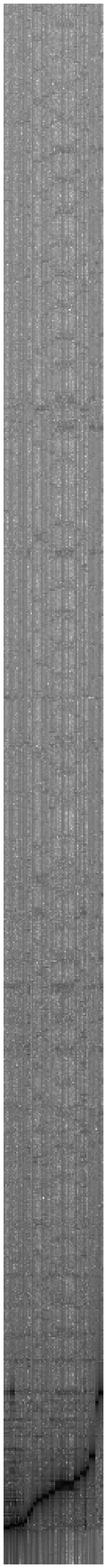};
\end{axis}
\end{tikzpicture}%
} 
\subfloat[$\natoms = 256$]
{
\label{subfig:6} 
% This file was created by matlab2tikz.
%
%The latest updates can be retrieved from
%  http://www.mathworks.com/matlabcentral/fileexchange/22022-matlab2tikz-matlab2tikz
%where you can also make suggestions and rate matlab2tikz.
%
\begin{tikzpicture}

\begin{axis}[%
width=0.866\figurewidth,
height=\figureheight,
at={(0\figurewidth,0\figureheight)},
scale only axis,
point meta min=-82.5021253427667,
point meta max=37.2094852799025,
axis on top,
xmin=0.5,
xmax=256.5,
xlabel style={font=\color{white!15!black}},
xlabel={Atom number},
ymin=0.5,
ymax=1002.5,
ytick={1,221.48,441.96,662.44,882.92,1103.4,1213.64,1323.88,1434.12,1544.36,1654.6,1764.84,1875.08,1985.32,2095.56,2205.8,2316.04,2426.28,2536.52,2646.76,2757,2867.24,2977.48,3087.72,3197.96,3308.2,3418.44,3528.68,3638.92,3749.16,3859.4,3969.64,4079.88,4190.12,4300.36,4410.6,4520.84,4631.08,4741.32,4851.56,4961.8,5072.04,5182.28,5292.52,5402.76,5513},
yticklabels={{0},{0.8},{1.7},{2.6},{3.5},{4410},{4851},{5292},{5733},{6174},{6615},{7056},{7497},{7938},{8379},{8820},{9261},{9702},{10143},{10584},{11025},{11466},{11907},{12348},{12789},{13230},{13671},{14112},{14553},{14994},{15435},{15876},{16317},{16758},{17199},{17640},{18081},{18522},{18963},{19404},{19845},{20286},{20727},{21168},{21609},{22050}},
ylabel style={font=\color{white!15!black}},
ylabel={Frequency (kHz)},
axis background/.style={fill=white},
legend style={font=\tiny},
colormap={mymap}{[1pt] rgb(0pt)=(1,1,1); rgb(63pt)=(0,0,0)},
colorbar,
colorbar style={
			ylabel=(dB),
			ytick={35,5,-25,-55,-80},
		}
]
\addplot [forget plot] graphics [xmin=0.5, xmax=256.5, ymin=0.5, ymax=1002.5] {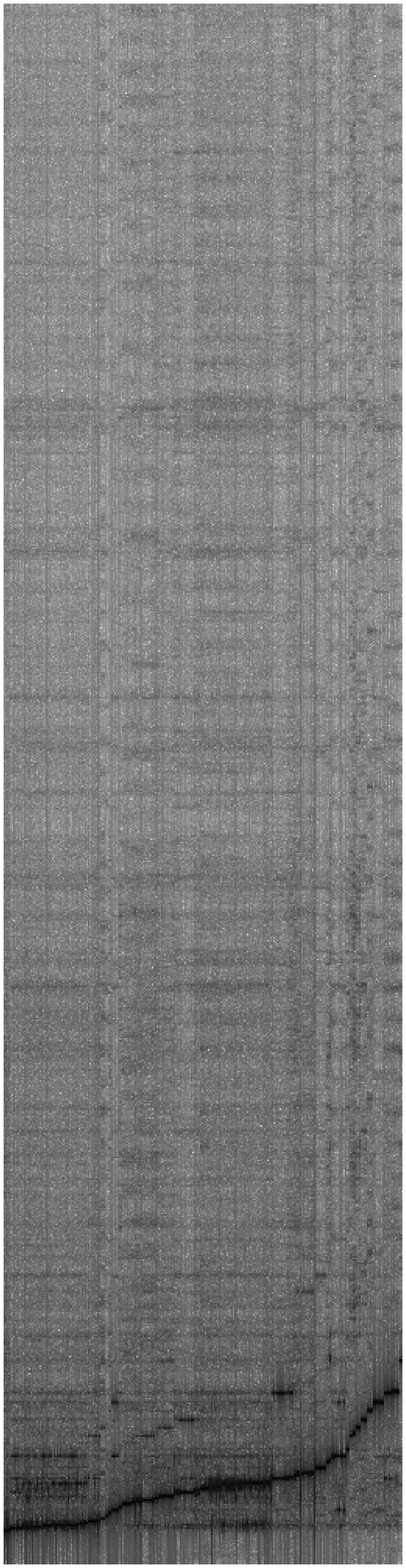};
\end{axis}
\end{tikzpicture}%
}
\caption{Spectra of recovered dictionaries for the Flamenco recording using increasing dictionary sizes.}
\label{fig:dictionary_spectra_flamenco}
\end{figure}

\subsubsection{Tone quality}

The learned dictionaries have been found to possess musical notes with distinct tone quality, and this is a direct consequence of the different musical genres and instruments in the audio training data\footnote{We encourage the reader to listen to the sonification of the learned dictionaries. The dictionary audio files can be found at the provided MATLAB toolbox package.}. In particular, the musical notes found in the dictionary of the classical piece have a distinct piano tone quality, while in the Flamenco piece the notes usually have a mixed tone quality reflecting pitched notes from the guitar/vocals and percussive, un-pitched sounds from tapping the guitar plate and plucking its strings. In the dictionary for the folk Japanese piece the lower-pitched atoms have a distinct drum tone quality while the mid- and high-pitched ones resemble the tone quality of the traditional Japanese flutes. The harmonic content of the female vocal can be found in many atoms of the learned dictionary, which gives them a distinct chorus-like sound quality. 

In \Cref{fig:2} we evaluate the spectrograms for atoms of the folk Japanese dictionary. Dictionaries have been learned with the short and long audio blocks, and their atoms have been similarly sorted by fundamental frequency. \Cref{subfig:5} shows the spectrograms of the atoms number $15$ of the dictionaries learned with short blocks (on the left) and long blocks (on the right). Similarly, \Cref{subfig:6} shows the spectrograms of the atoms number $183$. As can be seen from these figures, the learned dictionaries can extract similar musical notes, but the higher-dimensional training signals promote notes with a much richer harmonic structure. This intricate harmonic structure translates to dictionaries where the individual instruments and vocals in the musical piece can be more easily identified.

\begin{figure}[htp]  
\setlength\figureheight{\defaultheight} 
\setlength\figurewidth{\defaultwidth}
\centering  

\captionsetup[sub]{singlelinecheck=off,margin=\defaultsubcaptionmargin}
\subfloat[$\ppatom_{15}$]
{
\label{subfig:5} 
% This file was created by matlab2tikz.
%
%The latest updates can be retrieved from
%  http://www.mathworks.com/matlabcentral/fileexchange/22022-matlab2tikz-matlab2tikz
%where you can also make suggestions and rate matlab2tikz.
%
\begin{tikzpicture}

\begin{axis}[%
width=0.188\figurewidth,
height=\figureheight,
at={(0\figurewidth,0\figureheight)},
scale only axis,
point meta min=-156.082029428581,
point meta max=-37.6383848559123,
axis on top,
xmin=5,
xmax=250,
xtick={250},
ymin=0,
ymax=6,
ylabel style={font=\color{white!15!black}},
ylabel={Frequency (kHz)},
axis background/.style={fill=white},
legend style={font=\tiny}
]
\addplot [forget plot] graphics [xmin=5, xmax=245, ymin=-0.002, ymax=22.05] {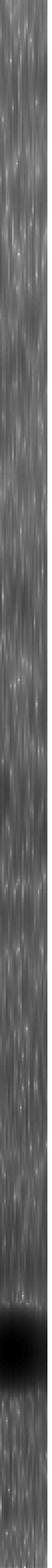};
\end{axis}

\begin{axis}[%
width=0.54\figurewidth,
height=\figureheight,
at={(0.282\figurewidth,0\figureheight)},
scale only axis,
point meta min=-156.512717222255,
point meta max=-40.9868294740555,
axis on top,
xmin=5,
xmax=1000,
xlabel style={font=\color{white!15!black}},
xlabel={Time (ms)},
xtick={250, 500,750,1000},
ymin=0,
ymax=6,
ytick={\empty},
axis background/.style={fill=white},
legend style={font=\tiny},
colormap={mymap}{[1pt] rgb(0pt)=(1,1,1); rgb(63pt)=(0,0,0)},
colorbar,
colorbar style={ylabel=(dB), ytick={-45,-75,-100,-125,-155}}
]
\addplot [forget plot] graphics [xmin=5, xmax=995, ymin=-0.002, ymax=22.05] {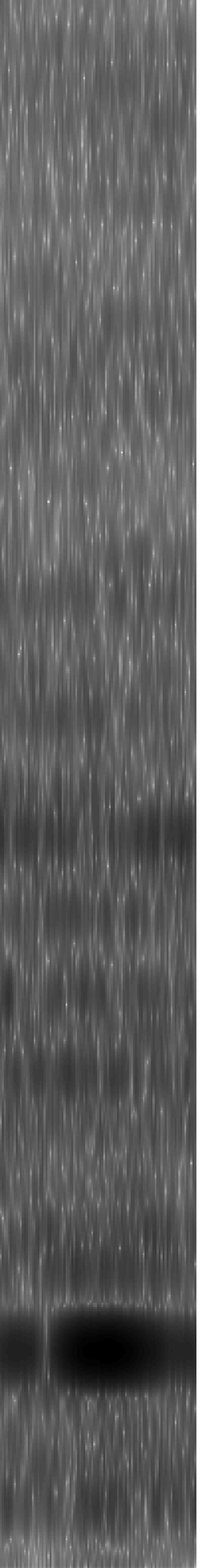};
\end{axis}
\end{tikzpicture}%
} 
\subfloat[$\ppatom_{183}$]
{
\label{subfig:6} 
% This file was created by matlab2tikz.
%
%The latest updates can be retrieved from
%  http://www.mathworks.com/matlabcentral/fileexchange/22022-matlab2tikz-matlab2tikz
%where you can also make suggestions and rate matlab2tikz.
%
\begin{tikzpicture}

\begin{axis}[%
width=0.188\figurewidth,
height=\figureheight,
at={(0\figurewidth,0\figureheight)},
scale only axis,
point meta min=-156.157261756311,
point meta max=-33.844793620919,
axis on top,
xmin=5,
xmax=250,
xtick={250},
ymin=0,
ymax=6,
ylabel style={font=\color{white!15!black}},
ylabel={Frequency (kHz)},
axis background/.style={fill=white},
legend style={font=\tiny}
]
\addplot [forget plot] graphics [xmin=5, xmax=245, ymin=-0.002, ymax=22.05] {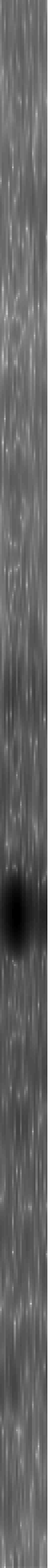};
\end{axis}

\begin{axis}[%
width=0.54\figurewidth,
height=\figureheight,
at={(0.282\figurewidth,0\figureheight)},
scale only axis,
point meta min=-156.527378684222,
point meta max=-36.4402310501979,
axis on top,
xmin=5,
xmax=1000,
xlabel style={font=\color{white!15!black}},
xlabel={Time (ms)},
xtick={250, 500,750,1000},
ymin=0,
ymax=6,
ytick={\empty},
axis background/.style={fill=white},
legend style={font=\tiny},
colormap={mymap}{[1pt] rgb(0pt)=(1,1,1); rgb(63pt)=(0,0,0)},
colorbar,
colorbar style={ylabel=(dB), ytick={-45,-75,-100,-125,-155}}
]
\addplot [forget plot] graphics [xmin=5, xmax=995, ymin=-0.002, ymax=22.05] {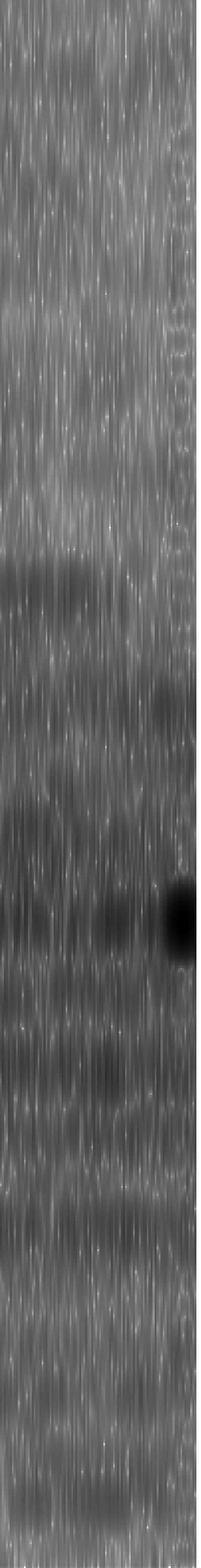};
\end{axis}
\end{tikzpicture}%
}  
\caption{Spectrograms of atoms recovered with increasing ambient dimensions.}
\label{fig:2}
\end{figure}

%%%%%%%%%%%%%%%%%%%%%%%%%%%%%%%%%%%%%%%%%%%%%%%%%%%%
%%%%%%%%%%%%%%%%%%%%%CONCLUSIONS%%%%%%%%%%%%%%%%%%%%
%%%%%%%%%%%%%%%%%%%%%%%%%%%%%%%%%%%%%%%%%%%%%%%%%%%%
\section{Conclusions}\label{sec:conclusions}

We have shown in this work that \ac{ictkm} is a powerful, low-computational cost algorithm for learning dictionaries. Given a recovery error $\targeterror$ and failure probability $O(\jlprobterm \log(1/\targeterror))$, \ac{ictkm} will recover an incoherent generating dictionary from noisy signals with sparsity levels $\sparsity \leq O(\ddim/(\log (\natoms^2/\targeterror) )$ and \ac{snr} of order $O(1)$ in $O(\log(1/\targeterror))$ iterations, as long as the initial dictionary is within a radius $O(1/\sqrt{\max\{S, \log K\}}$ to the generating dictionary, and in each iteration a new batch of $\nsig = O(\targeterror^{-2}\natoms \log(\natoms/\jlprobterm))$ training signals as well as a new random embedding with embedding dimension $\jlembeddingdim$ satisfying $\jlembeddingdim\geq O(\sparsity \log^4(\sparsity) \log^3(\natoms/\jlprobterm))$ is used.
This means that the computational cost of dictionary recovery with \ac{ictkm} can be reduced by a factor $O(\sparsity/\ddim \cdot \log^4(\sparsity) \log^3(\natoms/\jlprobterm))$, and lets us conclude that \ac{ictkm} is an appealing algorithm for learning dictionaries from high-dimensional signals, particularly in learning tasks with heavily-sparse data under controlled noise levels.

We have further demonstrated with numerical experiments that \ac{ictkm} can stably recover dictionaries with low computational cost in a practical setting. For synthetic signals, we had successfully carried out the learning process with high compression ratios, even when low-dimensional data were used. We have also seen that \ac{ictkm} scales quite well with increasing ambient dimensions. For high-dimensional signals with roughly a tenth-of-a-million dimensions, we were able to speed-up the learning process by up to an order of magnitude. Further, \ac{ictkm} has been shown to be a powerful algorithm for learning dictionaries from high-dimensional audio data. The learned dictionaries worked particularly well for extracting notes from large musical pieces. We have further seen that the ability to learn dictionaries from high-dimensional audio signals allows us to more easily identify individual instruments from musical pieces. The learned dictionaries have been found to contain richer harmonic structure directly corresponding to the musical instruments, particularly for the longer-duration training data.

There are a few research directions we would like to pursue for future work. In \ac{nmf}, the principle of learning representations by the (additive) combination of multiple bases is achieved via the matrix factorization $\trainmatrix = \dico \coeffmatrix$, where $\dico$ and $\coeffmatrix$ are only allowed to have non-negative valued entries, see \cite{leeung99}. Sparse \ac{nmf}, where $\coeffmatrix$ is required to be sparse in addition to non-negative, has been shown to work quite well in audio processing tasks such as pitch detection, automatic music transcription, and source separation \cite{leylahmerath06,nen07,leyathdetvalies10,dis04}. In these applications the learning process is typically carried out in the frequency domain, and thus the data matrix $\trainmatrix$ is usually given by the power spectrum of the audio training data. Addressing sparse \ac{nmf} problems with \ac{itkm} based algorithms is a line of inquiry we would like to pursue. Non-negative \ac{itkm} requires straightforward adjustments to the sparse coefficient update formula to ensure that the updated dictionary is non-negative. 

Inspired by masked dictionary learning, see \cite{nasc18}, another line of inquiry we want to pursue is \emph{blind compressed sensing}. In the masked setting we are asked to recover the generating dictionary $\dico$ from training data which has been corrupted or lost via a binary erasure channel. Data corruption is typically modeled with the concept of a binary mask $\mask$, a $\ddim \times \ddim$ diagonal matrix with $0$ or $1$ entries in the diagonal, where the corrupted data is thus given by $\mask \trainmatrix$, see \cite{nasc18} for details. A practical application of masked dictionary learning is \emph{image inpainting}, which is based on the observation that if the training data is $\sparsity$-sparse in the generating dictionary, then the corrupted data must also be sparse in the corrupted dictionary. In other words, if $\trainmatrix = \dico \trainmatrix$ then $\mask \trainmatrix = \mask \dico \coeffmatrix$, where $\coeffmatrix$ is sparse. We can see that masked dictionary learning is closely related to compressed dictionary learning, in the sense that the mask $\mask$ has a similar role to $\jlmatrix$. By erasing the data with zeros in the diagonal of $\mask$ we are effectively reducing the dimension of our learning problem. However, since the erasures occur always in different coordinates we are able to observe the signals on different coordinates for each mask and combining these observations allows us to recover the full signal.
To employ these concepts for compressed dictionary learning, we simply need to choose the masks such that they behave as a low-distortion embedding similar to the \ac{jl} lemma. Conversely, we can use our dimensionality reduction tools to study the theoretical properties of masked dictionary learning, and thus to prove local convergence of the ITKrMM algorithm presented in \cite{nasc18}. Finally, note that such a combination of compressed and masked dictionary learning supported with theoretical guarantees would be a big step towards blind compressed sensing \cite{mandar10}.

%%%%%%%%%%%%%%%%%%%%%%%%%%%%%%%%%%%%%%%%%%%%%%%%%%%%
%%%%%%%%%%%%%%%%%%%%%%%%%%%%%%%%%%%%%%%%%%%%%%%%%%%%
\subsection*{Acknowledgment}
This work was supported by the Austrian Science Fund (FWF) under Grant no.~Y760. The computational results presented have been achieved (in part) using the HPC infrastructure LEO of the University of Innsbruck. Further, we gratefully acknowledge the support of NVIDIA Corporation with the donation of a GPU used for the synthetic simulations.\\
Part of the work on the audio simulations was done while F. Teixeira was affiliated to the University of Innsbruck and visited the Parsimony and New Algorithms for Audio and Signal Modeling (PANAMA) team at Inria Rennes, Bretagne Atlantique. He wishes to thank the PANAMA team for its hospitality. \\
Finally, we would like to thank the anonymous reviewer for references and suggestions to improve the theoretical results and their presentation as well as Marie Pali and Simon Ruetz for proofreading the manuscript at various stages.
 %%%%%%%%%%%%%%%%%%%%%%%%%%%%%%%%%%%%%%%%%%%%%%%%%%%%
%%%%%%%%%%%%%%%%%%%%%%%%%%%%%%%%%%%%%%%%%%%%%%%%%%%% 
\clearpage
\appendices
%%%%%%%%%%%%%%%%%%%%%%%%%%%%%%%%%%%%%%%%%%%%%%%%%%%%
%%%%%%%%%%%%%%%%%%%%%APPENDIX: PROOF%%%%%%%%%%%%%%%%%
%%%%%%%%%%%%%%%%%%%%%%%%%%%%%%%%%%%%%%%%%%%%%%%%%%%%

\section{Exact Statement and Proof of \texorpdfstring{\Cref{th:ictkm_convergence_o_notation}}{Theorem \ref{th:ictkm_convergence_o_notation}}}

\label[secinapp]{sec:proof_of_ictkm_convergence_theorem}

Before we can state and prove the exact version of \Cref{th:ictkm_convergence_o_notation} we have to introduce additional notation and a few statistics for our signal model in \labelcref{eq:training_signal_with_model}. First, we refer to the position of the largest $\sparsity$ terms of $\coeffvector$ (in magnitude) as the oracle support $\indexset^{o} = p^{-1}(\{1\ldots S\})$.
On top of the already defined absolute $\csum_{\sparsity}$ and relative $\Pert_\sparsity > \csum_{\sparsity}$ gaps, we also set
for a $\nu_c$-random draw from our coefficient set $\mathcal C$ the following statistics
\begin{align} 
&\statinputone : = \E_{\coeffsequence} \left( \coeffsequence(1) + \cdots + \coeffsequence(\sparsity) \right), \hspace{1em} \statinputotwo := \E_{\coeffsequence} \left( \coeffsequence^2(1) + \cdots + \coeffsequence^2(\sparsity) \right), \quad \text{and} \quad \snrinputsig := \E_{\noise} \left( \frac{1}{\sqrt{1 + \left\| \noise \right\|_2^2}} \right).
\label{eq:signal_statistics}	
\end{align}
We can think of the constant $\statinputotwo$ as the expected energy of the $S$-sparse approximations of our signals, meaning $\statinputotwo = \E(\| \dico_{\indexset^{o}} \coeffvector_{\indexset^{o}} \|_2^2)$. The constant $\statinputone$ is related to the expected size of a coefficient in the sparse approximation via $\statinputone = \E ( | \coeffvector(i) | : i \in \indexset^{o} ) \cdot \sparsity $. In particular, we have the following bounds $\statinputotwo \leq 1$ and $\sparsity \csum_{\sparsity} \leq \statinputone \leq \sqrt{\sparsity}$. For the simple distribution based on $\coeffsequence(k) = 1/\sqrt{S}$ for $k\leq S$ and $0$ otherwise, we have the equality $\statinputone = \sqrt{\sparsity}$ since $\csum_{\sparsity} = \coeffsequence(\sparsity) = 1/\sqrt{S}$ in this simplification.
Finally, the constant $\snrinputsig$ can be lower bounded by $\snrinputsig \geq \frac{1-e^{-\ddim}}{\sqrt{1 + 5\ddim \nsigma^2}}$, see \cite{hskazh11, sc14b} and thus for large $\nsigma$ we have $\snrinputsig^2 \approx 1/\ddim \nsigma^2 \approx \E (\| \dico \coeffvector \|_2^2)/\E ( \| \noise \|_2^2 )$, and so we can think of $\snrinputsig$ as the signal-to-noise ratio.\\
 We are now ready to state the general version of \Cref{th:ictkm_convergence_o_notation}.
%%%%%%%%%%% Theorem %%%%%%%%%%%%
\begin{theorem}\label{th:ictkm_convergence}
Let the generating dictionary $\dico$ have coherence $\coherence$ and operator norm $\|\dico\|^2_{2,2} \leq \frameupbound$. Assume that the training signals $\trainsignal_n$ follow the signal model in \labelcref{eq:training_signal_with_model} with coefficients that have an absolute gap $\csum_{\sparsity} > 0$ and a relative gap $\Pert_{\sparsity} > 0$. Further assume that $\sparsity \leq \frac{1}{98} \min\left\{ \frac{\natoms}{\frameupbound}, \frac{1}{ \nsigma^2} \right\}$ and $\eps_\isometryconst := \natoms \exp ( - \frac{1}{4,741 \coherence^2 \sparsity} )  \leq \frac{1}{48 (\frameupbound + 1)}$. Take a \ac{jl} embedding based on an orthogonal transform in \Cref{def:fast_jl_embedding}, and choose an embedding distortion $\jldistorterm < \csum_{\sparsity}\sqrt{\sparsity}/4$ and a target error $\targeterror \geq 8 \epsopt$, where
\begin{equation}
\epsopt := \frac{13\natoms^2 \sqrt{\frameupbound + 1}}{\snrinputsig \statinputone}  \exp \left(\frac{-\big(\beta_S - \frac{2\jldistorterm}{\sqrt{S}}\big)^2} { 72 \max\{ (\mu +\jldistorterm)^2, \nsigma^2 +\jldistorterm^2\ddim \nsigma^2 \}}\right).
\label{eq:besterror_ictkm}	
\end{equation}
%Additionally, assume that $\targeterror \leq 1 - \statinputotwo + \ddim \nsigma^2$. 
If the initial dictionary $\pdico$ satisfies
\begin{equation}
d(\pdico, \dico) \leq\min \left\{ \frac{1}{32\sqrt{S}}, \frac{\Delta_S - 2\jldistorterm}{9\sqrt{B}\left(\frac{1}{4} + \sqrt{\log\left(\frac{1392 K^2 (B+1)}{\snrinputsig \statinputone(\Delta_S - 2\jldistorterm)}\right)}\right)} \right\},
\label{eq:converg_radius}	
\end{equation} 
%with 
%$$
%\jltermradius := \Pert_{\sparsity} \sqrt{ \frac{ 1 - \frac{4 \jldistorterm}{\csum_{\sparsity} \sqrt{\sparsity}} }{ \frameupbound + \jldistorterm (2 +\jldistorterm)(\csum_{\sparsity}/\Pert_{\sparsity})^2 } },
%$$ 
and the embedding dimension $\jlembeddingdim$ is at least of order 
\begin{equation}
\jlembeddingdim \geq O\left( \jldistorterm^{-2} \cdot \log^2(\jldistorterm^{-1}) \cdot \log(\natoms/\jlprobterm) \cdot \log^2 ( \jldistorterm^{-1} \log(\natoms/\jlprobterm) ) \cdot \log\ddim \right)
\label{eq:embedding_dimension}
\end{equation}
then after $L = 5 \ceil{\log(\targeterror^{-1})}$ iterations of \ac{ictkm}, each using a new \ac{jl} embedding $\jlmatrix$ and a new training data set $\trainmatrix$, the output dictionary $\ppdico$ satisfies $d(\ppdico,\dico) \leq \targeterror$ except with probability 
\begin{equation}
 \jlprobterm L+ 6L\natoms \exp \left( \frac{- \snrinputsig^2 \statinputone^2 \nsig \targeterror^2}{576 \natoms \max \{ \sparsity, \frameupbound + 1 \} ( \targeterror + 1 - \statinputotwo + \ddim \nsigma^2) } \right).
\label{eq:ictkm_failure_prob_final_estimate}
\end{equation}
\end{theorem}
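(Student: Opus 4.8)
The plan is to run the two-stage argument sketched in \Cref{subsec:convergence_analysis}: first use the \ac{jl} property to reduce compressed thresholding to \emph{exact oracle-support recovery}, and then insert that into the \ac{itkrm} contraction machinery of \cite{sc15} essentially unchanged.

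\textbf{Stage 1 (compressed thresholding $\Rightarrow$ oracle support).} Fix one iteration and condition on the draw of $\jlmatrix$. I would first introduce the event $\jlgood$ that the \ac{jl} inequality \labelcref{eq:jl_lemma} holds simultaneously for all pairwise differences \emph{and} sums of the $2\natoms$ atoms $\{\atom_k\}_k \cup \{\patom_k\}_k$, i.e.\ for a point set of cardinality $\jlnumpoints = O(\natoms^2)$; \Cref{def:fast_jl_embedding} then forces exactly the embedding-dimension bound \labelcref{eq:embedding_dimension} and gives $\P(\jlgood^c) \le \jlprobterm$. On $\jlgood$, polarization upgrades preserved norms to preserved inner products, $\ip{\jlmatrix\fsttestvector}{\jlmatrix\sndtestvector} = \ip{\fsttestvector}{\sndtestvector} \pm \tfrac{\jldistorterm}{2}(\|\fsttestvector\|_2^2 + \|\sndtestvector\|_2^2)$, for $\fsttestvector,\sndtestvector$ ranging over that set. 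The training signal $\trainsignal$ is random and cannot be placed in the point set, so I would split $\jlmatrix\trainsignal$ into its dictionary part and its noise part and expand $\ip{\jlmatrix\patom_k}{\jlmatrix\trainsignal}$ as a combination of the controlled correlations $\ip{\jlmatrix\patom_k}{\jlmatrix\atom_j}$ plus the residual $\ip{\jlmatrix\patom_k}{\jlmatrix\noise}$, which is subgaussian with variance proxy inflated by $\|\jlmatrix\|_{2,2}^2 \le \jldistorterm^{-2}$ in the orthogonal case (the circulant case costs an extra $\log\ddim$, cf.\ \Cref{rem:jl_operator_norm}). A concentration bound over the random signs $\signsequence$ — the compressed analogue of the thresholding lemma in \cite{sc15} — then shows that on a signal with oracle support $\oraclesuppn$ the smallest embedded correlation over $\oraclesuppn$ exceeds the largest one off $\oraclesuppn$, as long as the \emph{reduced} coefficient gap $\csum_\sparsity - 2\jldistorterm/\sqrt{\sparsity}$ beats $\sqrt{\max\{(\coherence+\jldistorterm)^2,\ \nsigma^2+\jldistorterm^2\ddim\nsigma^2\}}$ up to logarithmic factors; this is precisely the exponent in \labelcref{eq:besterror_ictkm}, and it yields $\compthreshpdicosuppn = \oraclesuppn$ except on an event whose probability is controlled by $\epsopt$.

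\textbf{Stage 2 (contraction).} Condition further on compressed thresholding recovering the oracle support for \emph{every} signal in the current batch. On this event each $\compthreshpdicosuppn$ appearing in the $\natoms$-residual-means step \labelcref{eq:k_residual_means_definition} coincides with $\oraclesuppn$, so the update is literally the \ac{itkrm} update and the lemmata of \cite{sc15} apply verbatim. I would then split $\ppatom_k$ into its expectation and a deviation: the expectation equals $\amp_k\atom_k$ with $\amp_k$ bounded away from $0$, up to a bias term of size $O(\epsopt)$, while the deviation concentrates around zero by a vector Bernstein inequality — this is where the sample-size condition \labelcref{eq:sample_size} and the second summand of \labelcref{eq:ictkm_failure_prob_final_estimate} come from. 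Combining and renormalising, one iteration obeys $d(\ppdico,\dico) \le \max\{\factor\, d(\pdico,\dico),\ \targeterror/2\}$ for some fixed $\factor < 1$, provided $d(\pdico,\dico)$ lies under the radius \labelcref{eq:converg_radius} and $\sparsity$, $\eps_\isometryconst$, $\jldistorterm$ satisfy the stated bounds; the floor $\targeterror/2$ is exactly the $8\epsopt \le \targeterror$ hypothesis. Iterating this contraction $L = 5\ceil{\log(\targeterror^{-1})}$ times pushes $d(\pdico,\dico)$ below $\targeterror$, and a union bound over the $L$ iterations (fresh embedding and fresh batch each time) and over the $\natoms$ atoms produces the failure probability \labelcref{eq:ictkm_failure_prob_final_estimate}.

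\textbf{Main obstacle.} The crux is Stage 1. The \ac{jl} lemma only controls a fixed finite set of vectors, whereas the training signal is random; the key is to cleanly separate the part of $\jlmatrix\trainsignal$ that lives in $\jlmatrix\dico$ (handled by the preserved inner products) from the embedded-noise part (handled by subgaussianity with the blown-up variance proxy $\|\jlmatrix\|_{2,2}^2\nsigma^2 \approx \nsigma^2 + \jldistorterm^2\ddim\nsigma^2$), and then to run a sign-concentration argument robust to both perturbations. This is where the noise-folding term $\jldistorterm^2\ddim\nsigma^2$ and the gap reduction $2\jldistorterm/\sqrt{\sparsity}$ enter, and making the constants close against $\targeterror \ge 8\epsopt$ — so that the compressed-thresholding failure is dominated by the target error rather than the other way round — is the delicate bookkeeping the full proof must carry out.
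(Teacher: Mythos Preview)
Your Stage~1 is essentially the paper's approach (the paper embeds $\{\atom_k\}\cup\{\unitatom_k\}$ rather than $\{\atom_k\}\cup\{\patom_k\}$, and a minor slip: $\|\jlmatrix\|_{2,2}^2 = \ddim/\jlembeddingdim$, not $\jldistorterm^{-2}$, though your final variance proxy $\nsigma^2+\jldistorterm^2\ddim\nsigma^2$ is correct). The real problem is Stage~2.

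You write: ``Condition further on compressed thresholding recovering the oracle support for \emph{every} signal in the current batch.'' This does not work. The per-signal failure probability you obtain in Stage~1 is of order $\tfrac{\snrinputsig\statinputone}{\natoms\sqrt{\frameupbound+1}}(\epsopt+\paramprobestimate\eps)$, but the batch has $\nsig = O(\targeterror^{-2}\natoms\log(\natoms/\jlprobterm))$ signals, so a union bound over the batch gives a failure probability of order $\targeterror^{-2}\log(\natoms/\jlprobterm)\cdot(\epsopt+\paramprobestimate\eps)$, which is \emph{not} small --- in fact it blows up as $\targeterror\to 0$. The hypothesis $\targeterror\ge 8\epsopt$ only gives you $\epsopt\le\targeterror/8$, not $\epsopt\lesssim\targeterror^2$.

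The paper never conditions on thresholding succeeding everywhere. Instead it writes
\[
\ppatom_k = \underbrace{\tfrac{1}{\nsig}\sum_n\big[R^{ct}(\pdico,\trainsignal_n,k)-R^{o}(\pdico,\trainsignal_n,k)\big]}_{\text{bounds to }t_1} \;+\; (\text{three oracle terms handled by \cite{sc15}}),
\]
and observes that $R^{ct}-R^{o}$ vanishes whenever thresholding succeeds and is bounded by $2\sqrt{\frameupbound+1}$ otherwise. Hence $t_1\le \tfrac{2\sqrt{\frameupbound+1}}{\nsig}\cdot\#\{n:\trainsignal_n\in\thresholdbad\}$, and the \emph{count} of failures concentrates around $\nsig\,\P(\thresholdbad)$ by Bernstein's inequality for bounded i.i.d.\ indicators. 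This turns the per-signal failure probability into a small additive bias in $t_1$ rather than a multiplicative $\nsig$ in the overall failure probability --- that is the missing idea in your plan.
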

%%%%%%%
%%%%%%%Proof%%%%%%%%
\begin{proof} \normalfont Rewriting the atom update rule in \Cref{alg:ictkm}, we have 
\begin{spreadlines}{-.3\baselineskip}
\begin{align}
\ppatom_k = & \frac{1}{\nsig} \sum_n{\left[ R^{ct} (\pdico, \trainsignal_n, k) - R^{o}(\pdico, \trainsignal_n, k) \right]} + \frac{1}{\nsig} \sum_n{\left[ R^{o}(\pdico, \trainsignal_n, k) - R^{o}(\dico, \trainsignal_n, k) \right]} \nonumber \\
& \hspace{5em} + \frac{1}{\nsig} \sum_n{ \signsequence_n(k) \cdot \indicator_{\oraclesuppn}(k) \left[ \trainsignal_n - \projop (\dico_{\oraclesuppn}) \trainsignal_n \right]} + \frac{1}{\nsig} \sum_n{\langle \trainsignal_n, \atom_k \rangle \cdot \signsequence_n(k) \cdot \indicator_{\oraclesuppn}(k)} \atom_k, \label{eq:dictionary_update_ictkm_rewritten}	
\end{align}
\end{spreadlines}
where $R^{ct} (\pdico, \trainsignal_n, k)$ is the compressed-thresholding residual based on $\pdico$ defined by
$$
R^{ct} (\pdico, \trainsignal_n, k) := \signop \left( \langle \patom_k, \trainsignal_n \rangle \right) \cdot \indicator_{\compthreshpdicosuppn}(k) \left[ \trainsignal_n - \projop ( \pdico_{\compthreshpdicosuppn} ) \trainsignal_n + \projop ( \patom_k) \trainsignal_n \right],
%\label{eq:compressed_thresholding_residual}	
$$
and $R^{o}(\cdot, \trainsignal_n, k)$ is the oracle residual based on $\pdico$ (or $\dico$) 
$$
R^{o}(\pdico, \trainsignal_n, k) := \signsequence_n(k) \cdot \indicator_{\oraclesuppn}(k) \left[ \trainsignal_n - \projop ( \pdico_{\oraclesuppn}) \trainsignal_n + \projop (\patom_k) \trainsignal_n \right].
%\label{eq:oracle_residual_pdico}	
$$
Applying the triangle inequality to \labelcref{eq:dictionary_update_ictkm_rewritten}, we have $\left\| \ppatom_k - s_k \atom_k \right\|_2 \leq t$, where 
$$
s_k := \frac{1}{\nsig} \sum_n{\langle \trainsignal_n, \atom_k \rangle \cdot \signsequence_n(k) \cdot \indicator_{\oraclesuppn}(k)},
$$ 
and $t := t_1 + t_2 + t_3$ with 
\begin{spreadlines}{-.1\baselineskip}
\begin{align*}
t_1 &: = \max_k \frac{1}{\nsig} \big\| \sum_n \left[ R^{ct}(\pdico, \trainsignal_n,k) - R^{o}(\pdico,\trainsignal_n,k) \right] \big\|_2, \\
t_2 &= \max_k \frac{1}{\nsig} \big\| \sum_n \left[ R^{o}(\pdico,\trainsignal_n,k) - R^{o}(\dico,\trainsignal_n,k) \right] \big\|_2, \\
t_3 &= \max_k \frac{1}{\nsig} \big\| \sum_n{\signsequence_n(k) \cdot \indicator_{\oraclesuppn}(k) \cdot \left[ \trainsignal_n - \projop(\dico_{\oraclesuppn}) \trainsignal_n \right]}\big\|_2.
\end{align*}
\end{spreadlines}
Now from Lemma B.10 in \cite{sc15}, we know that the inequality $\left\| \ppatom_k - s_k \atom_k \right\|_2 \leq t$ implies that 
\begin{equation}
 \left\| \frac{\patomn_k}{\left\| \patomn_k \right\|_2} - \atom_k \right\|^2_2 \leq 2 -2 \sqrt{1 - t^2/s_k^2}.
 \end{equation}
As $ 2 -2 \sqrt{1 - x}\leq (1+x)x$ for $0< x <1$ we can define $s: = \min_k s_k$ and assuming that $0<t <s$ get
 \begin{equation}
d(\pdico, \dico) = \max_k \left\| \frac{\patomn_k}{\left\| \patomn_k \right\|_2} - \atom_k \right\|_2 \leq \frac{t}{s} \sqrt{1+ \frac{t^2}{s^2}}.
\label{eq:lemma_B10}
 \end{equation}
This means that to ensure the error is not increased in one iteration, a tight control of $t/s$ needs to be established with high probability. \\
We proceed by controlling $t$ and $s$ using concentration of measure results. Starting with the first term $t_1$, from \Cref{lem:compressed_thresholding_residual_estimate} to be found in Appendix~\ref{appendix_technicalities} we have for $K\geq 55$ and $v_1,\paramprobestimate, \jlprobterm > 0$ the following estimate
\begin{spreadlines}{-.3\baselineskip}
\begin{align}
& \P \left( t_1 \geq \frac{\snrinputsig \statinputone}{\natoms} \left( \epsopt +  \paramprobestimate \eps + v_1 \right) \right) \leq \jlprobterm + \exp \left( \frac{-v_1^2 \snrinputsig \statinputone \nsig}{4 \natoms \sqrt{\frameupbound + 1} \left( \epsopt + \paramprobestimate \eps + v_1/2 \right)} \right),
\label{eq:prob_estimate_tk1}
\end{align}
\end{spreadlines}
as long as the embedding dimension satisfies \labelcref{eq:embedding_dimension} and
\begin{equation}
d(\pdico, \dico) \leq \frac{\Delta_S - 2\jldistorterm}{9\sqrt{B}\left(\frac{1}{4} + \sqrt{\log\left(\frac{58 K^2 (B+1)}{\tau \snrinputsig \statinputone(\Delta_S - 2\jldistorterm)}\right)}\right)}
\label{eq:condition_eps_convergence_radius_more}	
\end{equation}
To control $t_2$, $t_3$, and $s$, we use Lemmata B.6-B.8 in \cite{sc15}.\\
For $\sparsity \leq \min\left\{ \frac{\natoms}{98\frameupbound}, \frac{1}{98 \nsigma^2} \right\}$, $\eps \leq \frac{1}{32 \sqrt{\sparsity}}$, $\eps_\isometryconst \leq \frac{1}{48 (\frameupbound + 1)}$, and $v_2 > 0$, the second term $t_2$ can be estimated as 
\begin{align}
& \P \left(\! t_2 \!\geq\! \frac{\snrinputsig \statinputone}{\natoms} (0.381 \eps + v_2) \! \right)  \!\leq\!  K\exp  \left(   \frac{-v_2 \snrinputsig^2 \statinputone^2 \nsig}{8 \natoms  \max \left\{ \sparsity, \frameupbound \!+\! 1 \right\}} \min  \left\{ \frac{v_2}{5\eps^2 + \eps_\isometryconst ( 1 - \statinputotwo + \ddim \nsigma^2 )/32}, \frac{1}{3}  \right\} \!+\! \frac{1}{4} \right)\label{eq:probability_estimate_lemma_b8}	
\end{align}
and for $0 \leq v_3 \leq 1 - \statinputotwo + \ddim \nsigma^2$, the third term $t_3$ is estimated as 
\begin{align}
&\P \left( t_3 \geq \frac{\snrinputsig \statinputone}{\natoms} v_3 \right)
\leq K \exp \left( \frac{-v_3^2 \snrinputsig^2 \statinputone^2 \nsig}{ 8 \natoms \max \left\{ \sparsity, \frameupbound + 1 \right\} (1 - \statinputotwo + \ddim \nsigma^2  ) } + \frac{1}{4} \right).
\label{eq:probability_estimate_lemma_b7}	
\end{align}
Finally, for $v_0 > 0$ we estimate the last term $s_k$ as follows
\begin{align}
& \P \left( s \leq (1-v_0) \frac{\snrinputsig \statinputone}{\natoms} \right) 
\leq K \exp \left( \frac{- \nsig v_0^2 \snrinputsig^2 \statinputone^2 }{ 2 \natoms \left( 1 + \frac{\sparsity \frameupbound}{\natoms} + \sparsity \nsigma^2 + v_0 \snrinputsig \statinputone \sqrt{\frameupbound + 1}/3 \right) } \right).
\label{eq:probability_estimate_lemma_b6}	
\end{align}
Collecting the concentration of measure results, we have
\begin{equation}
t/s \leq \frac{\epsopt + \paramprobestimate \eps + v_1 + 0.381 \eps + v_2 + v_3}{1-v_0},
\label{eq:bound_tk_sk}
\end{equation}
except with probability given by the sum of the right-hand sides of \labelcref{eq:prob_estimate_tk1,eq:probability_estimate_lemma_b8,eq:probability_estimate_lemma_b7,eq:probability_estimate_lemma_b6}. We can see from \labelcref{eq:bound_tk_sk} that to have the error decreased in one iteration with high probability, it suffices to choose the constants $v_0$ to $v_3$, and $\paramprobestimate$. Assuming that the target error $\targeterror \geq 8 \epsopt$ and setting $v_0 = 1/50$, $v_1 = v_2 = \max \left\{ \targeterror, \eps \right\}/24$, $v_3 = \targeterror/8$, and $\paramprobestimate = 1/24$ we arrive at $t/s \leq 0.78 \cdot \max{\left\{ \targeterror, \eps \right\}}$. Since we assume $\targeterror, \eps \leq 1/(32 \sqrt{S})$ by \labelcref{eq:lemma_B10} we have $d(\ppdico, \dico) \leq 0.8 \cdot \max{\left\{ \targeterror, \eps \right\}}$,
%From \labelcref{eq:lemma_b_10}, we see that $d(\ppdico, \dico)$ can be made less than $1$ whenever $t_k/s_k \leq \sqrt{3}/2$. Therefore, it suffices to choose the constants $\paramprobestimate$, $\jlprobtermnew$, $v_0$, $v_1$, $v_2$, and $v_3$ in \labelcref{eq:bound_tk_sk,prob_t_k_sk} such that the error is decreased with high probability. 
except with probability 
\begin{align}
\zeta :=& \jlprobterm + \exp \left( \frac{- \snrinputsig \statinputone \nsig \max \left\{ \targeterror, \eps \right\} }{ 432 \natoms \sqrt{\frameupbound + 1} } \right) + 2 \natoms \exp \left( \frac{-\snrinputsig^2 \statinputone^2 \nsig \max \left\{ \targeterror, \eps \right\}^2}{576 \natoms \max \left\{ \sparsity, \frameupbound + 1 \right\} \left( \max \left\{ \targeterror, \eps \right\} + 1 - \statinputotwo + \ddim \nsigma^2 \right)} \right)\nonumber \\
& \hspace{1.3em} + 2 \natoms \exp \left( \frac{-\snrinputsig^2 \statinputone^2 \nsig \targeterror^2}{512 \natoms \max \left\{ \sparsity, \frameupbound + 1 \right\} \left( 1 - \statinputotwo + \ddim \nsigma^2 \right)} \right) +\natoms \exp \left( \frac{-\snrinputsig^2 \statinputone^2 \nsig}{\natoms \left( 5103 + 34 \snrinputsig \statinputone \sqrt{ \frameupbound + 1 } \right)} \right).
\label{eq:failure_probability_one_iteration_ictkm}
\end{align}
Further substituting the value for the constant $\paramprobestimate$ in \labelcref{eq:condition_eps_convergence_radius_more}, we arrive at the estimate for the convergence radius in \labelcref{eq:converg_radius}.\\

Lastly, we need to ensure that the target error is actually reached. Since for each iteration we obtain $\jlmatrix$ by redrawing $\ripmatrix$ and $\jlembeddingdim$ rows from an orthogonal matrix, and $\trainmatrix$ by redrawing $\nsig$ training signals $\trainsignal_n$, it follows that after $L$ iterations the error will satisfy $d(\ppdico, \dico) \leq \max \left\{ \targeterror, 0.8^L \right\}$ except with probability at most $L \cdot \zeta$. Thus, to reach the target error we set $L=5 \lceil \log(\targeterror^{-1}) \rceil$. Using the fact that the exponential terms in \labelcref{eq:failure_probability_one_iteration_ictkm} are dominated by the second exponential term when replacing $\max \left\{ \targeterror, \eps \right\}$ with $\targeterror$, we can bound the failure probability $\zeta$ by 
$$
 \jlprobterm + 6 \natoms \exp \left( \frac{- \snrinputsig^2 \statinputone^2 \nsig \targeterror^2}{576 \natoms \max \left\{ \sparsity, \frameupbound + 1 \right\} \left( \targeterror + 1 - \statinputotwo + \ddim \nsigma^2 \right)} \right),
$$ 
which leads to the final estimate in \labelcref{eq:ictkm_failure_prob_final_estimate}.
\end{proof}	
With the general Theorem in hand it is straightforward to arrive at the featured \Cref{th:ictkm_convergence_o_notation}.

\begin{proof}[Proof of \Cref{th:ictkm_convergence_o_notation}]
The proof is a direct application of \Cref{th:ictkm_convergence} to dictionaries with $\frameupbound = O(\natoms/\ddim)=O(1)$ and exactly $\sparsity$-sparse signals with $\coeffsequence(k)= 1/\sqrt{\sparsity}$ for $k \leq \sparsity$ and $\coeffsequence(k) = 0$ for $k > \sparsity$ and Gaussian noise with variance $\nsigma^2$. In this case $\csum_{\sparsity} = 1/\sqrt{\sparsity}$, $\Pert_{\sparsity} = 1$, $\statinputone = \sqrt{S}$, $\statinputotwo =1$ and $\snrinputsig = O(1)$. Therefore we get
\begin{equation*}
\epsopt %= \frac{8\natoms^2 \sqrt{\frameupbound + 1}}{\snrinputsig \sqrt{S}} \exp \left( \frac{-\left(1 - 2 \jldistorterm \right)^2}{98 S \left( \coherence + \jldistorterm \right)^2  } \right) 
= O\left(\natoms^2 \exp \left( \frac{-\left(1 - 2 \jldistorterm \right)^2}{98 \sparsity \max\{ (\coherence + \jldistorterm )^2, \nsigma^2 + \nsigma^2 \ddim \jldistorterm^2\}  } \right) \right)
\end{equation*}
and to have a target error $\targeterror = \natoms^{2-\ell}$ satisfying $\targeterror \geq \epsopt $, we get the following bound on the admissible sparsity level and embedding distortion 
\begin{align}
\max\{ (\coherence + \jldistorterm)^2, \nsigma^2 (1+ \ddim \jldistorterm^2) \} \leq O\left(\frac{1}{ \sparsity \log(\natoms^2/\targeterror) }\right), \notag
\end{align}
which simplifies to \labelcref{eq:admissible_sparsity_ictkm_convergence}.
Further, the above bound and \labelcref{eq:admissible_sparsity_ictkm_convergence} ensure the condition on $\eps_\isometryconst$ and imply that $\jldistorterm \leq  O(1/\sqrt{\sparsity \log{\natoms}})$, which in turn means that the condition on the initial dictionary becomes \begin{align}
d(\pdico,\dico) \leq O(1/\sqrt{\max\{ \sparsity,\log{\natoms}\}}). \notag
\end{align}
To reach the target error $\targeterror$ we need $L=O(\log(1/\targeterror))$ iterations. Choosing an overall failure probability of order $O(L\jlprobterm)$, the bound in \labelcref{eq:ictkm_failure_prob_final_estimate} leads to the requirements on the embedding dimension $m$ and the sample size $N$. So we need an embedding dimension
\begin{equation}
\jlembeddingdim \geq O\left( \jldistorterm^{-2} \cdot \log^2(\jldistorterm^{-1}) \cdot \log(\natoms/\jlprobterm) \cdot \left[\log ( \jldistorterm^{-1})+ \log\log(\natoms/\jlprobterm) \right]^2 \cdot \log\ddim \right) \notag,
\end{equation}
which, using the bound on $\jldistorterm$ together with
\begin{align*}
O\left(\log^2(\jldistorterm^{-1}) \log(\natoms/\jlprobterm) \right) &\geq O\left(\log^2(S\log \natoms) \log(\natoms/\jlprobterm) \right) \geq O\left(\log^2(\log \natoms) \log(\natoms/\jlprobterm) \right) \geq O\left(\log^2\log(\natoms/\jlprobterm)\log \ddim \right),
\end{align*}
simplifies to \labelcref{eq:jl_embedding_o_theorem_complete}, and a sample size 
\begin{align}
\nsig \geq 576 \, C_r^{-2}\, \natoms \,\log(6\natoms/\jlprobterm)\, \frac{\targeterror + \ddim \nsigma^2}{ \targeterror^2}, 
\end{align}
which reduces to \labelcref{eq:sample_size}.
\end{proof}

%%%%%%%%%%%%%%%%%%%%%%%%%%%%%%%%%%%%%%%%%%%%%%%%%%%%
%%%%%%%%%%%%%%%%%%%%%APPENDIX: TECHNICAL%%%%%%%%%%%%%
%%%%%%%%%%%%%%%%%%%%%%%%%%%%%%%%%%%%%%%%%%%%%%%%%%%%

\section{Technicalities}\label{appendix_technicalities}

\begin{lemma}[Compressed-Thresholding/Oracle-Residuals Expected Difference] \label{lem:compressed_thresholding_residual_estimate} Assume that $\trainsignal_n$ follows the model in \labelcref{eq:training_signal_with_model} with coefficients that are $\sparsity$-sparse, and that have an absolute gap $\csum_{\sparsity}$ and a relative gap $\Pert_{\sparsity}$. Further assume that $S\leq K/ (98B)$, and that $\jlmatrix$ is a \ac{jl} embedding based on an orthogonal transform as in \Cref{def:fast_jl_embedding}. We have for $\paramprobestimate, v, \jlprobterm > 0$
\begin{align}
& \P \Biggl(\exists k : \frac{1}{\nsig} \Big\| \sum_n \left[ R^{ct}(\pdico, \trainsignal_n,k) \!-\! R^{o}(\pdico,\trainsignal_n,k) \right] \Big\|_2  \geq \frac{\snrinputsig \statinputone}{\natoms} \left( \epsopt \! +  \! \paramprobestimate \eps  \!+ \! v \right) \Biggr) \nonumber \\
& \hspace{22em} \leq \jlprobterm + \exp \left( \frac{-v^2 \snrinputsig \statinputone \nsig}{4 \natoms \sqrt{\frameupbound + 1} \left( \epsopt + \paramprobestimate \eps + v/2 \right)} \right)  \label{eq:prob_estimate_thresholding},
\end{align}
whenever
\begin{equation}
d(\pdico, \dico) \leq \frac{\Delta_S - 2\jldistorterm}{9\sqrt{B}\left(\frac{1}{4} + \sqrt{\log\left(\frac{58 K^2 (B+1)}{\tau \snrinputsig \statinputone(\Delta_S - 2\jldistorterm)}\right)}\right)}
\label{eq:condition_eps_convergence_radius}
\end{equation}
and
\begin{equation}
\jlembeddingdim \geq O\left( \jldistorterm^{-2} \cdot \log^2(\jldistorterm^{-1}) \cdot \log(\natoms/\jlprobterm) \cdot \log^2 ( \jldistorterm^{-1} \log(\natoms/\jlprobterm) ) \cdot \log\ddim \right).
\label{eq:embedding_dimension_conv_proof}
\end{equation}
\end{lemma}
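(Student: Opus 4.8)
The plan rests on the elementary observation that the compressed-thresholding residual and the oracle residual built from the same dictionary $\pdico$ agree on every signal for which compressed thresholding recovers the oracle support: $R^{ct}(\pdico,\trainsignal_n,k) = R^{o}(\pdico,\trainsignal_n,k)$ whenever $\compthreshpdicosuppn = \oraclesuppn$. Hence each summand $R^{ct}(\pdico,\trainsignal_n,k)-R^{o}(\pdico,\trainsignal_n,k)$ is nonzero only on the event $\{\compthreshpdicosuppn \neq \oraclesuppn\}$, where, since $\trainsignal_n$ is normalised and both $\projop(\pdico_{\compthreshpdicosuppn})$ and $\projop(\patom_k)$ are contractions, its norm is bounded deterministically by a constant (picking up a $\sqrt{\frameupbound+1}$ factor once one tracks the subdictionary conditioning). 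The first step is therefore to pass to the \ac{jl}-good event $\jlgood$ on which \labelcref{eq:jl_lemma} holds for the finite point set consisting of the generating atoms $\atom_k$, the perturbed atoms $\patom_k$, the perturbation directions $\unitatom_k$ and all their pairwise sums and differences, a set of cardinality $O(\natoms^{2})$, so that $\log\jlnumpoints = O(\log\natoms)$. By \Cref{def:fast_jl_embedding}, $\P(\jlgood^{c}) \le \jlprobterm$ as soon as $\jlembeddingdim$ obeys \labelcref{eq:embedding_dimension_conv_proof}, and from there on everything is conditioned on $\jlgood$.

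Second, I would bound the conditional probability that compressed thresholding fails on a single signal. On $\jlgood$, polarisation turns the distance-preservation bound \labelcref{eq:jl_lemma} into $\ip{\jlmatrix\patom_j}{\jlmatrix\fsttestvector} = \ip{\patom_j}{\fsttestvector} + O(\jldistorterm)$ for $\fsttestvector$ ranging over the generating and perturbation atoms; expanding $\jlmatrix\trainsignal_n$ through the signal model \labelcref{eq:training_signal_with_model} into the embedded atoms on the oracle support plus the embedded noise $\jlmatrix\noise$, the compressed inner product $\ip{\jlmatrix\patom_k}{\jlmatrix\trainsignal_n}$ differs from $\ip{\patom_k}{\trainsignal_n}$ by a deterministic term of size $O(\jldistorterm/\sqrt{\sparsity})$ coming from the coefficients (bounded by $\coeffsequence(1)$) and by a random term $\ip{\jlmatrix^{*}\jlmatrix\patom_k}{\noise}$ which, conditionally on $\jlmatrix\in\jlgood$, is subgaussian with parameter of order $\nsigma\sqrt{1+\jldistorterm^{2}\ddim}$ because $\|\jlmatrix^{*}\jlmatrix\patom_k\|_2 \le 1 + O(\jldistorterm\sqrt{\ddim})$. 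Combining this with the absolute- and relative-gap hypotheses $\csum_{\sparsity},\Pert_{\sparsity}$, which make the uncompressed inner products separate on-support from off-support atoms by at least $\csum_{\sparsity}$ up to an $O(d(\pdico,\dico))$ perturbation error that is kept under control by the radius condition \labelcref{eq:condition_eps_convergence_radius}, a union bound over the $\natoms$ atoms shows that $\compthreshpdicosuppn = \oraclesuppn$ for a given signal except with probability of the order $\natoms\exp\!\bigl(-(\csum_{\sparsity}-2\jldistorterm/\sqrt{\sparsity})^{2}/(C\max\{(\coherence+\jldistorterm)^{2},\nsigma^{2}+\jldistorterm^{2}\ddim\nsigma^{2}\})\bigr)$, which is exactly the exponential entering $\epsopt$ in \labelcref{eq:besterror_ictkm}. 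This is the step I expect to be the main obstacle: it is the compressed analogue of the thresholding-recovery lemma behind \ac{itkrm} in \cite{sc15}, and it forces one to account simultaneously for the way the \ac{jl} distortion inflates the coherence scale ($\coherence \rightsquigarrow \coherence+\jldistorterm$) and the noise scale ($\nsigma \rightsquigarrow \nsigma\sqrt{1+\jldistorterm^{2}\ddim}$, i.e.\ noise folding), while ensuring the perturbation of $\pdico$ away from $\dico$ never swallows the coefficient gap.

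Finally, taking expectations over $\trainsignal_n$ on $\jlgood$, the bound on $\|R^{ct}-R^{o}\|_2$ on the bad event times its probability gives a mean of size at most $\tfrac{\snrinputsig\statinputone}{\natoms}\epsopt$, to which one adds a near-miss contribution of order $\tfrac{\snrinputsig\statinputone}{\natoms}\paramprobestimate\eps$ for the signals on which an atom's uncompressed inner product lies within the $O(\eps)$ margin of the gap threshold. The $\nsig$ summands are i.i.d., centred after subtracting this mean, deterministically bounded in norm and of second moment controlled by (norm bound)$\times$(mean); a vector Bernstein inequality then yields that the empirical average overshoots its mean by more than $\tfrac{\snrinputsig\statinputone}{\natoms}v$ with probability at most $\exp\!\bigl(-v^{2}\snrinputsig\statinputone\nsig/(4\natoms\sqrt{\frameupbound+1}(\epsopt+\paramprobestimate\eps+v/2))\bigr)$, and a union bound over $k=1,\dots,\natoms$ (whose logarithmic cost is absorbed into the $O(\natoms^{2})$ already present in $\epsopt$) together with intersecting with $\jlgood$ produces \labelcref{eq:prob_estimate_thresholding}.
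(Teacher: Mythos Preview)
Your proposal has the right three-stage skeleton (condition on a JL-good event, bound the per-signal thresholding-failure probability, concentrate), but there are two genuine gaps relative to what the lemma actually asserts.

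First, the equality $R^{ct}(\pdico,\trainsignal_n,k)=R^{o}(\pdico,\trainsignal_n,k)$ does \emph{not} follow from $\compthreshpdicosuppn=\oraclesuppn$ alone. The compressed-thresholding residual carries the factor $\signop(\ip{\patom_k}{\trainsignal_n})$, whereas the oracle residual carries $\signsequence_n(k)$; these match only on the additional event $\{\signop(\pdico_{\oraclesupp}^{*}\trainsignal_n)=\signsequence_n(\oraclesupp)\}$. The paper's failure event $\thresholdbad$ is accordingly the union of support failure and sign failure, and the sign part (governed by \emph{un}compressed inner products, hence without $\jldistorterm$) contributes two further exponentials that must be shown to be dominated by the support-failure terms. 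Relatedly, the $\paramprobestimate\eps$ contribution you ascribe to ``near-miss'' signals is not what happens: in the paper it arises from the perturbation-direction piece $\omega_k\leq\eps$ inside $\P(\thresholdbad)$ (the events $\threshsetunitatom,\signsetunitatom$), and the radius condition \labelcref{eq:condition_eps_convergence_radius} is exactly calibrated so that $3K\exp\bigl(-(\Delta_S-2\jldistorterm)^2/(81B\eps^2)\bigr)\leq \tfrac{\snrinputsig\statinputone}{2K\sqrt{B+1}}\,\paramprobestimate\eps$.

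Second, your concentration step---vector Bernstein on the summands followed by a union bound over $k$---would leave a prefactor $\natoms$ in front of the tail exponential, whereas the lemma has none. Your claim that this cost is ``absorbed into the $O(\natoms^{2})$ already present in $\epsopt$'' does not work: $\epsopt$ sits in the mean and in the denominator of the exponent, not as a prefactor, so a union-bound factor cannot be hidden there. The paper avoids the issue altogether by noting that
\[
\Big\|\sum_n\big[R^{ct}(\pdico,\trainsignal_n,k)-R^{o}(\pdico,\trainsignal_n,k)\big]\Big\|_2 \;\leq\; 2\sqrt{\frameupbound+1}\cdot\#\{n:\trainsignal_n\in\thresholdbad\}
\]
holds for \emph{every} $k$ simultaneously, because the failure event $\thresholdbad$ does not depend on $k$. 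One then applies scalar Bernstein to the i.i.d.\ indicators $\indicator_{\thresholdbad}(\trainsignal_n)$, whose mean is $\P(\thresholdbad)$, and the ``$\exists k$'' in the statement comes for free. This counting trick is what makes the bound exactly as stated.
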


\begin{proof} Our first observation is that when compressed thresholding succeeds, meaning it recovers the oracle support and oracle signs, then the two residuals will coincide for all $k$. So defining the event of thresholding failing as
\begin{align}
\thresholdbad: = \left\{ \trainsignal : \compthreshpdicosupp \neq \oraclesupp \right\}  \cup \{ \trainsignal: \signop ( \pdico_{\oraclesupp}^{*} \trainsignal) \neq \signsequence( \oraclesupp ) \}
\label{eq:event_thresholding_simplified}	
\end{align}
and using the orthogonality of the projections $\left[ \I_\ddim - \projop (\pdico_{\oraclesuppn}) + \projop(\patom_k) \right]$ and the bound $\left\| \trainsignal_n \right\|_2 \leq \sqrt{\frameupbound + 1}$, we have for all $k$
\begin{align}
\Big\| \sum_n [ R^{ct}(\pdico, \trainsignal_n,k) - R^{o}(\pdico,\trainsignal_n,k) ] \Big\|_2 \leq 2 \sqrt{\frameupbound + 1} \cdot \# \left\{ n : \trainsignal_n \in  \thresholdbad \right\}.
\end{align}
This means that we can estimate
\begin{align}
&\P\left( \exists k : \frac{1}{\nsig} \Big\| \sum_n [ R^{ct}(\pdico, \trainsignal_n,k) - R^{o}(\pdico,\trainsignal_n,k) ] \Big\|_2  > t\right)
 \leq \P\left( \# \left\{ n :\trainsignal_n \in  \thresholdbad \right\} > \frac{tN}{2 \sqrt{\frameupbound + 1}} \right),
\label{eq:residual_diff_bound}	
\end{align}
and that it suffices to bound the number of signals for which thresholding fails with high probability.
Clearly this number depends on the properties of the chosen embedding and how well it preserves the inner products between the atoms of the current dictionary and the atoms of the generating dictionary making up the signals. Recalling the decomposition of the perturbed atoms $\patom_k = \alpha_k \atom_k + \omega_k \unitatom_k$, we define for a given distortion $\jldistorterm$
\begin{align}
\jlgood : = \big\{ \jlmatrix \:|\: \forall j,k : |\ip{ \jlmatrix \atom_k}{ \jlmatrix \atom_j }| \leq |\ip{\atom_k}{\atom_j }| + \jldistorterm& \:\wedge \: \|\jlmatrix \atom_k\|_2^2 \geq (1-\delta) \notag \\
\wedge |\ip{ \jlmatrix \atom_k}{ \jlmatrix \unitatom_j }| \leq |\ip{\atom_k}{\unitatom_j }| + \jldistorterm &\:\wedge \: \|\jlmatrix \unitatom_k\|_2^2 \leq (1+\delta) \big\}.
 \label{def:jlgood}
\end{align}
We then have 
\begin{align}
 \P\left( \# \left\{ n : \trainsignal_n \!\in\!  \thresholdbad \right\} \geq t \right)
\leq \P\left(\# \left\{ n : \trainsignal_n  \!\in\! \thresholdbad \right\} \geq t \big| \jlmatrix  \!\in\!  \jlgood \right) + \P(\jlmatrix \!\notin\!  \jlgood).
\label{eq:condition_jlgood}
\end{align}
%%%%%%
We first relate the probability of drawing an embedding with too high distortion to the embedding dimension $m$.
We define the set of $p=4 \natoms^2$ vectors $\mathcal{X} = \mathcal{X}_{\atom^{+}} \cup \mathcal{X}_{\atom^{-}} \cup \mathcal{X}_{\unitatom^{+}} \cup \mathcal{X}_{\unitatom^{-}} \cup \mathcal{X}_{\unitatom} $, where 
\begin{align*}
 \mathcal{X}_{\atom^{+}} = \left\{\atom_k + \atom_j : k, j \in \K  \right\}, &\quad \mathcal{X}_{\atom^{-}} = \left\{\atom_k - \atom_j : k, j \in \K, k\neq j \right\}, \\
 \mathcal{X}_{\unitatom^{+}} = \left\{ \atom_k + \unitatom_j : k, j \in \K  \right\},\: &\quad  \mathcal{X}_{\unitatom^{-}} = \left\{\atom_k -\unitatom_j : k,j \in \K \right\}\\
 \mbox{and}\:& \quad \mathcal{X}_{\unitatom} = \{\unitatom_k : k \in \K\}.
\end{align*}
From \Cref{def:fast_jl_embedding} we know that for a DCT or Fourier based \ac{jl} embedding with embedding dimension $\jlembeddingdim$ satisfying
\begin{equation}
\jlembeddingdim \geq O\left( \jldistorterm^{-2} \cdot \log^2(\jldistorterm^{-1}) \cdot \log(p/\jlprobterm) \cdot \log^2 ( \jldistorterm^{-1} \log(p/\jlprobterm) ) \cdot \log\ddim \right),	
\label{eq:embedding_K2points}
\end{equation}
which is ensured by \labelcref{eq:embedding_dimension_conv_proof},
we have with probability at least $(1-\jlprobterm)$, that for all $\bm{x}\in \mathcal X$
\begin{align}
\| \jlmatrix \bm{x}\|_2^2 \lessgtr (1 \pm \jldistorterm)  \|\bm{x}\|_2^2.
\end{align}
This means that the squared norms of $\unitatom_k$ and $\atom_k$ are preserved up to $\jldistorterm$, while setting $\bm{u}_j = \atom_j$ and $\bm{u}_j = \unitatom_j$ in the polarization identities further yield
\begin{align*}
 |\ip{ \jlmatrix \atom_k}{ \jlmatrix \bm{u}_j  }| &= \frac{1}{4} \Big| \|\jlmatrix(\atom_k + \bm{u}_j) \|_2^2 -  \|\jlmatrix(\atom_k - \bm{u}_j  )\|_2^2\Big|\notag \\
 &\leq \frac{1}{4} \big[ ( 1 + \jldistorterm) (2 + 2|\ip{\atom_k }{\bm{u}_j }|) -  ( 1 - \jldistorterm) (2 - 2|\ip{\atom_k }{\bm{u}_j }|)\big]\\
 &\leq |\ip{\atom_k }{\bm{u}_j }| + \delta.
\end{align*}
Looking back at the definition of $\jlgood$ in \labelcref{def:jlgood}, we then see that $\P (\jlmatrix \notin \jlgood) \leq \jlprobterm$ as long as the embedding dimension $m$ satisfies \labelcref{eq:embedding_dimension_conv_proof}.\\
%
%%%%%%%%%
Next we assume that we have drawn $\jlmatrix \in \jlgood$ satisfying \labelcref{eq:embedding_dimension_conv_proof} and estimate the number of signals for which the residuals do not coincide for all $k$ - now only depending on the draw of signals.
%Our first observation is that for a given index $k$ the number of signals for which the residuals do not coincide, is bounded by the number of signals for which compressed thresholding does not recover the oracle support, $\compthreshpdicosupp \neq \oraclesupp$, or that the empirical sign patterns using $\pdico$ are different from the oracle sign patterns, $\signop( \langle \jlmatrix \patom_k, \jlmatrix \trainsignal \rangle ) \neq \signsequence(k)$ for $k \in \oraclesupp$. 
%\begin{align}
%&\# \left\{ n : R^{ct}(\pdico,\trainsignal_n,k) \neq R^{o}(\pdico,\trainsignal_n,k) \right\}\\
%& \leq \# \left\{ n : \compthreshpdicosuppn \neq \oraclesuppn \: \mbox{or} \: \exists k \in \oraclesuppn \:\mbox{with} \: \signop( \langle \jlmatrix \patom_k, \jlmatrix \trainsignal_n \rangle ) \neq \signsequence_n(k)  \right\}
%\end{align}
%
%Defining the event $\thresholdbad: = \left\{ \trainsignal :  R^{ct}(\pdico,\trainsignal,k) \neq R^{o}(\pdico,\trainsignal,k) \right\}$,
%we can rewrite 
First we rewrite 
\begin{align}
\# \left\{ n : \trainsignal_n \in \thresholdbad \right\} = \sum_n \indicator_{\thresholdbad}(\trainsignal_n).
\end{align}
Since the signals $\trainsignal_n$ are independent, so are the indicator functions $\indicator_{\thresholdbad}(\trainsignal_n)$ and applying Bernstein's inequality to their centered versions yields 
\begin{align}
\P \left(\sum_n  \indicator_{\thresholdbad}(\trainsignal_n) \geq N\P(\thresholdbad)+ Nv \right)  \leq \exp \left( \frac{ -v^2 \nsig }{ 2 \P (\thresholdbad ) + v  } \right), \label{eq:bernstein_indicator}
\end{align}
meaning we still have to bound the probability $\P (\thresholdbad )$. 
To ensure that compressed thresholding recovers the oracle support, i.e., $\compthreshpdicosupp = \oraclesupp$ we need to have
\begin{equation}
\min_{k \in \oraclesupp}{ \left| \langle \jlmatrix \patom_k, \jlmatrix \trainsignal \rangle \right| } > \max_{k \notin \oraclesupp}{ \left| \langle \jlmatrix \patom_k, \jlmatrix \trainsignal \rangle \right| }.
\label{eq:sucessfull_thresholding_condition_psi}
\end{equation}
Expanding the inner products using the decomposition $\patom_k = \alpha_k \atom_k + \omega_k \unitatom_k$
we get
\begin{spreadlines}{-.3\baselineskip}
\begin{align}
\ip{ \jlmatrix \patom_k}{ \jlmatrix \trainsignal } \!=\! \frac{1}{\sqrt{1 \!+\! \left\| \noise \right\|_2^2}} & \Biggl[  \dicodecompfstvectorcoord_k \left\| \jlmatrix \atom_k \right\|_2^2 \signsequence(k) \coeffsequence(\permmapping(k)) \Biggl. +  \dicodecompfstvectorcoord_k \sum_{j \neq k} \signsequence(j) \coeffsequence(\permmapping(j)) \langle \jlmatrix \atom_k, \jlmatrix \atom_j \rangle \bigg. \nonumber \\ 
& \hspace{6em} \Biggl. + \dicodecompsndvectorcoord_k \sum_j{ \signsequence(j) \coeffsequence(\permmapping(j)) \langle \jlmatrix \unitatom_k, \jlmatrix \atom_j \rangle } \Biggl. + \dicodecompfstvectorcoord_k \langle \jlmatrix \atom_k, \jlmatrix \noise \rangle  + \dicodecompsndvectorcoord_k \langle \jlmatrix \unitatom_k, \jlmatrix \noise \rangle \Biggr],
\label{eq:inner_products_expanded}
\end{align}
\end{spreadlines}
from which we obtain the following lower (upper) bounds for $k \in \oraclesupp$ ($k \notin \oraclesupp$)
\begin{spreadlines}{-.3\baselineskip}
\begin{align}
\left| \langle \jlmatrix \patom_k, \jlmatrix \trainsignal \rangle \right| \lessgtr \frac{1}{\sqrt{1 \! + \! \left\| \noise \right\|_2^2}} &\Bigg[ \pm \big| \sum_{j \neq k} \signsequence(j) \coeffsequence(\permmapping(j)) \langle \jlmatrix \atom_k, \jlmatrix \atom_j \rangle \big|  
 \Biggl. \pm \dicodecompsndvectorcoord_k \big|\sum_j{ \signsequence(j) \coeffsequence(\permmapping(j)) \langle \jlmatrix \unitatom_k, \jlmatrix \atom_j \rangle } \big| \bigg. \nonumber \\
& \hspace{6em} \Biggl. \pm \big| \ip{\jlmatrix \atom_k}{ \jlmatrix \noise}  \big| \pm  \dicodecompsndvectorcoord_k\big| \ip{ \jlmatrix \unitatom_k}{\jlmatrix \noise} \big| \Biggl. + \alpha_k \left\| \jlmatrix \atom_k \right\|_2^2  \begin{cases} \coeffsequence(\sparsity\!+\!1), & k \!\notin\!\oraclesupp\\
 \coeffsequence(\sparsity),  & k \!\in\! \oraclesupp  \end{cases} \Biggr].
 \notag%\label{eq:thresholding_inner_product_patom_bound}
\end{align}	
\end{spreadlines}
Substituting the above into \labelcref{eq:sucessfull_thresholding_condition_psi}, and using the norm-preserving property of $\jlmatrix$, meaning $1-\delta \leq  \| \jlmatrix \atom_k \|^2_2 \leq 1+\delta$ for all $k$ we arrive at the following sufficient condition for having $\compthreshpdicosupp = \oraclesupp$, 
\begin{spreadlines}{-.3\baselineskip}
\begin{align}
 &\big| \sum_{j \neq k}{ \signsequence(j) \coeffsequence(\permmapping(j)) \langle \jlmatrix \atom_k, \jlmatrix \atom_j \rangle } \big| + \big| \langle \jlmatrix \atom_k, \jlmatrix \noise \rangle \big| + \dicodecompsndvectorcoord_k \big| \sum_j{\signsequence(j) \coeffsequence(\permmapping(j)) \langle \jlmatrix \unitatom_k, \jlmatrix \atom_j \rangle } \big| + \dicodecompsndvectorcoord_k \big| \langle \jlmatrix \unitatom_k, \jlmatrix \noise \rangle \big| \notag\\
& \hspace{22em} \leq \frac{1}{2} \big[\coeffsequence(\sparsity) (1-\delta)\left( 1-\tfrac{\eps^2}{2} \right) - \coeffsequence(\sparsity+1)(1+\delta)\big] \quad \forall k .
%\label{eq:condition_sucsessfull_thresholding_psi_noisy}	
\end{align}
\end{spreadlines}
Note that the condition above already implies that the embedded inner products have the correct sign. However, we are using the unembedded inner products to determine the sign in the algorithm, so we still have to analyse the second event $\{ \trainsignal : \signop ( \pdico_{\oraclesupp}^{*} \trainsignal) \neq \signsequence( \oraclesupp) \}$. From \labelcref{eq:inner_products_expanded} with $\jlmatrix =  \I $ we can see that it suffices to ensure that for all $k \in \oraclesupp$.
\begin{align}
\coeffsequence(\sparsity) &> 
\big| \sum_{j \neq k} \signsequence(j) \coeffsequence(\permmapping(j))\ip{\atom_k}{ \atom_j}\big|  + |\ip{\atom_k}{\noise}| + \frac{ \dicodecompsndvectorcoord_k}{\dicodecompfstvectorcoord_k} \big| \sum_j \signsequence(j) \coeffsequence(\permmapping(j))\ip{\unitatom_k}{\atom_j}\big|  + \frac{ \dicodecompsndvectorcoord_k}{\dicodecompfstvectorcoord_k} |\ip{ \unitatom_k}{ \noise }|,
%\label{eq:inner_products_expanded}
\end{align}
Thus, the event of thresholding failing is contained in the events
$
\thresholdbad \subseteq \threshsetatom \cup \threshsetunitatom \cup \signsetatom \cup \signsetunitatom
$, where  
\begin{align}
\allowdisplaybreaks
& \threshsetatom  := \Big\{ \trainsignal \:\big | \:\exists k  : \big| \sum_{j \neq k}\signsequence(j) \coeffsequence(\permmapping(j)) \ip{ \jlmatrix \atom_k}{ \jlmatrix \atom_j } \big| \geq u_{1} \vee \big| \ip{ \jlmatrix \atom_k }{\jlmatrix \noise } \big| \geq u_{2}   \Big\}  \notag\\
& \threshsetunitatom := \Big\{ \trainsignal \:\big | \:\exists k  : \dicodecompsndvectorcoord_k \big| \sum_j\signsequence(j) \coeffsequence(\permmapping(j)) \ip{\jlmatrix \unitatom_k }{\jlmatrix \atom_j} \big|  \geq u_{3} \vee  \dicodecompsndvectorcoord_k \big| \ip{ \jlmatrix \unitatom_k }{ \jlmatrix  \noise} \big| \geq u_4 \Big\} \notag\\&\mbox{with} \quad 2\left( u_{1} + u_{2} + u_{3} + u_{4} \right) \leq  \coeffsequence(\sparsity) (1\!-\!\delta)\left( 1-\tfrac{\eps^2}{2} \right) - \coeffsequence(\sparsity\!+\!1)(1+\delta),
\end{align}
and in analogy for $v_{1} + v_{2} + v_{3} + v_{4}  \leq  \coeffsequence(\sparsity)$
\begin{align}
\allowdisplaybreaks
&\signsetatom  := \Big\{ \trainsignal \:\big | \:\exists k \in \oraclesupp  : \big| \sum_{j \neq k}\signsequence(j) \coeffsequence(\permmapping(j)) \ip{\atom_k}{\atom_j } \big| \geq v_{1} \vee \big| \ip{\atom_k }{\noise } \big| \geq v_{2}   \Big\}  \notag\\
&\signsetunitatom := \Big\{ \trainsignal \:\big | \:\exists k \in \oraclesupp : \frac{ \dicodecompsndvectorcoord_k}{\dicodecompfstvectorcoord_k}\big| \sum_j\signsequence(j) \coeffsequence(\permmapping(j)) \ip{\unitatom_k }{\atom_j} \big|  \geq v_{3} \vee \frac{ \dicodecompsndvectorcoord_k}{\dicodecompfstvectorcoord_k} \big| \ip{ \unitatom_k }{  \noise} \big| \geq v_4 \Big\} \notag
%&\hspace{3cm} \mbox{with} \quad v_{1} + v_{2} + v_{3} + v_{4}  \leq  \coeffsequence(\sparsity). 
\end{align}
We first bound $\P(\threshsetatom)$ using a union bound over $k$, as well as Hoeffding's inequality and the subgaussian property of the noise%
%For a draw of $\jlmatrix$ we consider the event where it preserves the inner products of the embedded atoms from the generating dictionary $\dico$ and the perturbation dictionary $\pertdico$ up to a distortion $\jldistorterm$, namely, let $\jlgood$ denote the event given by
%\begin{equation}
%\jlgood \!:=\! \big\{ \jlmatrix \!:\! \forall j,k \  \ip{ \jlmatrix \atom_k}{ \jlmatrix \atom_j } \!\lessgtr\! \ip{\atom_k}{\atom_j } \!\pm\! \jldistorterm \ \text{and} \  \ip{ \jlmatrix \atom_k}{ \jlmatrix \unitatom_j } \!\lessgtr\! \ip{\atom_k}{\unitatom_j } \!\pm\! \jldistorterm \big\}
%\label{eq:jl_event} 
%\end{equation}
%and note that from the construction of $\jlmatrix$ based on a scaled orthogonal transform and the inner-product preservation property in \labelcref{eq:jl_event}, we have for $\jlmatrix\in \jlgood$,
%\begin{align}
%\left\| \jlmatrix^{*} \jlmatrix \fsttestvector_k \right\|_2^2 = \frac{\ddim}{\jlembeddingdim} \left\| \jlmatrix \fsttestvector_k \right\|_2^2 \leq \frac{\ddim}{\jlembeddingdim} (1 + \jldistorterm), \qquad \text{for} \quad \fsttestvector_k = \atom_k, \unitatom_k.
%\label{eq:bound_jl_matrix_noise_terms}	
%\end{align}
%We further assume that the embedding dimension is bounded by the noise level and the distortion as
%\begin{equation}
%\jlembeddingdim  \geq 8 \jldistorterm^{-2} \ddim \nsigma^2 .
%\label{eq:jl_embedding_noise_term}	
%\end{equation}
%Using a union bound over all $k$, as well as Hoeffding's inequality and the subgaussian property of the noise, we get for $\P (\threshsetatom)$,
\begin{align}
\allowdisplaybreaks
\P (\threshsetatom) &\leq \sum_{k} \P \Big(\big| \sum_{j \neq k}{ \signsequence(j) \coeffsequence(\permmapping(j)) \ip{ \jlmatrix \atom_k}{ \jlmatrix \atom_j }  } \big| \geq u_1\Big) + \sum_k \P \Big( \left| \ip{ \jlmatrix\atom_k }{\jlmatrix \noise} \right| \geq u_2  \Big) \notag \\
&  \leq \sum_k 2 \exp  \left( \frac{-u_1^2} {2 \sum_{j \neq k}\coeffsequence(\permmapping(j))^2 | \ip{\jlmatrix  \atom_k}{\jlmatrix \atom_j }|^2} \right) +\sum_k 2 \exp  \left( \frac{ - u_2^2 }{2 \nsigma^2 \left\| \jlmatrix^* \jlmatrix \atom_k \right\|_2^2 } \right) \label{eq:union_bound_threshsetatom}
\end{align}
Since we fixed $\jlmatrix \in \jlgood$ we have $|\ip{\jlmatrix  \atom_k}{\jlmatrix \atom_j }| \leq |\ip{\atom_k}{\atom_j }| +\delta \leq \mu+\delta$. Further, 
$\jlmatrix$ was constructed based on an orthogonal transform, so we have
$\| \jlmatrix^* \jlmatrix \atom_k \|_2^2 \leq \frac{d}{m}\| \jlmatrix \atom_k\|_2^2 \leq  \frac{d}{m} (1+\delta)$.
Using these estimates in the bound above, we get
\begin{align}\label{eq:bound_threshsetatom}
\P (\threshsetatom) \leq 2K \exp \left(\frac{-u_1^2} { 2 (\mu +\delta)^2}\right)+ 2K \exp  \left( \frac{ - u_2^2 }{2 \nsigma^2 d \cdot (1+\delta)/m}\right)
\end{align}
and repeating the same steps for $\signsetatom$ we arrive at 
\begin{align}\label{eq:bound_signsetatom}
\P (\signsetatom) \leq 2\sparsity \exp \left(\frac{-v_1^2} { 2 \mu^2}\right)+ 2\sparsity \exp  \left( \frac{ - v_2^2 }{2 \nsigma^2}\right).
\end{align}
For the second event $\threshsetunitatom$ we also use a union bound over all $k$, as well as Hoeffding's inequality and the subgaussian property of the noise to arrive at 
\begin{align}
\allowdisplaybreaks
\P ( \threshsetunitatom) &\leq \sum_k \P \Big( \dicodecompsndvectorcoord_k \big| \sum_j \signsequence(j) \coeffsequence(\permmapping(j)) \ip{\jlmatrix \atom_j}{\jlmatrix \unitatom_k}\big| \geq  u_3\Big) + \sum_k \P \Big( \dicodecompsndvectorcoord_k| \ip{ \jlmatrix \noise_n}{\jlmatrix \unitatom_k }| \geq u_4 \Big)\notag\\
&\leq \sum_k 2 \exp\left(\frac{-u_3^2 }{2\dicodecompsndvectorcoord^2_k \sum_j \coeffsequence(\permmapping(j))^2 |\ip{\jlmatrix \atom_j}{\jlmatrix \unitatom_k}|^2} \right) +\sum_k 2 \exp  \left( \frac{ - u_4^2 }{2 \dicodecompsndvectorcoord^2_k \nsigma^2 \| \jlmatrix^* \jlmatrix \unitatom_k\|_2^2 } \right).
\end{align}
The term $\| \jlmatrix^* \jlmatrix \unitatom_k\|_2^2$ is again bounded by $ \frac{d}{m} (1+\delta)$, while for the sum in the denominator involving the embedded inner products we get,
\begin{align*}
\sum_j \coeffsequence(\permmapping(j))^2 |\ip{\jlmatrix \atom_j}{\jlmatrix \unitatom_k}|^2& \leq \sum_j \coeffsequence(\permmapping(j))^2 \big(|\ip{\atom_j}{\unitatom_k}|^2+2|\ip{\atom_j}{\unitatom_k}| \jldistorterm+ \jldistorterm^2\big)\\
& \leq \coeffsequence(1)^2 \sum_j |\ip{\atom_j}{\unitatom_k}|^2 + 2 \jldistorterm\coeffsequence(1)\sum_j \coeffsequence(\permmapping(j))|\ip{\atom_j}{\unitatom_k}| + \jldistorterm^2\\
& \leq \coeffsequence(1)^2 \|\dico^*\unitatom_k\|^2_2 + 2\delta \coeffsequence(1)\|\dico^*\unitatom_k\|_2 + \jldistorterm^2,
\end{align*}
where for the last step we have used the Cauchy-Schwarz inequality. Since $\|\dico^*\unitatom_k\|^2_2 \leq \|\dico\|^2_{2,2}\|\unitatom_k\|^2_2 \leq B$ and $\dicodecompsndvectorcoord_k\leq \eps$ we get
\begin{align}\label{eq:bound_threshsetunitatom}
\P ( \threshsetunitatom)\leq  2K \exp\left(\frac{-u_3^2 }{ 2\eps^2 (\coeffsequence(1) \sqrt{B} + \delta)^2} \right)+  2K \exp  \left( \frac{ - u_4^2 }{2 \eps^2 \nsigma^2 d \cdot  (1+\delta)/m }\right).
\end{align}
As before we repeat the steps above for $\signsetunitatom$ and, using the bound $\alpha_k/\omega_k \geq  (1-\frac{\eps^2}{2})/\eps$, arrive at
\begin{align}\label{eq:bound_signsetunitatom}
\P ( \signsetunitatom)&\leq  2S \exp\left( \frac{-v_3^2 (1-\frac{\eps^2}{2})^2 }{ 2\eps^2 \coeffsequence(1)^2 B} \right)+  2S\exp  \left( \frac{ - v_4^2 (1-\frac{\eps^2}{2})^2 }{2 \eps^2 \nsigma^2 }\right)
%& \leq  2S \exp\left( \frac{-v_3^2 (1-\frac{1}{32})^2 }{ 2\eps^2 \coeffsequence(1)^2 B} \right)+  2S\exp  \left( \frac{ - 8 v_4^2 (1-\frac{1}{32})^2 }{2 \eps^2 \nsigma^2 }\right)\\
\end{align}
Combining \labelcref{eq:bound_threshsetatom}/\labelcref{eq:bound_signsetatom} with \labelcref{eq:bound_threshsetunitatom}/\labelcref{eq:bound_signsetunitatom} and choosing
\begin{align*}
&u_1 = u_2 = \big(\coeffsequence(\sparsity) (1-\jldistorterm)- \coeffsequence(\sparsity+1)(1+ \jldistorterm)\big)/6 , \nonumber \\
&u_3 = u_1 - 2\eps^2 \coeffsequence(\sparsity)(1-\jldistorterm)/3, \quad u_4 = \eps^2 u_1, \quad \mbox{and} 
\quad v_i = \coeffsequence(\sparsity)/4
\end{align*}
we arrive at
\begin{align*}
\P(\thresholdbad)& \leq 2K \exp \left(\frac{-\big(\coeffsequence(\sparsity) (1-\jldistorterm)- \coeffsequence(\sparsity+1)(1+ \jldistorterm)\big)^2} { 72 (\mu +\delta)^2}\right) + 4K \exp  \left( \frac{ - \big(\coeffsequence(\sparsity) (1-\jldistorterm)- \coeffsequence(\sparsity+1)(1+ \jldistorterm)\big)^2 }{72 \nsigma^2 d \cdot (1+\delta)/m }\right)\\
&\hspace{1em}+ 2e^{\frac{1}{324}} K \exp\left(\frac{- \big(\coeffsequence(\sparsity) (1-\jldistorterm)- \coeffsequence(\sparsity+1)(1+ \jldistorterm)\big)^2 }{ 72\eps^2 (\coeffsequence(1) \sqrt{B} + \delta)^2} \right) + 2S \exp \left(\frac{-\coeffsequence(\sparsity)^2} {32 \mu^2}\right) + 2S \exp \left(\frac{-\coeffsequence(\sparsity)^2} {32 \nsigma^2}\right)  \\ 
& \hspace{2em} + 2S \exp\left( \frac{-\coeffsequence(\sparsity)^2}{ 35\eps^2 \coeffsequence(1)^2 B} \right)+  2S\exp  \left( \frac{ - \coeffsequence(\sparsity)^2 }{35 \eps^2 \nsigma^2 }\right),
\end{align*}
where we have used that $1-\frac{\eps^2}{2} \geq 31/32$ for $\eps<1/4$. We first observe that each exponential with prefactor $\sparsity$ is dominated by an exponential with prefactor $K$ and that we assumed $S\leq K/(98 B) \leq K/ 98$. Next note that we chose the embedding dimension according to \labelcref{eq:embedding_K2points}, thus $m\geq 2\delta^{-2}$, meaning that $(1+\delta)/m \leq \delta^{2} \leq \delta^{2} + \frac{1}{d}$, where the last bound has the advantage of remaining true for $m=\ddim$, when $\jldistorterm = 0$. Further, for $\jldistorterm \geq \coeffsequence(1)/\sqrt{72 \log K}$ already the first term on the right hand side in the inequality above is larger than one, so the bound is trivially true without the third term. On the other hand for $\jldistorterm < \coeffsequence(1)/\sqrt{72 \log K}$ we can bound the denominator in the exponential of the third term by $\coeffsequence(1)^2 (\sqrt{B} + \sqrt{B/72 \log K})^2$, leading to
\begin{align*}
\P(\thresholdbad)& \leq \frac{99}{49} K \exp \left(\frac{-\big(\coeffsequence(\sparsity) - \coeffsequence(\sparsity+1) - \jldistorterm \coeffsequence(\sparsity)- \jldistorterm\coeffsequence(\sparsity+1)\big)^2} { 72 (\mu +\jldistorterm)^2}\right)\\
&\hspace{6em} + \frac{198}{49} K\exp  \left( \frac{ - \big[\coeffsequence(\sparsity) - \coeffsequence(\sparsity+1) - \jldistorterm \coeffsequence(\sparsity)- \jldistorterm\coeffsequence(\sparsity+1)\big]^2 }{72 ( \nsigma^2 +\jldistorterm^2 \ddim \nsigma^2 ) }\right)\\
&\hspace{12em} \qquad+3K \exp\left(\frac{- \big[\coeffsequence(\sparsity) - \coeffsequence(\sparsity+1) - \jldistorterm \coeffsequence(\sparsity)- \jldistorterm\coeffsequence(\sparsity+1)\big]^2 }{ \eps^2 \coeffsequence(1)^2 B \big(\sqrt{72} + 1/\sqrt{\log K}\big)^2} \right).\\
\end{align*}
Using the definitions of $\beta_S$ and $\Delta_S$, the fact that $\coeffsequence(\sparsity+1)<\coeffsequence(\sparsity)\leq \min\{\coeffsequence(1),1/\sqrt{S}\}$ and the assumption $K \geq 98B S \geq 98$ we get
\begin{align*}
\P(\thresholdbad)
&\leq 6.5 K \exp \left(\frac{-\big(\beta_S - \frac{2\jldistorterm}{\sqrt{S}}\big)^2} { 72 \max\{ (\mu +\jldistorterm)^2, \nsigma^2 +\jldistorterm^2\ddim \nsigma^2 \}}\right) + 3K \exp\left(\frac{- \big(\Delta_S - 2\jldistorterm \big)^2 }{ 81 B \eps^2} \right).
\end{align*}
From Lemma A.3 in \cite{sc14} we know that for $a,b,\eps>0$ we have $a \exp(-b^2/\eps^2) \leq \eps$ whenever $\eps \leq b /(\frac{1}{4} + \sqrt{\log(a e^{1/16}/b)})$. Thus \labelcref{eq:condition_eps_convergence_radius} implies that 
\begin{align}
3K \exp\left(\frac{- \big(\Delta_S - 2\jldistorterm \big)^2 }{ 81B \eps^2 } \right) \leq \frac{\snrinputsig \statinputone}{2 \natoms \sqrt{\frameupbound + 1}}  \paramprobestimate \eps
\end{align}
and looking at the definition of $\epsopt$ we get 
\begin{align}
\P(\thresholdbad) \leq  \frac{\snrinputsig \statinputone}{2 \natoms \sqrt{\frameupbound + 1}} ( \epsopt + \paramprobestimate \eps).
\end{align}
Substituting this bound into $\labelcref{eq:bernstein_indicator}$ we get for a fixed $\jlmatrix \in \jlgood$
\begin{align*}
&\P\left(\# \left\{ n : \trainsignal_n \in \thresholdbad \right\} > \frac{\nsig \snrinputsig \statinputone}{2 \natoms \sqrt{\frameupbound + 1}} ( \epsopt + \paramprobestimate \eps + v )\right) \leq \exp \left(  \frac{\nsig \snrinputsig \statinputone v^2}{ 4 K \sqrt{B+1} (\epsopt + \paramprobestimate \eps + v/2)}\right),
\end{align*}
which combined with \labelcref{eq:condition_jlgood} and \labelcref{eq:residual_diff_bound} yields the statement of the Lemma.

\end{proof}
As already mentioned, the proof can be amended to get slightly weaker results also for fast \ac{jl}-embeddings based on the circulant matrices. We will sketch the necessary steps in the following remark.
\begin{remark}[Circulant \ac{jl}-Constructions]
Note that in the case of \ac{jl}-embeddings based on the circulant matrices the operator norm bound $\|\jlmatrix\|^2_{2,2}\leq d/m$, we have used to control the noise terms is no longer valid. However, we can show that with
high probability the operator norm of a circulant matrix, built from a normalized Rademacher vector $\sndtestvector$, which is equivalent to the supremum norm of the \ac{dft} of $\sndtestvector$, is of order $O(\sqrt{\log d})$. This means for $\fsttestvector_k = \unitatom_k, \atom_k$ we have 
\begin{equation}
\left\| \jlmatrix^{*} \jlmatrix \fsttestvector_k \right\|_2^2 \leq \|\jlmatrix\|_{2,2}^2 \|\jlmatrix \fsttestvector_k  \|^2_2 \leq  \frac{(1+\delta) \ddim }{\jlembeddingdim } \cdot O \left(\log \ddim \right),
\label{eq:bound_jl_matrix_noise_terms_rademacher}
\end{equation}
which reduces the admissible noise level by a factor $O \left(\log \ddim \right)$. \\
Further, for the circulant construction from \Cref{def:fast_jl_embedding} the bound in \labelcref{eq:embedding_K2points} becomes 
\begin{equation}
\jlembeddingdim \geq O\left( \jldistorterm^{-2} \cdot \log(\jlnumpoints/\jlprobterm) \cdot \log^2 \bigl( \log(\jlnumpoints/\jlprobterm) \bigr) \cdot \log^2 {\ddim} \right),
\end{equation}
and accordingly the bound in \labelcref{eq:embedding_dimension_conv_proof} is replaced by
\begin{equation}
\jlembeddingdim \geq O\left( \jldistorterm^{-2} \cdot \log(\natoms/\jlprobterm) \cdot \log^2 \bigl( \log(\natoms/\jlprobterm) \bigr) \cdot \log^2 {\ddim} \right).\notag
\end{equation}
\end{remark}

%%%%%%%%%%%%%%%%%%%%%%%%%%%%%%%%%%%%%%%%%%%%%%%%%%%%
%%%%%%%%%%%%%%%%%%%%%%%%%%%%%%%%%%%%%%%%%%%%%%%%%%%%
\bibliography{karinbibtex}
\bibliographystyle{plain}
%%%%%%%%%%%%%%%%%%%%%%%%%%%%%%%%%%%%%%%%%%%%%%%%%%%%
%%%%%%%%%%%%%%%%%%%%%%%%%%%%%%%%%%%%%%%%%%%%%%%%%%%%

\end{document}